\def\eqref#1{equation~\ref{#1}}
\def\1{\bm{1}}
\def\vu{{\bm{u}}}
\DeclareMathAlphabet{\mathsfit}{\encodingdefault}{\sfdefault}{m}{sl}
\SetMathAlphabet{\mathsfit}{bold}{\encodingdefault}{\sfdefault}{bx}{n}
\newcommand{\R}{\mathbb{R}}
\pgfplotsset{compat=1.16}
\newtheorem{de}{Definition}[section]
\newtheorem{theo}{Theorem}[section]
\newtheorem{lem}[theo]{Lemma}
\newcommand{\argmax}{\operatornamewithlimits{argmax~}}
\newcommand{\argmin}{\operatornamewithlimits{argmin~}}
\newcommand{\balpha}{\bm{\alpha}}
\newcommand{\bbeta}{\bm{\beta}}
\newcommand{\btau}{\bm{\tau}}
\newcommand{\brho}{\bm{\rho}}
\newcommand{\prob}[1]{\mathbb{P}\left[ #1 \right]}
\newcommand{\expec}[1]{\mathbb{E}\left[ #1 \right]}
\newenvironment{disarray}%
 {\everymath{\displaystyle\everymath{}}\array}%
 {\endarray}
\definecolor{codegreen}{rgb}{0,0.6,0}
\definecolor{codegray}{rgb}{0.5,0.5,0.5}
\definecolor{codepurple}{rgb}{0.58,0,0.82}
\definecolor{backcolour}{rgb}{0.95,0.95,0.92}
\lstdefinestyle{mystyle}{
    backgroundcolor=\color{backcolour},   
    commentstyle=\color{codegreen},
    keywordstyle=\color{magenta},
    numberstyle=\tiny\color{codegray},
    stringstyle=\color{codepurple},
    basicstyle=\ttfamily\footnotesize,
    breakatwhitespace=false,         
    breaklines=true,                 
    captionpos=b,                    
    keepspaces=true,                 
    numbers=left,                    
    numbersep=5pt,                  
    showspaces=false,                
    showstringspaces=false,
    showtabs=false,                  
    tabsize=2
}
\definecolor{darkbluetemp}{rgb}{0,0.08,0.45}
\begin{document}
\pagestyle{headings}
\mainmatter
\def\ECCVSubNumber{7187}  

\title{BINAS: Bilinear Interpretable \\ Neural Architecture Search}

\titlerunning{BINAS: Bilinear Interpretable Neural Architecture Search}
%
\author{Niv Nayman\thanks{Equal contribution}\inst{1,2} \and
Yonathan Aflalo$^*$\inst{1} \and \\
Asaf Noy\inst{1} \and
Rong Jin\inst{1} \and
Lihi Zelnik-Manor\inst{2}}
\authorrunning{N. Nayman et al.}
%
\institute{Alibaba Group, Tel Aviv, Israel \\ 
\email{\{niv.nayman,jonathan.aflalo,asaf.noy,jinrong.jr\}@alibaba-inc.com}
\and
Technion - Israel Institute of Technology, Haifa, Israel \\
\email{lihi@technion.ac.il}}
\maketitle
\vspace{-5mm}
\begin{abstract}
Making neural networks practical often requires adhering to resource constraints such as latency, energy and memory.
To solve this we introduce a \textit{Bilinear Interpretable} approach for constrained \textit{Neural Architecture Search (BINAS)} that is based on an accurate and simple bilinear formulation of both an accuracy estimator and the expected resource requirement, jointly with a scalable search method with theoretical guarantees. 
One major advantage of BINAS is providing interpretability via insights about the contribution of different design choices. For example, we find that in the examined search space, adding depth and width is more effective at deeper stages of the network and at the beginning of each resolution stage.
BINAS differs from previous methods that typically use complicated accuracy predictors that make them hard to interpret, sensitive to many hyper-parameters, and thus with compromised final accuracy.
Our experiments
\footnote{The full code: \url{https://github.com/Alibaba-MIIL/BINAS}
} 
show that BINAS generates comparable to or better than state of the art architectures, while reducing the marginal search cost, as well as strictly satisfying the resource constraints.
\end{abstract}
\vspace{-8mm}
\section{Introduction}
The increasing utilization of Convolutional Neural Networks (CNN) in real systems and commercial products puts neural networks with both high accuracy and fast inference speed in high demand. Early days architectures, such as VGG~\cite{VGG} or ResNet~\cite{ResNet}, were designed for powerful GPUs as those were the common computing platform for deep CNNs, however, in recent years the need for deployment on standard CPUs and edge devices emerged. These computing platforms are limited in their abilities and as a result require lighter architectures that comply with strict requirements on real time latency and power consumption. This has spawned a line of research aimed at finding architectures with both high performance and constrained resource demands. 
  
The main approaches to solve this evolved from Neural Architecture Search (NAS)~\cite{zoph2016neural,liu2018darts,cai2018proxylessnas}, while a constraint on the target latency is added over various platforms, e.g., CPU, TPU, FPGA, MCU etc. 
Those constrained-NAS methods can be grouped into two categories: (i) Reward based methods such as Reinforcement-Learning (RL) or Evolutionary Algorithms (EA) \cite{OFA,tan2019mnasnet,effnet,mobilenetv3}, where the latency and accuracy of sampled architectures are predicted by evaluations on the target devices over some validation set to perform the search.
The predictors are typically made of complicated models and hence require many samples and sophisticated fitting techniques~\cite{white2021powerful}. Overall this oftentimes leads to inaccurate, expensive to acquire, and hard to optimize objective functions due to their complexity. (ii) Resource-aware gradient based methods formulate a differentiable loss function consisting of a trade-off between an accuracy term and either a proxy soft penalty term~\cite{TF-NAS,fbnet} or a hard constraint~\cite{nayman2021hardcore}. Therefore, the architecture can be directly optimized via bi-level optimization using stochastic gradient descent (SGD)~\cite{SGD} or stochastic Frank-Wolfe (SFW)~\cite{hazan2016variance}, respectively. However, the bi-level nature of the problem introduces many challenges~\cite{P-DARTS,DARTS+,noy2020asap,nayman2019xnas} and recently~\cite{wang2021rethinking} pointed out the inconsistencies associated with using gradient information as a proxy for the quality of the architectures, especially in the presence of skip connections in the search space. These inconsistencies call for making NAS more interpretable, by extending its scope from finding optimal architectures to interpretable features \cite{ru2021interpretable} and their corresponding impact on the network performance.

In this paper, we propose an interpretable search algorithm that is fast and scalable, yet produces architectures with high accuracy that satisfy hard latency constraints.
At the heart of our approach is an accuracy estimator which is interpretable, easy to optimize and does not have a strong reliance on gradient information. Our proposed predictor measures the performance contribution of individual design choices by sampling sub-networks from a one-shot model~(\cite{bender2018understanding,fairnas,SPOS,OFA,nayman2021hardcore}). 
Constructing the estimator this way allows making insights about the contribution of the design choices. It is important to note, that albeit its simplicity, our predictor's performance matches that of previously proposed predictors, that are typically expensive to compute and hard to optimize due to many hyper-parameters.
 
The predictor we propose has a bilinear form that allows formulating the latency constrained NAS problem as an \textit{Integer Quadratic Constrained Quadratic Programming} (IQCQP). Thanks to this, the optimization can be efficiently solved via a simple algorithm with some off-the-shelf components. The algorithm we suggest solves it within a few minutes on a common CPU.
 
Overall our optimization approach has two main performance related advantages. First, the outcome networks provide high accuracy and closely comply with the latency constraint. 
Second, the search is highly efficient, which makes our approach scalable to multiple target devices and latency demands. 

\section{Related Work}

\textbf{Neural Architecture Search} methods automate models' design per provided constraints. Early methods like NASNet~\cite{zoph2016neural} and AmoebaNet~\cite{real2019regularized} focused solely on accuracy, producing SotA classification models~\cite{huang2019gpipe} at the cost of GPU-years per search, with relatively large inference times. DARTS~\cite{liu2018darts} introduced a differential space for efficient search and reduced the training duration to days, followed by XNAS~\cite{nayman2019xnas} and ASAP~\cite{noy2020asap} that applied pruning-during-search techniques to further reduce it to hours.  

\textit{Predictor based} methods
recently have been proposed based on training a
model to predict the accuracy of an architecture just from an encoding of the
architecture. Popular choices for these models include Gaussian processes, neural networks, tree-based methods. See~\cite{lu2020nsganetv2} for such utilization and \cite{white2021powerful}
for comprehensive survey and comparisons. 

\textit{Interpretabe NAS} was firstly introduced by \cite{ru2021interpretable} through a rather elaborated Bayesian optimisation with Weisfeiler-Lehman kernel to identify beneficial topological features. 
We propose an intuitive and simpler approach for NAS interpretibiliy for the efficient search space examined. This leads to more understanding  and applicable design rules.

\textit{Hardware-aware} methods such as ProxylessNAS~\cite{cai2018proxylessnas}, Mnasnet~\cite{tan2019mnasnet}, SPNASNet~\cite{stamoulis2019single}, FBNet~\cite{fbnet}, and TFNAS~\cite{TF-NAS} 
generate architectures that comply to the constraints by applying simple heuristics such as soft penalties on the loss function.
OFA~\cite{OFA} and HardCoRe-NAS~\cite{nayman2021hardcore} proposed a scalable approach across multiple devices by training an one-shot model~\cite{brock2017smash,bender2018understanding} once. 
This pretrained super-network is highly predictive
for the accuracy ranking of extracted sub-networks, e.g. FairNAS~\cite{SPOS}, SPOS~\cite{SPOS}.
OFA applies evolutionary search~\cite{real2019regularized} over a complicated multilayer perceptron (MLP)~\cite{rumelhart1985learning} based accuracy predictor with many hyperparameters to be tuned. 
HardCore-NAS searches by backpropagation~\cite{kelley1960backprop} over a supernetwork under strict latency constraints for several GPU hours per network.
Hence it requires access to a powerful GPU to perform the search and both approaches lack interpretability.
This work relies on such one-shot model, for intuitively building an interpretable and simple bilinear accuracy estimator that matches in performance without any tuning and optimized under strict latency constraints by solving an IQCQP problem in several CPU minutes following by a short fine-tuning.

\section{Method} \label{sec:method}
\begin{figure*}[htb]
 \begin{minipage}{0.7\textwidth}
    \centering
         \includegraphics[width=\textwidth]{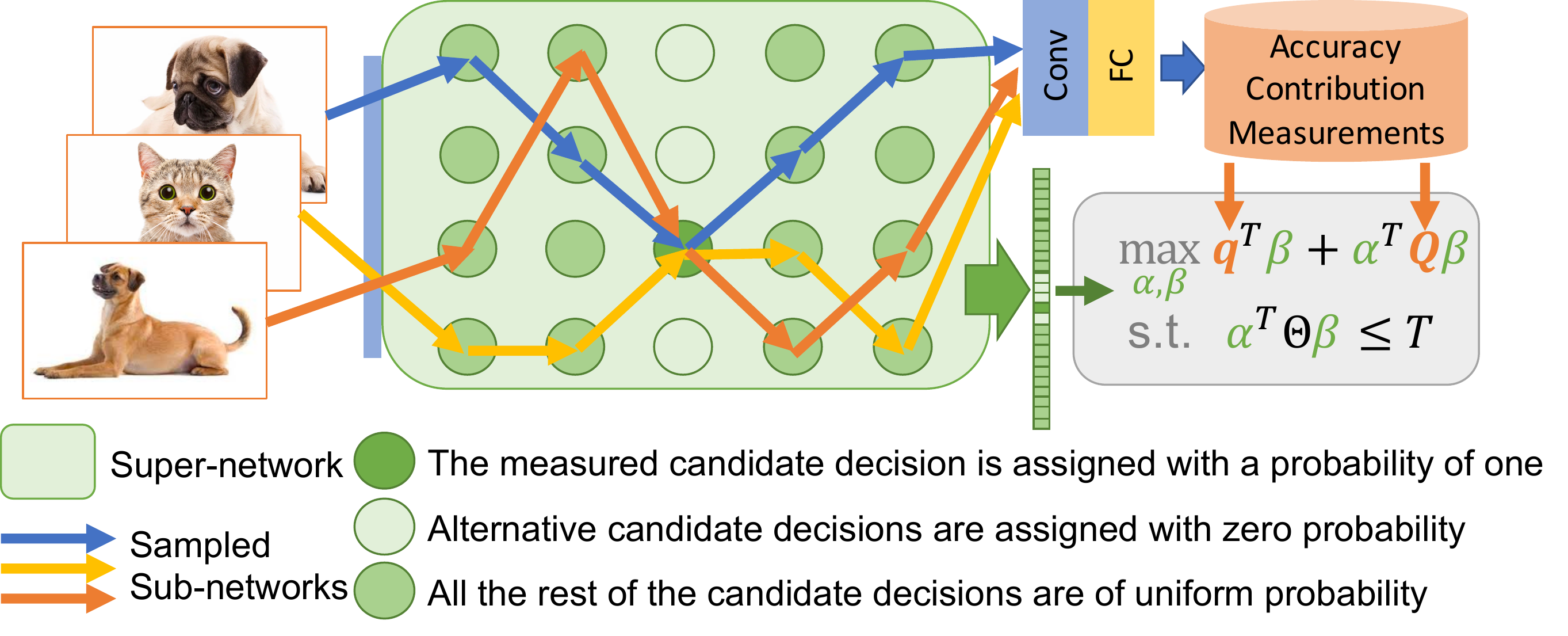}
  \end{minipage}
  \hfill
  \begin{minipage}{0.29\textwidth}
         \includegraphics[width=\textwidth]{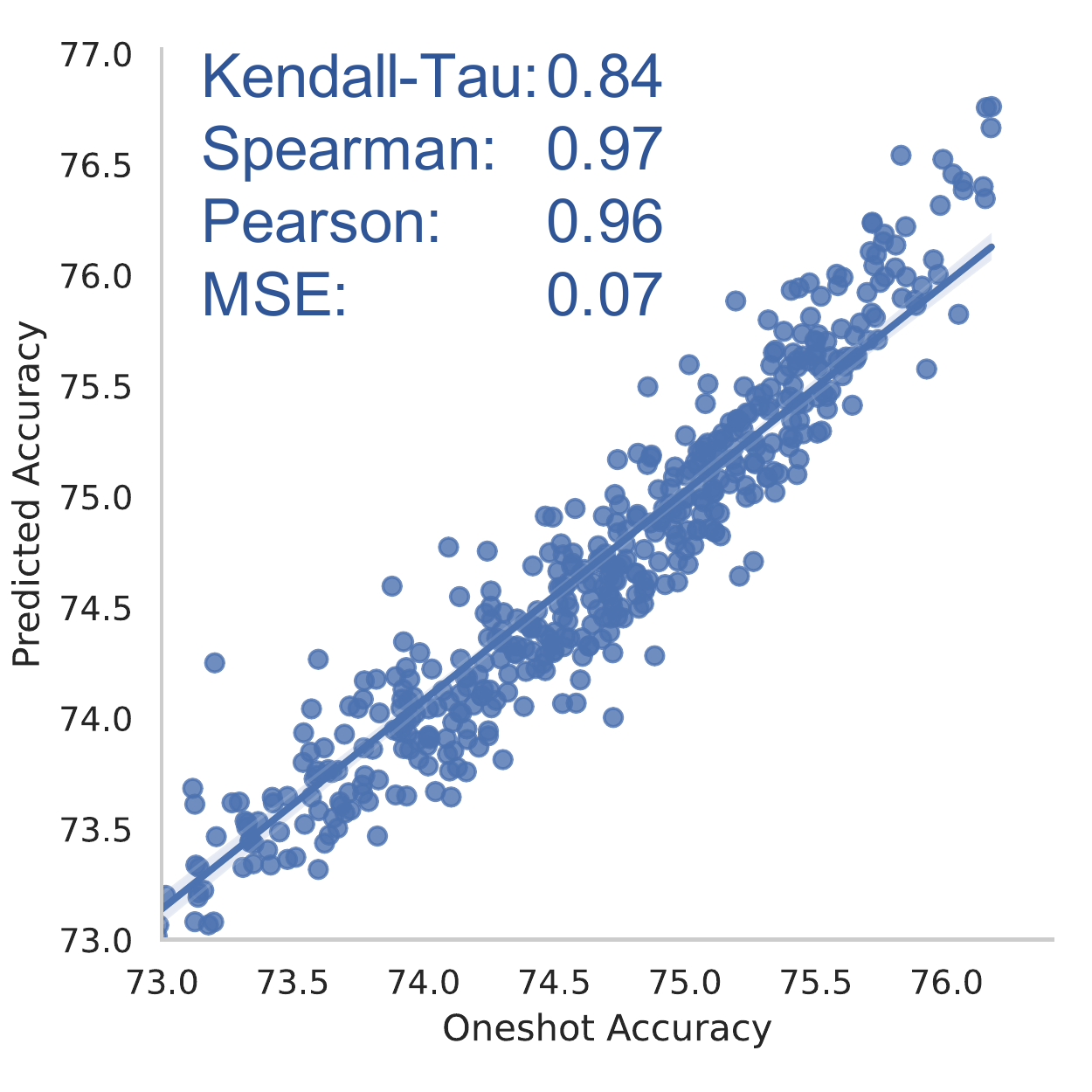}
  \end{minipage}
    \captionof{figure}{(Left) The BINAS scheme constructs a bilinaer accuracy estimator by measuring the accuracy contribution of individual design choices and then maximizing this objective under bilinear latency constraints. (Right) Accuracy predictions vs measured accuracy of 500 subnetworks sampled uniformly at random. High ranking correlations are achieved.}
    \label{fig:scheme}
    \vspace{-3mm}
\end{figure*}


In this section we propose our method for latency-constrained NAS. We search for an architecture with the highest validation accuracy under a predefined latency constraint, denoted by $T$. 
We start by following~\cite{bender2018understanding}, training a supernetwork that accommodates the search space $\mathcal{S}$, as described in section~\ref{sec:search_space}. The supernetwork training ensures that the accuracy of subnetworks extracted from it, together with their corresponding weights, are properly ranked~\cite{SPOS,fairnas,OFA,nayman2021hardcore} as if those were trained from scratch. Given such a supernetwork, we sample subnetworks from it for estimating the individual accuracy contribution of each design choice  (section~\ref{sec:acc_contrib}), and construct a bilinear accuracy estimator for the expected accuracy of every possible subnetwork in the search space (section~\ref{sec:quad_estimator}). Finally, the latency of each possible block configuration is measured on the target device and aggregated to form a bilinear latency constraint. 
Putting it all together (section~\ref{sec:formulate_IQCQP}) we formulate an IQCQP: 
\begin{align}
\label{eqn:NAS_QCQP}
&\max_{\zeta}  
ACC(\zeta)=q^T\zeta + \zeta^T Q\zeta
\\ &\text{s.t. } 
LAT(\zeta)= \zeta^T \Theta\zeta \leq T, \quad A_{\mathcal{S}}\cdot\zeta \leq b_{\mathcal{S}}, \quad \zeta\in \mathbbm{Z}^N
\notag
\end{align}
where $\zeta\!=\!(\balpha,\bbeta)\in\mathcal{S}\!\subset\!\mathbbm{Z}^N$, is the parametrization of the design choices in the search space that govern the architecture structure, $q\1\in\!\R^N$, $Q\!\in\!\R^{N\times N}$, $\Theta\!\in\!\R^{N\times N}$, $A_{\mathcal{S}}\!\in\!\R^{C \times N}$, $b_{\mathcal{S}}\!\in\!\R^C$ and $\zeta\in\mathcal{S}$ can be expressed as a set of $C$ linear equations. 
Finally, in section~\ref{sec:solvers} we propose an optimization method to efficiently solve Problem~\ref{eqn:NAS_QCQP}.
Figure~\ref{fig:scheme} (Left) shows a high level illustration of the scheme.

\vspace{-3mm}
\subsection{The Search Space}
\label{sec:search_space}

We consider a general search space that integrates a macro search space and a micro search space. 
The macro search space is composed of $S$ stages $s\in\{1,..,S\}$ of different input resolutions, each composed of blocks $b\in\{1,..,D\}$ with the same input resolution, and defines how the blocks are connected, see Figure~\ref{fig:search_space}. 
The micro search space 
controls the internal structures of each block. 
Specifically, this search space includes latency efficient search spaces introduced in~\cite{fbnet,mobilenetv3,tan2019mnasnet,TF-NAS,OFA,nayman2021hardcore}.
\begin{figure}[htb]
    \centering
     \includegraphics[width=.7\textwidth]{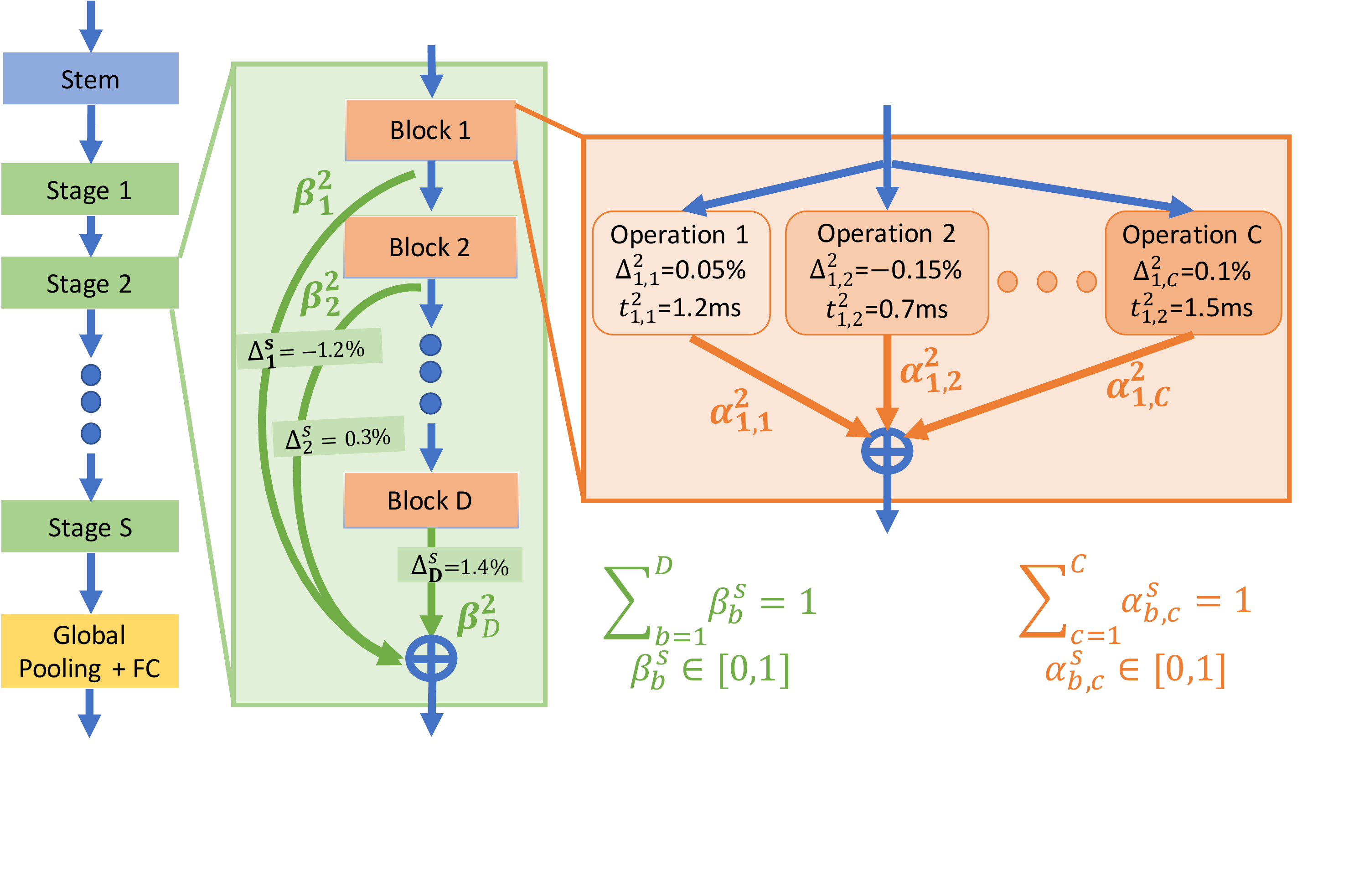}
     \vspace{-30pt}
     \caption{BINAS search space. The individual accuracy contribution and latency of each configured operation are measured.}
     \vspace{-15pt}
     \label{fig:search_space}
\end{figure}


%
%
%
A block configuration $c\!\in\!\mathcal{C}$ (specified in Appendix~\ref{apdx:search_space}) corresponds to parameters $\balpha$.
 For each block $b$ of stage $s$ we have $\alpha^s_{b,c}\in \{0, 1\}^{\left|\mathcal{C}\right|}$ and $\Sigma_{c\in\mathcal{C}} \alpha^s_{b,c} = 1$.
An input feature map $x^s_b$ to block $b$ of stage $s$ is processed as follows:
$x^s_{b+1}=\sum_{c\in\mathcal{C}}\alpha^s_{b,c}\cdot O^s_{b,c}(x^s_b)$, 
where $O^s_{b,c}(\cdot)$ is the operation configured by $c\in\mathcal{C}$. 
The depth of each stage $s$ is controlled by the parameters $\bbeta$:
$x^{s+1}_1=\Sigma_{b=1}^{D}\beta^s_b\cdot x^s_{b+1}$,
such that $\beta^s_b\in \{0, 1\}^D$ and $\Sigma_{b=1}^D \beta^s_b = 1$.

To summarize, the search space is composed of both the micro and macro search spaces parameterized by $\balpha$ and $\bbeta$, respectively:
\begin{align}\label{eqn:integer_search_space}
   \mathcal{S} = \left\{(\balpha, \bbeta)\left\vert
   \begin{array}{c}
     \alpha^s_{b,c}\in \{0, 1\}^{\left|\mathcal{C}\right|} \,; \, 
    \Sigma_{c\in\mathcal{C}} \alpha^s_{b,c} = 1  \\
   \beta^s_b\in \{0, 1\}^D \,; \,  
    \Sigma_{b=1}^D \beta^s_b = 1 
    \\
   \end{array}\right.
   ;
   \begin{array}{l}
    \forall s\in\{1,..,S\}
    \\ 
    \forall b\in\{1,..,D\},c\in\mathcal{C}
   \end{array}
   \right\}
\end{align}
%

such that a continuous probability distribution is induced over the space, by relaxing $\alpha^s_{b,c}\in \{0, 1\}^{\left|\mathcal{C}\right|}$ to $\alpha^s_{b,c}\in \mathbb{R}_+^{\left|\mathcal{C}\right|}$ and $ \beta^s_b\in \{0, 1\}^D$ to $\beta^s_b\in \mathbb{R}_+^D$ to be continuous rather than discrete.
Therefore, this probability distribution can be expressed by a set of linear equations and one can view the parametrization $\zeta=(\balpha, \bbeta)$ as a composition of probabilities in $\mathcal{P}_{\zeta}(\mathcal{S})=\{\zeta \mid A_{\mathcal{S}}\zeta\leq b_{\mathcal{S}}\}=\{(\balpha,\bbeta) \mid A_{\mathcal{S}}^\alpha\cdot\balpha\leq b_{\mathcal{S}}^\alpha, A_{\mathcal{S}}^\beta\cdot\bbeta\leq b_{\mathcal{S}}^\beta\}$ or as degenerate one-hot vectors in $\mathcal{S}$.

\subsection{Estimating the Accuracy Contribution of Design Choices}\label{sec:acc_contrib}
 Next we introduce a simple way to estimate the accuracy contribution of each design choice, given a trained one-shot model. With this at hand, we will be able to select those design choices that contribute the most to the accuracy under some latency budget. Suppose a supernetwork is constructed, such that every sub-network of it resides in the search space of Section~\ref{sec:search_space}. The supernetwork is trained to rank well different subnetworks~\cite{SPOS,fairnas,OFA,nayman2021hardcore}. The expected accuracy of such a supernetwork $\mathbb{E}[Acc]$ is estimated by uniformly sampling a different subnetwork for each input image from 20\% of the Imagenet train set (considered as a validation set), as illustrated in Figure~\ref{fig:estimate_base}.
\begin{figure}[htb]
    \centering
     \includegraphics[width=.6\textwidth]{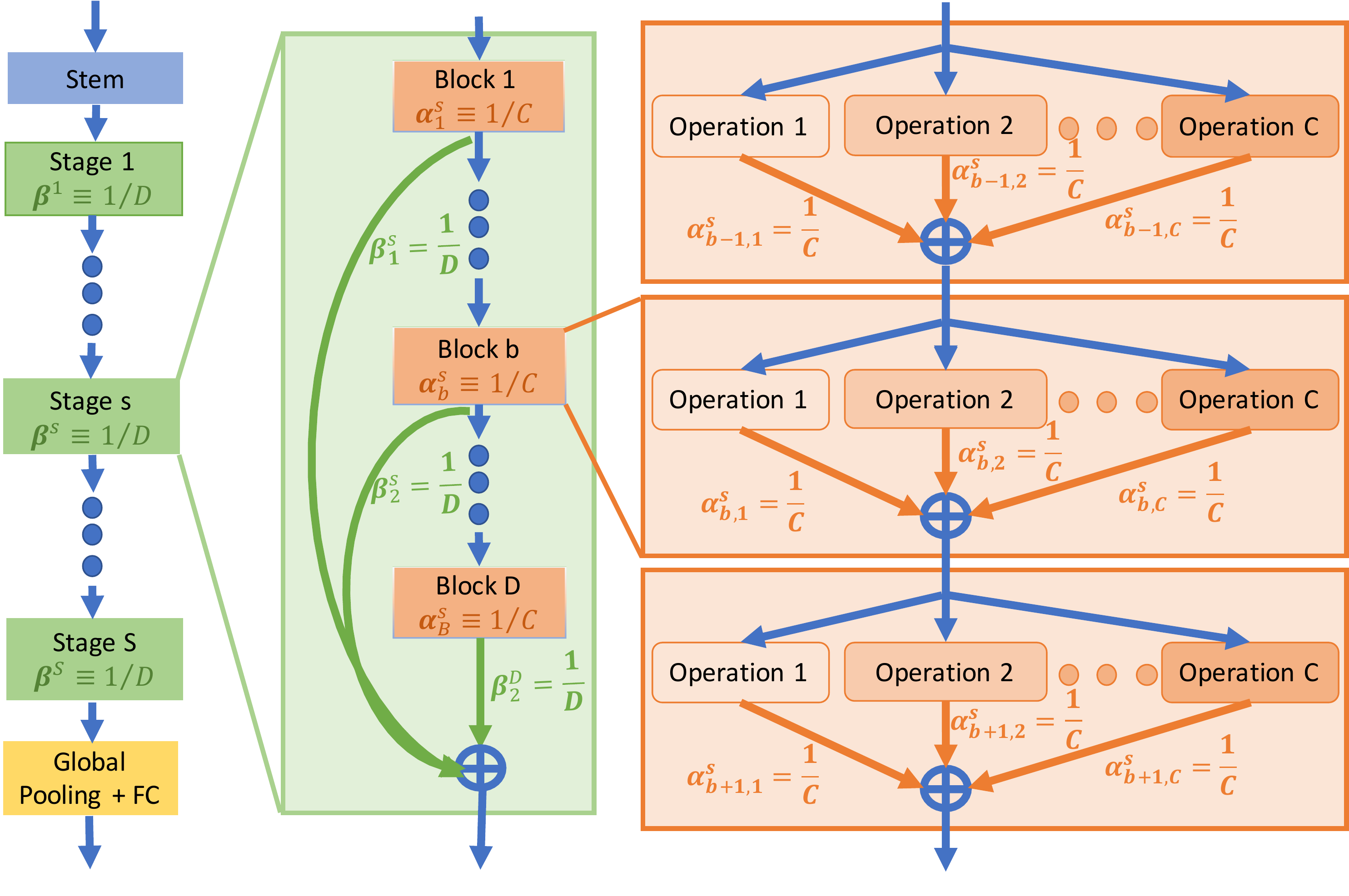}
     \caption{Estimating the base accuracy of the supernetwork is done by random uniform sampling of subnetworks for each input image.
     }
     \vspace{4mm}
     \label{fig:estimate_base}
\end{figure}
This estimate serves as the base accuracy of the supernetwork and thus every design choice should be evaluated by its contribution on top of it. Hence, the individual accuracy contribution of setting the depth of stage $s$ to $b$ is the gap: $\Delta_b^s=\mathbb{E}[Acc | d^s=b]-\mathbb{E}[Acc]$, where the first expectation is estimated by setting $\beta^s_b=1$ and uniform distribution for the rest of the design choices. This is done for every possible depth of every stage (Figure~\ref{fig:estimate_beta}).
\begin{figure}[htb]
    \centering
     \includegraphics[width=1\textwidth]{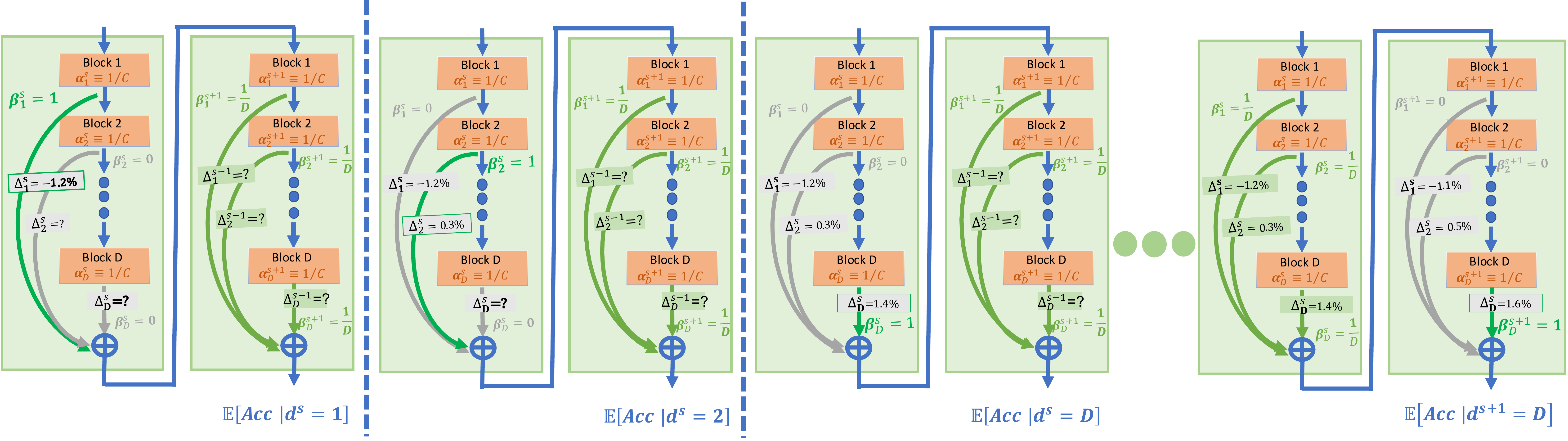}
     \caption{Estimating the expected accuracy gap caused by macroscopic design choices of the depth of the stages.}
     \label{fig:estimate_beta}
     \vspace{5mm}
\end{figure}

Similarly, the individual accuracy contribution of choosing configuration $c$ in block $b$ of stage $s$ is given by the gap: $\Delta_{b,c}^s=\mathbb{E}[Acc | O_{b,c}^s=O_c, d^s=b]-\mathbb{E}[Acc]$, where the first expectation is estimated by setting $\alpha_{b,c}^s=1$ and $\beta^s_b=1$, while keeping a uniform distribution for all the rest of the design choices in the supernetwork, as illustrated in Figure~\ref{fig:estimate_alpha}.
\begin{figure}[htb]
    \centering
     \includegraphics[width=1\textwidth]{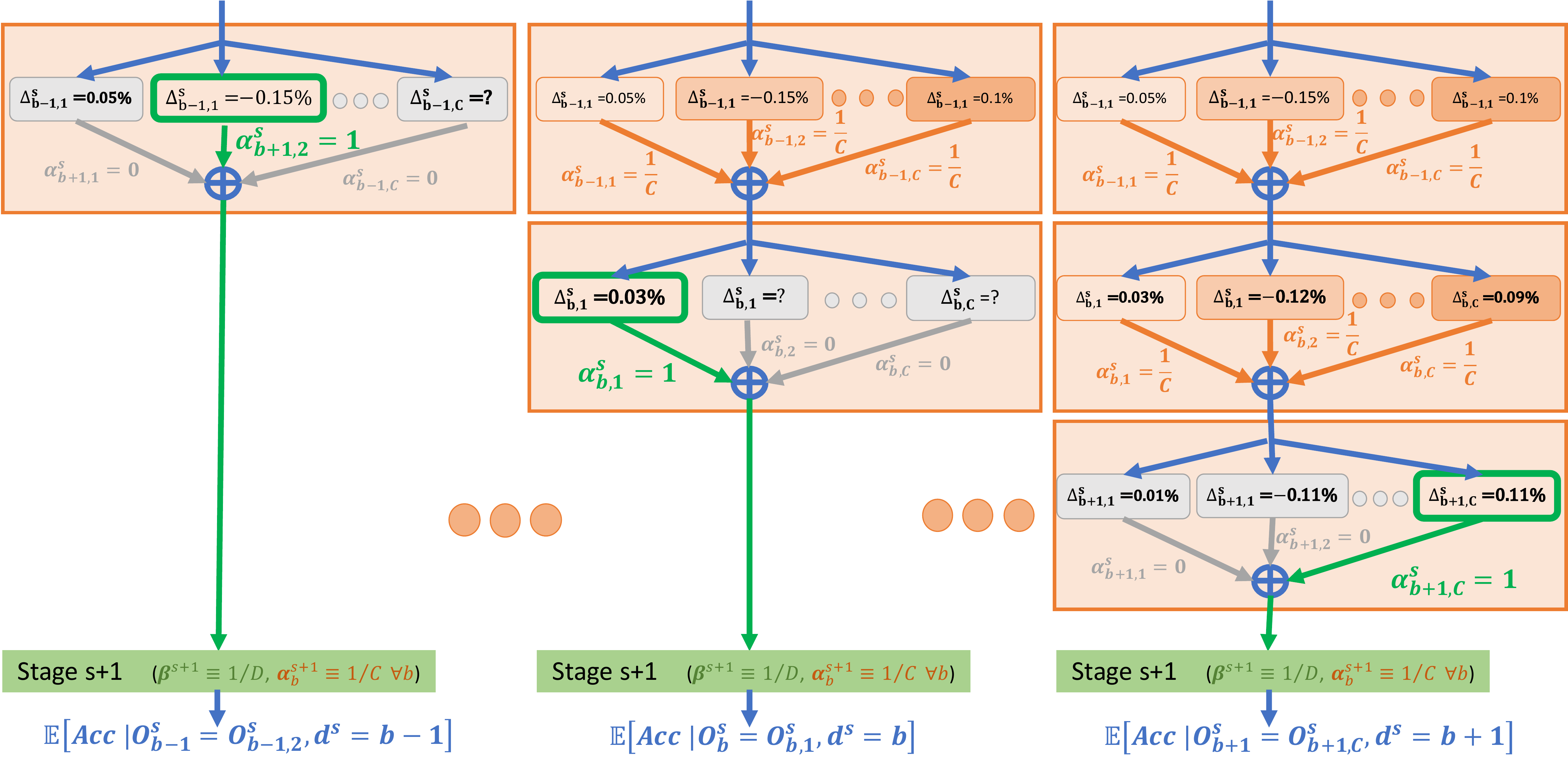}
     \caption{Estimating the expected accuracy gap caused by microscopic design choices on the operation applied at every block one at a time.}
     \label{fig:estimate_alpha}
\end{figure}

\subsection{Constructing a Bilnear Accuracy Estimator} \label{sec:quad_estimator}

We next propose an intuitive and effective way to utilize the estimated individual accuracy contribution of each design choice (section~\ref{sec:acc_contrib}) to estimate the expected overall accuracy of a certain architecture in the search space.

The expected accuracy contribution of a block $b$ can be computed by summing over the accuracy contributions $\Delta^s_{b,c}$ of every possible configuration $c\in\mathcal{C}$:
$\bar{\delta}^s_{b}=\Sigma_{c\in\mathcal{C}} \alpha^s_{b,c} \cdot \Delta^s_{b,c}$
Thus the expected accuracy contribution of the stage $s$ of depth $b'$ is 
$\delta^s_{b'}=\Delta^s_{b'} + \Sigma_{b=1}^{b'}\bar{\delta}^s_{b}$, where the first term is the accuracy contribution associated solely with the choice of depth $b'$ and the second term is the aggregation of the expected accuracy contributions of the first $b'$ blocks in the stage.
Taking the expectation over all possible depths for stage $s$ yields 
$\delta^s= \sum_{b'=1}^D \beta^s_{b'}\cdot \delta^s_{b'}$
and summing over all the stages results in the aggregated accuracy contribution of all design choices. 
Hence the accuracy of a subnetwork can be calculated by this estimated contribution on top of the estimated base accuracy $\mathbb{E}[Acc]$ (Figure~\ref{fig:estimate_base}):
\begin{align}\label{eqn:estimator}
    ACC(\balpha, \bbeta)=
   \mathbb{E}[Acc] &+
      \sum_{s=1}^S \sum_{b=1}^D \beta^s_b\cdot 
    \Delta_b^s
+ 
      \sum_{s=1}^S \sum_{b=1}^D \sum_{b'=b}^{D}\sum_{c\in\mathcal{C}} \alpha^s_{b,c}\cdot 
    \Delta_{b,c}^s
    \cdot\beta^s_{b'}
\end{align}
And its vectorized form can be expressed as the following bilinear formula in $\balpha$ and $\bbeta$: $ACC(\balpha, \bbeta) = r + q_\beta^T\bbeta + \balpha^T Q_{\alpha\beta}\bbeta$, 
where $r=E[Acc]$, $q_\beta\in\R^{D\cdot S}$ is a vector composed of $\Delta_b^s$ and $Q_{\alpha\beta}\in\R^{\mathcal{C}\cdot D\cdot S\times D\cdot S}$ is a matrix composed of $\Delta_{b,c}^s$.

We next present a theorem (with proof in Appendix~\ref{apdx:proof_estimator}) that states that the estimator in~\eqref{eqn:estimator} approximates well the expected accuracy of an architecture.
\begin{theo}\label{theo:estimator}
Assume $\{O^s_b, d_s\}$ for $s=1,\dots,S$ and $b=1,\dots,D$ are conditionally independent with the accuracy $Acc$. Suppose that there exists a positive real number $0 < \epsilon\ll 1$ such that for any $X\in\{O^s_b, d_s\}$ the following holds $|\mathbb{P}[Acc|X]- \mathbb{P}[Acc]|<\epsilon\mathbb{P}[Acc]$.
Then:
\begin{align}
  \mathbb{E}\left[Acc \middle\vert   
    \cap_{s=1}^S\cap_{b=1}^D O^s_b, 
    \cap_{s=1}^S d^s  
  \right]  
     &= \mathbb{E}[Acc] 
    +
    \left(1
    +\mathcal{O}(N\epsilon)\right)\cdot
      \sum_{s=1}^S \sum_{b=1}^D \beta^s_b\cdot 
    \Delta_b^s
\\ \notag &+ 
    \left(1
    +\mathcal{O}(N\epsilon)\right)\cdot
      \sum_{s=1}^S \sum_{b=1}^D \sum_{b'=b}^{D}\sum_{c\in\mathcal{C}} \alpha^s_{b,c}\cdot 
    \Delta_{b,c}^s
    \cdot\beta^s_{b'}
\end{align}
\end{theo}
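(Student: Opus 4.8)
The plan is to treat $Acc$ as the indicator event ``a random input is classified correctly,'' so that $\mathbb{E}[Acc\mid\cdot]=\mathbb{P}[Acc\mid\cdot]$ and the hypothesis $|\mathbb{P}[Acc\mid X]-\mathbb{P}[Acc]|<\epsilon\,\mathbb{P}[Acc]$ is a bound on this posterior. Writing $\mathcal{E}=\big(\cap_{s,b}O^s_b\big)\cap\big(\cap_s d^s\big)$ for the realized design choices, I would factor the target posterior by a naive-Bayes argument, linearize in $\epsilon$, recognize the accuracy contributions of Section~\ref{sec:acc_contrib} as the first-order coefficients, and finally reassemble the single-variable gaps into the double and triple sums of~\eqref{eqn:estimator} using the one-hot structure of $\balpha,\bbeta$.

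First I would combine Bayes' rule with the conditional-independence hypothesis $\mathbb{P}[\mathcal{E}\mid Acc]=\prod_X\mathbb{P}[X\mid Acc]$, where $X$ ranges over the $\mathcal{O}(N)$ variables in $\{O^s_b,d^s\}$. Replacing each factor by $\mathbb{P}[X\mid Acc]=\mathbb{P}[Acc\mid X]\,\mathbb{P}[X]/\mathbb{P}[Acc]$ gives
\[
\mathbb{P}[Acc\mid\mathcal{E}]=Z\cdot\mathbb{P}[Acc]\prod_X\frac{\mathbb{P}[Acc\mid X]}{\mathbb{P}[Acc]},\qquad Z=\frac{\prod_X\mathbb{P}[X]}{\mathbb{P}[\mathcal{E}]}.
\]
Setting $\mathbb{P}[Acc\mid X]=\mathbb{P}[Acc](1+\delta_X)$ with $|\delta_X|<\epsilon$, the hypothesis yields $\prod_X(1+\delta_X)=1+\sum_X\delta_X+\mathcal{O}((N\epsilon)^2)$, the discarded mass being bounded by $(1+\epsilon)^{\mathcal{O}(N)}-1-\mathcal{O}(N\epsilon)$. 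The key identity is $\mathbb{P}[Acc]\,\delta_X=\mathbb{P}[Acc\mid X]-\mathbb{P}[Acc]=\mathbb{E}[Acc\mid X]-\mathbb{E}[Acc]$, i.e. each $\delta_X$ encodes an accuracy-contribution gap $\Delta$.

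The delicate step is the normalizer $Z$: a crude bound only gives $Z=1+\mathcal{O}(N\epsilon)$, which would contaminate the leading term. Applying the same factorization to the complementary event and using the exact relation $\delta'_X=-\tfrac{\mathbb{P}[Acc]}{1-\mathbb{P}[Acc]}\delta_X$ (from $\mathbb{P}[\neg Acc\mid X]=1-\mathbb{P}[Acc\mid X]$), the first-order terms cancel inside $Z^{-1}=\mathbb{P}[Acc]\prod_X(1+\delta_X)+\mathbb{P}[\neg Acc]\prod_X(1+\delta'_X)$, so $Z=1+\mathcal{O}((N\epsilon)^2)$ provided $\mathbb{P}[Acc]$ is bounded away from $0$ and $1$. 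Assembling the factors gives $\mathbb{E}[Acc\mid\mathcal{E}]=\mathbb{E}[Acc]+\sum_X\Delta_X+\mathcal{O}((N\epsilon)^2)$: the base term is exact while each gap is perturbed only at relative order $N\epsilon$, matching the prefactors in the statement.

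It remains to identify $\sum_X\Delta_X$ with the claimed sums. The depth variables contribute $\sum_s\big(\mathbb{E}[Acc\mid d^s]-\mathbb{E}[Acc]\big)=\sum_{s,b}\beta^s_b\Delta^s_b$ because $\beta^s_b$ one-hot encodes $d^s=b$; operations in inactive blocks have zero gap, and $\sum_{b'\ge b}\beta^s_{b'}=\mathbbm{1}[d^s\ge b]$ is precisely the activity indicator that turns the per-block operation gaps into $\sum_{s,b}\sum_{b'\ge b}\sum_c\alpha^s_{b,c}\Delta^s_{b,c}\beta^s_{b'}$. Since each sum is itself $\mathcal{O}(N\epsilon)$, the additive $\mathcal{O}((N\epsilon)^2)$ remainder folds into the $(1+\mathcal{O}(N\epsilon))$ factors. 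I expect the main obstacle to be exactly this final matching: the measured gap $\Delta^s_{b,c}$ conditions jointly on $O^s_b$ \emph{and} $d^s=b$, whereas the expansion produces single-variable gaps, so I must argue via the same two-variable linearization that the extra depth-conditioning perturbs each operation term only at order $\epsilon^2$ and that the depth gaps are not double-counted between the two sums.
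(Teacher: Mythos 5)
Your core machinery is essentially the paper's own: the paper proves an auxiliary result (Theorem~\ref{theo:aux} in Appendix~\ref{apdx:proof_estimator}) by the same Bayes-plus-conditional-independence factorization, writing $\prob{A=a|X=x}=\prob{A=a}(1+\epsilon_x)$, keeping first-order terms, and extending from two variables to $n$ by induction, where you instead expand the product over all $N$ variables at once. Your handling of the normalizer $Z$ is actually a refinement: the paper sidesteps it by additionally assuming the conditioning variables are \emph{unconditionally} independent (so $\prob{X=x,Y=y}=\prob{X=x}\prob{Y=y}$ and $Z\equiv 1$ in your notation), whereas your complementary-event cancellation $\delta'_X=-\tfrac{\prob{Acc}}{1-\prob{Acc}}\delta_X$ gives $Z=1+\mathcal{O}((N\epsilon)^2)$ without that assumption, at the price of requiring $\prob{Acc}$ bounded away from $1$ and a binary reading of $Acc$.

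The genuine gap is in your final reassembly step, and your own closing sentence points at it. The paper never needs to convert single-variable operation gaps into the measured quantities: it applies Theorem~\ref{theo:aux} with the \emph{pairs} $\{O^s_b,d^s\}$ of the active blocks, together with the depths $d^s$, as the conditioning variables --- this is exactly why the hypothesis of the theorem is stated for the pairs --- so the first-order coefficients produced by the expansion are, by construction, the joint gaps $\Delta^s_{b,c}=\expec{Acc|O^s_b=O_c,d^s=b}-\expec{Acc}$ alongside the depth gaps $\Delta^s_b$ (equations \ref{eqn:masking}--\ref{eqn:adapt_alpha_final}). Your route produces single-variable gaps $\expec{Acc|O^s_b}-\expec{Acc}$ and then needs ``joint gap $=$ single operation gap $+\,\mathcal{O}(\epsilon^2)$''. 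That claim is false by your own two-variable linearization: the joint gap equals the single operation gap \emph{plus the depth gap} $\Delta^s_b$ up to $\mathcal{O}(\epsilon^2)$, and $\Delta^s_b$ is of order $\epsilon$ --- the same order as the signal, not $\epsilon^2$. Consequently, substituting $\Delta^s_{b,c}$ for the operation gaps in the triple sum over-counts depth contributions by $\sum_s\sum_{b\le d^s}\Delta^s_b$, i.e.\ once per \emph{active block} rather than once per stage; this is an order-$N\epsilon$ absolute discrepancy, of the same magnitude as the sums themselves, and so cannot be folded into the $(1+\mathcal{O}(N\epsilon))$ prefactors, which are only relative corrections. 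The missing idea is therefore to condition on the pairs atomically from the start. (For fairness, the paper's own reassembly is loose at exactly this point: in equation~\ref{eqn:adapt_alpha} it silently replaces the realized depth appearing in the expansion by the conditioning $d^s=b$ used when measuring $\Delta^s_{b,c}$; but its composite-variable framing is what makes the measured gaps appear as expansion coefficients at all, which your single-variable decomposition cannot achieve.)
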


Theorem~\ref{theo:estimator} and Figure~\ref{fig:scheme} (right) demonstrate the effectiveness of relying on $\Delta_{b,c}^s$, $\Delta_b^s$ to express the expected accuracy of networks. Since those terms measure the accuracy contributions of individual design decisions, many insights and design rules can be extracted from those, as discussed in section~\ref{sec:interpretability}, making the proposed estimator intuitively interpretable.
Furthermore, the transitivity of ranking correlations is used in appendix~\ref{apdx:kt_transitivity} for guaranteeing good prediction performance with respect to architectures trained from scratch.

\subsection{The Interger Quadratic Constraints Quadratic Program}
\label{sec:formulate_IQCQP}
In this section we formulate latency-constrained NAS as an IQCQP of bilinear objective function and bilinear constraints. For the purpose of maximizing the validation accuracy of the selected subnetwork under latency constraint, we utilize the bilinear accuracy estimator derived in section~\ref{sec:quad_estimator} as the objective function and define the bilinear latency constraint  similarly to~
\cite{TF-NAS,nayman2021hardcore}:
\begin{align}\label{eqn:latency_formula}
   LAT(\balpha, \bbeta) = 
   \sum_{s=1}^S \sum_{b=1}^D \sum_{b'=b}^{D}\sum_{c\in\mathcal{C}} \alpha^s_{b,c} \cdot t^s_{b,c} \cdot \beta^s_{b'} = \balpha^T \Theta\bbeta
\end{align}
where $\Theta\in\R^{\mathcal{C}\cdot D\cdot S\times D\cdot S}$ is a matrix composed of the latency measurements $t^s_{b,c}$  on the target device of each configuration $c\in\mathcal{C}$ of every block $b$ in every stage $s$ (Figure~\ref{fig:search_space}). 
The bilinear version of problem~\ref{eqn:NAS_QCQP} turns to be:
\begin{align}
\label{eqn:bilinear_QCQP}
&\max_{\alpha^s_{b,c},\beta^s_b}  &&
\sum_{s=1}^S \sum_{b=1}^D \beta^s_b\cdot 
\Delta_b^s
+ 
\sum_{s=1}^S \sum_{b=1}^D \sum_{b'=b}^{D}\sum_{c\in\mathcal{C}} \alpha^s_{b,c}\cdot 
\Delta_{b,c}^s    
\cdot\beta^s_{b'}
\\ &\text{s.t. } && 
\notag
\sum_{s=1}^S \sum_{b=1}^D \sum_{b'=b}^{D}\sum_{c\in\mathcal{C}} \alpha^s_{b,c} \cdot t^s_{b,c} \cdot \beta^s_{b'} 
\leq T
\\ &&&
\notag
\Sigma_{c\in\mathcal{C}} \alpha^s_{b,c} = 1
\,;\,
\alpha^s_{b,c}\in \{0, 1\}^{\left|\mathcal{C}\right|}
\,\,
\forall s\in\{1,..,S\}, b\in\{1,..,D\},c\in\mathcal{C}
\\ &&&
\notag
\Sigma_{b=1}^D \beta^s_b = 1 
\,; \, 
\beta^s_b\in \{0, 1\}^D 
\, \,  
\forall s\in\{1,..,S\}, b\in\{1,..,D\}
\end{align}
And in its vectorized form:
\begin{align}
\label{eqn:bilinear_QCQP_vector}
\max_{\balpha,\bbeta\in\{0,1\}} 
q_\beta^T\bbeta + \balpha^T Q_{\alpha\beta}\bbeta
\text{    s.t.    }
\balpha^T \Theta\bbeta \leq T
\, ; \, 
A_{\mathcal{S}}^\alpha\cdot \balpha\leq b_{\mathcal{S}}^\alpha 
\, ; \, 
A_{\mathcal{S}}^\beta\cdot \bbeta\leq b_{\mathcal{S}}^\beta 
\end{align}

\subsection{Solving the Integer Quadratic Constraints Quadratic Program}\label{sec:solvers}
By formulating the latency-constrained NAS as a binary problem in section~\ref{sec:formulate_IQCQP},
we can now use out-of-the-box \textit{Mixed Integer Quadratic Constraints Programming} (MIQCP) solvers to optimize problem~\ref{eqn:bilinear_QCQP}. We use IBM CPLEX~\cite{CPLEX} that supports non-convex binary QCQP and utilizes the Branch-and-Cut algorithm~\cite{padberg1991branch} for this purpose. 
A heuristic alternative for optimizing an objective function 
under integer constraints is evolutionary search~\cite{real2019regularized}. 
Next we propose a more theoretically sound alternative.

\vspace{-3mm}
\subsubsection{Utilizing the Block Coordinate Frank-Wolfe Algorithm} 

As pointed out by \cite{nayman2021hardcore}, since $\Theta$ is constructed from measured latency in \eqref{eqn:latency_formula}, it is not guaranteed to be positive semi-definite, hence, the induced quadratic constraint makes the feasible domain in problem \ref{eqn:NAS_QCQP} non-convex in general.
To overcome this we adapt the \textit{Block-Coordinate Frank-Wolfe} (BCFW)~\cite{BCFW} for solving a continuous relaxation of problem~\ref{eqn:NAS_QCQP}, such that $\zeta\in \R_+^N$.
Essentially BCFW adopts the Frank-Wolfe~\cite{frank_wolfe} update rule for each block of coordinates in $\delta\in\{\balpha,\bbeta\}$ picked up at random at each iteration $k$, such that $\delta_{k+1} = (1-\gamma_k)\cdot\delta_k + \gamma_k\cdot\hat{\delta}$ with $0\leq\gamma_k\leq 1$, for any partially differentiable objective function $ACC(\balpha,\bbeta)$:
\begin{align}
  \hat{\balpha}=&\argmax_{\balpha} \nabla_{\balpha}ACC(\balpha,\bbeta_k)^T\cdot\balpha 
  \quad
  \text{ s.t. } \bbeta_k^T \Theta^T \cdot \balpha\leq T \quad ; \quad
  A_{\mathcal{S}}^\alpha\cdot\balpha\leq b_{\mathcal{S}}^\alpha
   \\
    \hat{\bbeta}=&\argmax_{\bbeta} \nabla_{\bbeta}ACC(\balpha_k,\bbeta)^T\cdot\bbeta 
    \quad
    \text{ s.t. }  \balpha_k^T \Theta \cdot \bbeta\leq T \quad; \quad
    A_{\mathcal{S}}^\beta\cdot \bbeta\leq b_{\mathcal{S}}^\beta
\end{align}
where $\nabla_\delta$ stands for the partial derivatives with respect to $\delta$. 
Convergence guarantees are provided in \cite{BCFW}. Then, once converged to the solution of the continuous relaxation of the problem, we need to project the solution back to the discrete space of architectures, specified in \eqref{eqn:integer_search_space}, as done in \cite{nayman2021hardcore}. This step could deviate from the solution and cause degradation in performance.

Due to the formulation of the NAS problem~\ref{eqn:bilinear_QCQP_vector} as a \textit{Bilinear Programming} (BLP) \cite{gallo1977bilinear} with bilinear constraints (BLCP) we design Algorithm~\ref{alg:BCFW_QCQP} that applies the BCFW with line-search for this special case. Thus 
more specific convergence guarantees can be provided together with the sparsity of the solution, hence no additional discretization step is required. 
The following theorem states that after $\mathcal{O}(1/\epsilon)$ iterations, Algorithm~\ref{alg:BCFW_QCQP} obtains an $\epsilon$-approximate solution to problem \ref{eqn:bilinear_QCQP_vector}. 
\vspace{3mm}
\begin{algorithm}[htb]
   \caption{BCFW with Line Search for BLCP}
   \label{alg:BCFW_QCQP}
\begin{algorithmic}[1]
\INPUT $(\balpha_0, \bbeta_0) \in \left\{(\balpha,\bbeta) \left| \begin{array}{c}
      \balpha^T \Theta \bbeta \leq T,
      A_{\mathcal{S}}^\alpha\balpha \leq b_{\mathcal{S}}^\alpha, A_{\mathcal{S}}^\beta\bbeta \leq b_{\mathcal{S}}^\beta 
\end{array}\right.\right\}$
\FOR {$k=0,\dots,K-1$}
\IF {$Bernoulli(p)==1$}
\STATE $\balpha_{k+1}=\argmax_{\balpha} (q^T_\alpha + \beta_k^T Q_{\alpha\beta}^T) \cdot \balpha$ s.t. $\bbeta_k^T \Theta^T \cdot \balpha\leq T$ ;
$A_{\mathcal{S}}^\alpha\cdot\balpha\leq b_{\mathcal{S}}^\alpha$
 and $\bbeta_{k+1}=\bbeta_k$
\ELSE
\STATE $\bbeta_{k+1}=\argmax_{\bbeta} (q^T_\beta + \alpha_k^T Q_{\alpha\beta}) \cdot \bbeta$ s.t. $\balpha_k^T \Theta \cdot \bbeta\leq T$ ;
$A_{\mathcal{S}}^\beta\cdot\bbeta\leq b_{\mathcal{S}}^\beta$
 and $\balpha_{k+1}=\balpha_k$
\ENDIF
\ENDFOR
\OUTPUT $\zeta^*=(\balpha_K, \bbeta_K)$
\end{algorithmic}
\end{algorithm}

 \begin{theo}\label{theo:convergence}
    For each $k>0$ the iterate $\zeta_k=(\balpha_k, \bbeta_k)$ of Algorithm~\ref{alg:BCFW_QCQP} satisfies:
    $$E[ACC(\zeta_k)] - ACC(\zeta^*) \leq \frac{4}{k+4}\left(ACC(\zeta_0) - ACC(\zeta^*)\right)$$
    where $\zeta^*=(\balpha^*, \bbeta^*)$ is the solution of a continuous relaxation of problem \ref{eqn:bilinear_QCQP_vector} and the expectation is over the random choice of the block $\balpha$ or $\bbeta$. 
    \end{theo}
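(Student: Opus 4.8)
The plan is to recognize Algorithm~\ref{alg:BCFW_QCQP} as the Block-Coordinate Frank-Wolfe method of~\cite{BCFW} specialized to exactly two coordinate blocks, $\balpha$ and $\bbeta$, and to import its $\mathcal{O}(1/k)$ analysis while exploiting the one structural fact that makes everything tractable: the objective $ACC(\balpha,\bbeta)=q_\alpha^T\balpha + q_\beta^T\bbeta + \balpha^T Q_{\alpha\beta}\bbeta$ is \emph{affine in each block separately}. First I would record two consequences of this bilinearity. (i) Restricted to a single block (the other held fixed), the objective is linear, so its block curvature constant vanishes and the Frank-Wolfe progress inequality carries \emph{no} second-order term. (ii) Since the restricted objective is linear over the block polytope, the line search returns the block maximizer itself, so each update is an exact randomized block-coordinate ascent step and $ACC(\zeta_{k+1})\geq ACC(\zeta_k)$ holds surely.

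Next I would write the one-step estimate. Conditioned on $\zeta_k$, the block is chosen by $Bernoulli(p)$; taking $p=\tfrac12$ so the two blocks are equiprobable, and using that for the product domain the joint Frank-Wolfe corner separates across blocks, the expected gain is $\tfrac1n=\tfrac12$ of the gain of a full joint step. Denoting by $g(\zeta_k)=\max_{\zeta'}\langle\nabla ACC(\zeta_k),\zeta'-\zeta_k\rangle$ the joint Frank-Wolfe gap, the vanishing block curvature from (i) gives, for a step size $\gamma_k$, the clean conditional inequality $E_k[ACC(\zeta_{k+1})]\geq ACC(\zeta_k)+\tfrac{\gamma_k}{2}\,g(\zeta_k)$. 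The factor $\tfrac1n$ with $n=2$ is exactly what later produces the $2n=4$ in the rate.

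The crux is relating the Frank-Wolfe gap $g(\zeta_k)$ to the true suboptimality $h_k := ACC(\zeta^*)-ACC(\zeta_k)$. For a concave objective one has $g\geq h$ immediately; here $ACC$ is bilinear and hence \emph{not} concave, and expanding the bilinear form gives instead
\[ g(\zeta_k)\ \geq\ \langle\nabla ACC(\zeta_k),\zeta^*-\zeta_k\rangle\ =\ h_k\ -\ (\balpha^*-\balpha_k)^T Q_{\alpha\beta}(\bbeta^*-\bbeta_k), \]
so the non-concavity surfaces as the bilinear cross-term, which has no a priori sign. I expect controlling this term to be the main obstacle, compounded by a second subtlety special to the BLCP: the latency constraint $\balpha^T\Theta\bbeta\leq T$ couples the two blocks, so each per-block feasible polytope depends on the current value of the other block and $\zeta^*$ need not be feasible for an intermediate block subproblem. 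I would handle the cross-term by bounding it quadratically in the block displacements and absorbing it into an effective constant $C$ playing the role of the summed curvature $\bar C_f$ of~\cite{BCFW}, then argue that because (i) kills the genuine block curvature, $C$ reduces to this controlled bilinear interaction alone.

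Finally I would close the recursion. Combining the one-step estimate with the gap bound yields $E[h_{k+1}]\leq(1-\tfrac{\gamma_k}{2})E[h_k]+\tfrac{\gamma_k^2}{2}C$, and with the diminishing schedule $\gamma_k=\tfrac{4}{k+4}$ (which the line search can only improve upon, so the inequality is preserved) a one-line induction on $k$ gives $E[h_k]\leq\tfrac{4}{k+4}(C+h_0)$. It then remains to show that the bilinear cross-term does not reintroduce a genuine constant, i.e.\ that $C$ is dominated by $h_0$, at which point the bound collapses to $E[h_k]\leq\tfrac{4}{k+4}\,h_0=\tfrac{4}{k+4}\big(ACC(\zeta^*)-ACC(\zeta_0)\big)$, matching Theorem~\ref{theo:convergence} up to the sign convention of its statement. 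The base case $k=0$ is immediate and the inductive step is routine Frank-Wolfe algebra, so the only genuinely new work is the bilinear gap estimate and the bounding of its cross-term.
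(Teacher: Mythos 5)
Your high-level route is the same as the paper's: recognize Algorithm~\ref{alg:BCFW_QCQP} as two-block BCFW, observe that bilinearity makes each block-restriction affine so that all block curvature constants vanish (this is exactly the paper's lemma that $C_f^\otimes=0$, and your observation (ii) is its analytic line-search result $\gamma\equiv 1$), and then run the standard $\mathcal{O}(1/k)$ BCFW induction with $n=2$ to produce the factor $2n/(k+2n)=4/(k+4)$. The one structural difference is that the paper does not redo the induction at all: having computed $C_f^\otimes=0$, it invokes the convergence theorem of \cite{BCFW} (restated as Theorem~\ref{theo:plain_convergence}) as a black box and specializes $n=2$.

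However, your proposal stalls exactly at its crux, and the step you defer is a genuine gap. Writing $h_k:=ACC(\zeta^*)-ACC(\zeta_k)$, your inequality $g(\zeta_k)\geq h_k-(\balpha^*-\balpha_k)^T Q_{\alpha\beta}(\bbeta^*-\bbeta_k)$ is the correct bilinear replacement for the concavity-based bound $g\geq h$, but the plan that follows it --- bound the cross term, absorb it into an effective constant $C$, and ``then show $C$ is dominated by $h_0$'' --- is never executed, and no argument is offered for why it should hold: the cross term has no sign, need not shrink with $h_k$, and if it enters only as an additive constant the induction yields $E[h_k]\leq\frac{4}{k+4}\left(C+h_0\right)$, which is not the claimed bound. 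A second unresolved point, which you yourself flag, is feasibility: because the latency constraint couples the blocks, the per-block polytope at iteration $k$ is $\{\balpha : \bbeta_k^T\Theta^T\balpha\leq T,\; A_{\mathcal{S}}^\alpha\balpha\leq b_{\mathcal{S}}^\alpha\}$, and $\balpha^*$ need not belong to it; then the per-block Frank-Wolfe gap does not even dominate the linearization evaluated at $\zeta^*$, so the first inequality of your gap estimate is itself unjustified. For context, the paper's own appendix does not fill these holes either: the theorem it cites from \cite{BCFW} is proved for convex objectives over genuine product domains, and neither hypothesis holds for problem~\ref{eqn:bilinear_QCQP_vector}; its proof consists of the curvature computation plus that citation. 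So the two obstacles you identified are real and are precisely where the argument is incomplete --- both in your proposal and in the paper --- and until the cross term and the moving feasible sets are actually controlled, neither argument establishes Theorem~\ref{theo:convergence} (whose statement, as you noted, also has the maximization/minimization sign convention inverted).
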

The proof is in Appendix~\ref{apdx:proof_convergence_bcfw}. 
We next provide a guarantee that Algorithm~\ref{alg:BCFW_QCQP} directly yields a \textit{sparse solution}, representing a valid sub-network
without the additional discretization step required by other continuous methods \cite{liu2018darts,fbnet,TF-NAS,nayman2021hardcore}.
\begin{theo}
\label{thm:one_hot_sol}
The output solution $(\balpha, \bbeta)=\zeta^*$ of Algorithm~\ref{alg:BCFW_QCQP} contains only one-hot vectors for $\alpha^s_{b,c}$ and $\beta^s_b$, except from a single one for each of those blocks, which contains a couple of non-zero entries.
\end{theo}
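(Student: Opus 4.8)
The plan is to read the claim directly off the geometry of the polytope that each linear subproblem in Algorithm~\ref{alg:BCFW_QCQP} optimizes over: it is a product of probability simplices intersected with a single extra halfspace, and the vertices of such a body are necessarily almost one-hot.

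First I would isolate one block update, say the $\balpha$-step (the $\bbeta$-step being symmetric). With $\bbeta_k$ held fixed, the bilinear objective $ACC(\balpha,\bbeta_k)=r+q_\beta^T\bbeta_k+\balpha^T Q_{\alpha\beta}\bbeta_k$ is affine in $\balpha$, and the feasible set is
\[
P_\alpha=\{\balpha: A_{\mathcal{S}}^\alpha\balpha\leq b_{\mathcal{S}}^\alpha\}\cap\{\balpha:\bbeta_k^T\Theta^T\balpha\leq T\}=:P\cap H^-.
\]
Here $P$ is exactly the product of the per-block simplices enforcing $\alpha^s_{b,c}\geq 0$ and $\sum_c\alpha^s_{b,c}=1$, while the latency term contributes the \emph{single} halfspace $H^-$. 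The crucial point I would stress is that fixing $\bbeta_k$ collapses the bilinear latency constraint into one linear inequality; this is what ultimately limits the fractionality to a single block.

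Next I would invoke two standard facts and one structural observation. The standard facts: the maximum of an affine function over a polytope is attained at a vertex (and the linear oracle may be taken to return one), and the vertices of $P\cap H^-$ are either vertices of $P$ lying in $H^-$ or points where the bounding hyperplane $H=\partial H^-$ meets an \emph{edge} of $P$. The structural observation concerns the product of simplices: its vertices are precisely the fully one-hot points (each $\alpha^s_{b,\cdot}$ a standard basis vector), and its edges join two such vertices that differ in exactly one block. Combining these, a vertex of $P_\alpha$ of the first kind is fully one-hot, whereas a vertex of the second kind lies in the relative interior of an edge and is therefore one-hot in every block except the one in which the two endpoints differ, where it is a strict convex combination of two basis vectors, i.e. it has exactly two non-zero entries. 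Hence $\balpha_{k+1}$ is one-hot in all blocks but at most one, the exception carrying a couple of non-zero entries, and the identical argument applies to $\bbeta_{k+1}$.

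Finally I would lift this per-step property to the output. Since every $\bbeta$-iteration leaves $\balpha$ untouched and vice versa, the returned $\balpha_K$ equals the iterate produced at the last $\balpha$-update, and likewise for $\bbeta_K$; both are therefore vertices of their respective $P\cap H^-$ and inherit the almost-one-hot structure (a group never updated may simply be initialized at a one-hot feasible subnetwork, and generically, when the latency constraint binds, each group has exactly one fractional block). The main obstacle — and the step deserving the most care — is the vertex characterization of $P\cap H^-$: I must justify that cutting the product of simplices by one and only one additional hyperplane can create new vertices solely on edges of $P$, since this is exactly what forces all but a single simplex to remain at a vertex; the argument would break if the latency term contributed more than one active linear constraint.
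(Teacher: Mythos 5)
Your proof is correct, and its skeleton matches the paper's: both arguments exploit the fact that fixing one block collapses the bilinear latency constraint into a single linear inequality, so each subproblem of Algorithm~\ref{alg:BCFW_QCQP} is a linear program over a product of simplices intersected with one halfspace, and both then transfer the per-step sparsity to the output via the observation that each block of the returned iterate equals its last update. Where you genuinely differ is in how the per-step sparsity is justified. The paper identifies each subproblem as a \emph{relaxed Multiple Choice Knapsack Problem} and invokes Lemmas~\ref{lem:single_non_one_hot} and~\ref{lem:two_nonzeros}, whose proofs are outsourced to the HardCoRe-NAS paper. You instead prove the same structural fact from scratch by polyhedral geometry: a vertex of $P\cap H^-$ is either a vertex of $P$ or lies in the relative interior of an edge of $P$ cut by the hyperplane $H$, and since the vertices of a product of simplices are exactly the fully one-hot points while its edges vary in exactly one factor, any optimal vertex is one-hot in every block but at most one, which carries exactly two non-zero entries. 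The step you flag as delicate --- that a single extra hyperplane can create new vertices only on edges of $P$ --- is indeed the crux, and your justification is sound: at a new vertex, $n$ linearly independent active constraints are needed, $H$ supplies one, so the constraints of $P$ active there define a face of dimension at most one. Your route buys self-containedness and makes transparent \emph{why} exactly one fractional block can appear per group (the argument would fail with two latency-type constraints, as you note); the paper's route buys brevity and a connection to knapsack theory. Two caveats are shared by both proofs, and you are in fact more explicit about them than the paper: the $\argmax$ oracle must return a vertex (a basic optimal solution, not an arbitrary point of the optimal face), and a block that is never sampled retains its initialization, so the statement implicitly assumes either an almost-one-hot feasible initialization or that each of $\balpha,\bbeta$ is updated at least once.
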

The proof is in Appendix~\ref{apdx:proof_sparsity_bcfw}.
In practice, a negligible latency deviation is associated with taking the $\argmax$ over the only two couples. Differently from the sparsity guarantee for the discretization step in \cite{nayman2021hardcore}, Theorem~\ref{thm:one_hot_sol} guarantees similar desirable properties but for the fundamentally different Algorithm~\ref{alg:BCFW_QCQP} that does not involve an additional designated projection step.
\section{Experimental Results}\label{sec:experiments}
\begin{wrapfigure}{R}{0.5\textwidth}
\vspace{-5mm}
\begin{center}
\begin{adjustbox}{width=0.48\textwidth}
\begin{tikzpicture}
\begin{axis}[
            axis x line=left,
            axis y line=left,
            xmajorgrids=true,
            ymajorgrids=true,
            grid=both,
            xlabel style={below=1ex},
            enlarge x limits,
            ymin = 74.4,
            ymax = 78.1,
            xmin = 39.0,
            xmax = 60.0,
            xtick = {37,39,...,65},
            ytick = {74.5,75,...,78.2},
            ylabel = Top-1 Accuracy (\%),
            xlabel = Latency  (milliseconds),
            legend pos=south east,
            legend style={nodes={scale=0.8, transform shape}}
    ]




\addplot[solid, color=orange, mark=*, thick]coordinates {(37,75.3)(41,76.1)(45,76.5)
(50,76.8)
(56, 77.7)
(59.1,77.8)
};\addlegendentry{BINAS (from scratch)}

\addplot[only marks,color=ForestGreen,mark=*,very thick]coordinates {(45, 75.2)} node (MobileNet_V3){};
\node[color=ForestGreen, above right = -0.1cm and -0.3cm of MobileNet_V3, font=\tiny] {MobileNet V3};

\addplot[only marks,color=ForestGreen,mark=*,very thick]coordinates {(40, 75.0)} node (tfnasb){};
\node[color=ForestGreen, above right of=tfnasb, font=\tiny, node distance = 0.3cm] {TF-NAS-CPU B};

\addplot[only marks,color=ForestGreen,mark=*,very thick]coordinates {(60, 76.5)} node (tfnasa){};
\node[color=ForestGreen, left of=tfnasa, font=\tiny, node distance = 1.1cm] {TF-NAS-CPU A};


\addplot[only marks,color=ForestGreen,mark=*,very thick]coordinates {(55, 75.2)} node (mnasa1){};
\node[color=ForestGreen, above right = 0.0cm and -0.5cm of mnasa1, font=\tiny, node distance = 0.01cm] {MNAS-NET-A1};

\addplot[only marks,color=ForestGreen,mark=*,very thick]coordinates {(39, 74.5)} node (mnasb1){};
\node[color=ForestGreen, right of=mnasb1, font=\tiny, node distance = 0.99cm] {MNAS-NET-B1};

\addplot[only marks,color=ForestGreen,mark=*,very thick]coordinates {(41, 74.9)} node (spnas){};
\node[color=ForestGreen, below right = -0.1cm and -0.3cm of spnas, font=\tiny, node distance = 0.8cm] {SPNAS-NET};


\addplot[only marks,color=ForestGreen,mark=*,very thick]coordinates {(47, 75.7)} node (fbnet){};
\node[color=ForestGreen, right of=fbnet, font=\tiny, node distance = 0.8cm] {FB-NET-C};

\addplot[only marks,color=ForestGreen,mark=*,very thick]coordinates {(61, 77.0)} node (FairNASC){};
\node[color=ForestGreen, left of=FairNASC, font=\tiny, node distance = 0.8cm] {FairNAS-C};


\end{axis}
\end{tikzpicture}
\end{adjustbox}
\end{center}
\vspace{-5mm}
\captionof{figure}{Imagenet Top-1 accuracy vs latency. All models are trained from scratch. 
BINAS generates better models than other methods with a reduced scalable marginal cost of 8 GPU hours for each additional model.}
\label{fig:acc_vs_lat}
\end{wrapfigure}
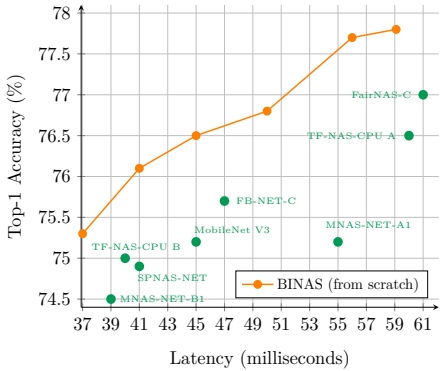
\vspace{-4mm}
\subsection{Search for State-of-the-Art Architectures}
\subsubsection{Search Space Specifications. } \label{sec:search_space_specifications}
Aiming at latency efficient architectures, we adopt the search space introduced in~\cite{nayman2021hardcore}
, which is closely related to those used by~\cite{fbnet,mobilenetv3,tan2019mnasnet,TF-NAS,OFA}. The macro search space is composed of $S=5$ stages, each composed of at most $D=4$ blocks. 
The micro search space is based on \textit{Mobilenet Inverted Residual} (MBInvRes) blocks~\cite{mobilenetv2} and controls the internal structures of each block. 
Every MBInvRes block is configured by an expansion ratio $er\in\{3,4,6\}$ of the point-wise convolution, kernel size $k\in\{3\times3, 5\times5\}$ of the Depth-Wise Separable convolution (DWS), and Squeeze-and-Excitation (SE) layer~\cite{SE} $se\in\{\text{on}, \text{off}\}$ 
(details in Appendix~\ref{apdx:search_space}).

\vspace{-4mm}
\subsubsection{Comparisons with Other Methods. }\label{sec:exp_comparison}
We compare our generated architectures to other state-of-the-art NAS methods in Table~\ref{tab:exp} and Figures~\ref{fig:acc_vs_lat} and \ref{fig:trio} (Right). 
Aiming for surpassing the previous state-of-the-art~\cite{nayman2021hardcore} search methods, we use its search space (Section~\ref{sec:search_space_specifications}) and official supernetwork training. This way we can show that the improved search method (Section~\ref{sec:optimizers_compare} and Figure~\ref{fig:trio} (Middle)) leads to superior results for the same marginal search cost of 15 GPU hours per additional generated model, as shown in Figure~\ref{fig:trio} (Right).
For the purpose of comparing the generated architectures alone of other methods, excluding the contribution of evolved pretraining techniques, for each model in Table~\ref{tab:exp} and Figure~ \ref{fig:trio} (Right), the official PyTorch implementation~\cite{pytorch} is trained from a scratch using the exact same code and hyperparameters, as specified in appendix~\ref{apdx:reproduce}. The maximum accuracy between our training and the original paper is reported. The latency values presented are actual time measurements of the models, running on a single thread with the exact same settings and on the same hardware. We disabled optimizations, e.g., Intel MKL-DNN~\cite{mkl_dnn}, hence the latency we report may differ from the one originally reported. 
It can be seen that networks generated by our method meet the latency target closely, while at the same time are comparable to or surpassing all the other methods on the top-1 Imagenet accuracy with a reduced scalable search cost. 
The total search time consists of $435$ GPU hours computed only once as preprocessing and additional $8$ GPU hours for fine-tuning each generated network, while the search itself requires negligible several CPU minutes, see appendix~\ref{apdx:overview} for more details. Due to the negligible search cost, one can choose to train the models longer (e.g. 15 GPU hours) to achieve better results with no larger marginal cost than other methods.
\begin{table}[t]
\small
    \centering
    \begin{tabular}{cc}
    \begin{tabular}{|l|c|c|c|}
    \hline 
    Model & \makecell[c]{Latency \\ (ms)} & \makecell[c]{Top-1 \\ (\%)} & \makecell[c]{Total  Cost\\(GPU hours)}
    \\ \toprule
          MnasNetB1 & 39 & 74.5 & 40,000N\\ 
          TFNAS-B & 40 & 75.0 & 263N \\ 
          SPNASNet & 41 & 74.9 & 288 + 408N\\ 
          OFA CPU
          & 42 & 75.7 & 1200 + 25N \\ 
          HardCoRe A & 40 & 75.8 & 400 + 15N\\ 
          \textbf{BINAS(40)} & \textbf{40} & \textbf{76.1} & 435 + \textbf{8N}\\ 
          \textbf{BINAS$^*$(40)} & \textbf{40} & \textbf{76.5} & 435 + 15N \\ \hline
         
          MobileNetV3 & 45 & 75.2 & 180N \\ 
          FBNet & 47 & 75.7 & 576N \\ 
          MnasNetA1 & 55 & 75.2 & 40,000N \\ 
        
          HardCoRe B & 44 & 76.4 & 400 + 15N\\ 
          \textbf{BINAS(45)} & \textbf{45} & \textbf{76.5} & 435 + \textbf{8N}\\ 
          \textbf{BINAS$^*$(45)} & \textbf{45} & \textbf{77.0} & 435 + 15\\ \hline
         
          MobileNetV2 & 70 & 76.5 & 150N \\ 
          TFNAS-A & 60 & 76.5 & 263N \\ 
          HardCoRe C & \textbf{50} & 77.1 & 400 + 15N \\ 
          \textbf{BINAS(50)} & \textbf{50} & 76.8 & 435 + \textbf{8N} \\ 
          \textbf{BINAS$^*$(50)} & \textbf{50} & \textbf{77.6} & 435 + 15N \\ \hline
         
          EfficientNetB0 & 85 & 77.3 &  \\ 
          HardCoRe D & 55 & 77.6 &  400 + 15N\\
          \textbf{BINAS(55)} & \textbf{55} & 77.7 &  435 + \textbf{8N}\\ 
          \textbf{BINAS$^*$(55)} & \textbf{55} & \textbf{77.8} &  435 + 15N\\ \hline
         
          FairNAS-C & 60 & 77.0 & 240N \\ 
          HardCoRe E & 61 & \textbf{78.0} & 400 + 15N\\ 
          \textbf{BINAS(60)} & \textbf{59} & 77.8 & 435 + \textbf{8N}\\ 
          \textbf{BINAS$^*$(60)} & \textbf{59} & \textbf{78.0} & 435 + 15N \\ \hline
    \end{tabular}
    &
    \begin{tabular}{|l|c|c|}
    \hline 
    Model & \makecell[c]{Latency \\ (ms)} & \makecell[c]{Top-1 \\ (\%)} 
    \\ \toprule
          MobileNetV3 & 28 & 75.2  \\ 
          TFNAS-D & 30 & 74.2   \\ 
          HardCoRe A & 27 & 75.7  \\
          \textbf{BINAS(25)} & \textbf{26} & \textbf{76.1} \\
          \textbf{BINAS$^*$(25)} & \textbf{26} & \textbf{76.6} \\
         \hline
         
          MnasNetA1 & 37 & 75.2   \\
          MnasNetB1 & 34 & 74.5   \\ 
          FBNet & 41 & 75.7   \\ 
          SPNASNet & 36 & 74.9  \\ 
          TFNAS-B &  44 & 76.3   \\ 
          TFNAS-C & 37 & 75.2   \\ 
          HardCoRe B & 32 & \textbf{77.3} \\
          \textbf{BINAS(30)} & \textbf{31} & 76.8  \\ 
          \textbf{BINAS$^*$(30)} & \textbf{31} & 77.2  \\ 
          \hline
         
          TFNAS-A & 54 & 76.9   \\ 
          EfficientNetB0 & 48 & 77.3   \\ 
          MobileNetV2 & 50 & 76.5  \\ 
          HardCoRe C & 41 & \textbf{77.9} \\
          \textbf{BINAS(40)} & \textbf{40} & 77.6 \\ 
          \textbf{BINAS$^*$(40)} & \textbf{40} & 77.7  \\ \hline
    \end{tabular}
    \end{tabular}
    \caption{ImageNet top-1 accuracy, latency and cost comparison with other methods. The total cost stands for the search and training cost of N networks. Latency is reported for (Left) Intel Xeon CPU and (Right) NVIDIA P100 GPU with a batch size of 1 and 64 respectively.}
    \label{tab:exp}
    \vspace{-5mm}
\end{table}

\vspace{-5mm}
\section{Empirical Analysis of Key Components}
In this section we analyze and discuss different aspects of the proposed method.

\vspace{-3mm}
\subsection{The Contribution of Different Terms of the Accuracy Estimator} \label{sec:terms_contrib}
\begin{wrapfigure}{R}{0.5\textwidth}
\vspace{-3.5mm}
  \begin{center}
      \begin{tabular}{|c||c|c|}
    \hline
    Variant & Kendall-Tau & Spearman
    \\
    \hline
    $q_\beta\equiv0$ & 0.29 & 0.42 
    \\
    $Q_{\alpha\beta}\equiv0$ & 0.66 & 0.85 
    \\ \hline
    $ACC(\balpha,\bbeta)$ & 0.84 & 0.97 
    \\\hline
    \end{tabular}
  \end{center}
  \vspace{-3mm}
    \captionof{table}{Contribution of terms.}
    \label{tab:contrib_ablation}
\end{wrapfigure}
The accuracy estimator in \eqref{eqn:estimator} aggregates the contributions of multiple architectural decisions. In section~\ref{sec:quad_estimator}, those decisions are grouped into two groups: (1) macroscopic decisions about the depth of each stage are expressed by $q_\beta$ and (2) microscopic decisions about the configuration of each block are expressed by $Q_{\alpha\beta}$. 
Table~\ref{tab:contrib_ablation} quantifies the contribution of each of those terms to the ranking correlations by setting the corresponding terms to zero. We conclude that the depth of the network is very significant for estimating the accuracy of architectures, as setting $q_\beta$ to zero specifically decreases the Kendall-Tau and Spearman's correlation coefficients from 
$0.84$ and $0.97$ to $0.29$ and $0.42$ respectively. 
The significance of microscopic decisions about the configuration of blocks is also viable but not as much, as setting $Q_{\alpha\beta}$ to zero decreases the Kendall-Tau and Spearman's correlation to 
$0.66$ and $0.85$ respectively.

\vspace{-3mm}
\subsection{Comparison to Learning the Accuracy Predictors}\label{sec:learnt_predictors}
While the purpose of this work is not to compare many accuracy predictors, as this has been already done comprehensively by \cite{white2021powerful}, comparisons to certain learnt predictors support the validity and benefits of the proposed accuracy estimator (section~\ref{sec:quad_estimator}) despite its simple functional form. 
Hence each of the following comparisons is chosen for a reason:
(1) Showing that it is more sample efficient than learning the parameters of a bilinear predictor of the same functional form. (2) Comparing to a quadratic predictor shows that reducing the functional form to bilinear by omitting the interactions between microscopic decisions ($\balpha$ parameters) to each other and of macroscopic decisions ($\bbeta$ parameters) to each other does not cause much degradation. (3) Comparing to a parameter heavy MLP predictor shows that the simple bilinear parametric form does not lack the expressive power required for properly ranking architectures. 

\vspace{-2mm}
\subsubsection{Learning Quadratic Accuracy Predictors}\label{sec:learnt_quad_predictors}
One can wonder whether setting the coefficients $Q_{\alpha\beta}$, $q_\beta$ and $r$ of the bilinear form in section~\ref{sec:quad_estimator} according to the estimates in section~\ref{sec:acc_contrib} yields the best predictions of the accuracy of architectures. 
An alternative approach is to learn those coefficients by solving a linear regression:
\begin{align} \label{eqn:regression_bcquad}
    \min_{\tilde{r},\tilde{q}_\alpha,\tilde{q}_\beta,\tilde{Q}_{\alpha\beta}}\sum_{i=1}^n
 ||\mathcal{B}_{\tilde{r},\tilde{q}_\alpha,\tilde{q}_\beta,\tilde{Q}_{\alpha\beta}}(\balpha_i,\bbeta_i)- Acc(\balpha_i, \bbeta_i)||_2^2
 \\
\mathcal{B}_{\tilde{r},\tilde{q}_\alpha,\tilde{q}_\beta,\tilde{Q}_{\alpha\beta}}(\balpha_i,\beta_i) =
\tilde{r} + \balpha_i^T \tilde{q}_\alpha+\bbeta_i^T \tilde{q}_\beta+\balpha_i^T \tilde{Q}_{\alpha\beta} \bbeta_i 
\end{align}
where $\{\balpha_i,\bbeta_i\}_{i=1}^n$ and $Acc(\balpha_i, \bbeta_i)$ represent $n$ uniformly sampled subnetworks and their measured accuracy, respectively. 

One can further unlock the full capacity of a quadratic predictor by coupling of all components and solving the following linear regression problem:
\begin{align*} 
    \min_{\tilde{r},\tilde{q}_\alpha,\tilde{q}_\beta,\tilde{Q}_{\alpha\beta},\tilde{Q}_{\alpha},\tilde{Q}_{\beta}}\sum_{i=1}^n
 ||\mathcal{Q}_{\tilde{r},\tilde{q}_\alpha,\tilde{q}_\beta,\tilde{Q}_{\alpha\beta},\tilde{Q}_{\alpha},\tilde{Q}_{\beta}}&(\balpha_i,\bbeta_i) - Acc(\balpha_i, \bbeta_i)||_2^2
 \end{align*}
\begin{align}
\label{eqn:regression_quad}
\mathcal{Q}_{\tilde{r},\tilde{q}_\alpha,\tilde{q}_\beta,\tilde{Q}_{\alpha\beta},\tilde{Q}_{\alpha},\tilde{Q}_{\beta}}(\balpha_i,\bbeta_i)
=
\mathcal{B}_{\tilde{r},\tilde{q}_\alpha,\tilde{q}_\beta,\tilde{Q}_{\alpha\beta}}(\balpha_i,\beta_i)  + \balpha_i^T \tilde{Q}_{\alpha} \balpha_i + \bbeta_i^T \tilde{Q}_{\beta} \bbeta_i 
 \end{align}
 
A closed form solution to 
these problems 
is derived in appendix~\ref{apdx:closed_form_sol}. While effective, this solution requires avoiding memory issues associated with inverting $N^2\times N^2$ matrix and also reducing overfitting by tuning regularization effects over train-val splits of the data points. 
The data points for training all the accuracy predictors is composed of subnetworks uniformly sampled from the supernetwork and their corresponding validation accuracy is measured over the same 20\% of the Imagenet train set split used in section~\ref{sec:acc_contrib}. 

Figure~\ref{fig:trio} (Left) presents the Kendall-Tau ranking correlation coefficients and mean square error (MSE), measured over 500 test data points generated uniformly at random in the same way, of different accuracy predictors versus the number of data points corresponding to the number of epochs of the validation set required for obtaining their parameters. 
It is noticable that the simple bilinear accuracy estimator (section~\ref{sec:quad_estimator}) is more sample efficient, as its parameters are efficiently estimated according to section~\ref{sec:acc_contrib} rather than learned. 


\vspace{-3mm}
\subsubsection{Beyond Quadratic Accuracy Predictors}\label{sec:beyond_quad}

The reader might question the expressiveness of a simple bilinear parametric form and its ability to capture the complexity of architectures. 
To alleviate such concerns we show in Figure~\ref{fig:trio} (Left) 
that the proposed bilinear estimator of section~\ref{sec:quad_estimator} matches the performance of the commonly used parameters heavy Multi-Layer-Perceptron (MLP) accuracy predictor~\cite{OFA,lu2020nsganetv2}. Moreover, the MLP predictor is more complex and requires extensive hyperparameter tuning, e.g., of the depth, width, learning rate and its scheduling, weight decay, optimizer etc. It is also less efficient, lacks interpretability, and of limited utility as an objective function for NAS (Section~\ref{sec:solvers}).



\begin{figure*}[htb]
 \begin{minipage}[t]{0.36\textwidth}
     \begin{figure}[H]
\centering
\begin{adjustbox}{width=\textwidth}
\begin{tikzpicture}
\begin{axis}[
            axis x line=left,
            axis y line=right,
            xmajorgrids=true,
            ymajorgrids=true,
            grid=both,
            xlabel style={below=1ex},
            enlarge x limits,
            ymin = 0.04,
            ymax = 0.3,
            xmin = 256.0,
            xmax = 6400.0,
            xtick = {256,1280,2560,3840,6400},
            ytick = {0.05,0.1,...,0.3},
            ylabel = Mean Square Error (dashed),
            xlabel = ,
            legend style={nodes={scale=0.8, transform shape}},
    ]

\addplot[style=dashed, 
color=green, mark=triangle*,semithick]coordinates {(256, 0.21)(1280, 0.09)(2560, 0.08)(3840, 0.07)(6400, 0.06)
};

\addplot[style=dashed, 
color=blue, mark=triangle*,semithick]coordinates {(256, 0.28)(1280, 0.1)(2560, 0.08)(3840, 0.06)(6400, 0.085)
};

\addplot[style=dashed, 
color=orange, mark=triangle*,semithick]coordinates {(256, 0.24)(1280, 0.09)(2560, 0.06)(3840, 0.06)(6400, 0.05)
};

\addplot[style=dashed, 
color=red, mark=triangle*,semithick]coordinates {(256, 0.81)(1280, 0.64)(2560, 0.07)(3840, 0.07)(6400, 0.07)};

\node[color=black] at (4096,0.09) {$R^2\sim 0.9$};
\draw[<->, black, thick] (2560,0.1) to (6400, 0.1);
\end{axis}

\begin{axis}[
            axis x line=left,
            axis y line=left,
            xmajorgrids=true,
            ymajorgrids=true,
            grid=both,
            xlabel style={below=1ex},
            enlarge x limits,
            ymin = 0.60,
            ymax = 0.87,
            xmin = 256.0,
            xmax = 6400.0,
            xtick = {256,1280,2560,3840,6400},
            ytick = {0.6,0.65,0.7,...,0.85},
            ylabel = Kendall-Tau Coefficient (solid),
            xlabel = Validation Epochs (Number of Samples),
            legend style={nodes={scale=0.8, transform shape}, 
            at={(0.7,0.7)}, anchor=north,legend cell align=left, 
            }
    ]

\addplot[
color=green, mark=triangle*,semithick]coordinates {(256, 0.7)(1280, 0.81)(2560, 0.82)(3840, 0.83)(6400, 0.85)
};\addlegendentry{Bilinear Predictor}

\addplot[
color=blue, mark=triangle*,semithick]coordinates {(256, 0.71)(1280, 0.81)(2560, 0.83)(3840, 0.85)(6400, 0.86)
};\addlegendentry{Quadratic Predictor}

\addplot[
color=orange, mark=triangle*,semithick]coordinates {(256, 0.66)(1280, 0.81)(2560, 0.83)(3840, 0.84)(6400, 0.84)
};\addlegendentry{MLP Predictor}


\addplot[
color=red, mark=triangle*,ultra thick]coordinates {(256, 0.75)(1280, 0.81)(2560, 0.84)(3840, 0.84)(6400, 0.84)};\addlegendentry{Bilinear Estimator}

\end{axis}

\end{tikzpicture}
\end{adjustbox}
\label{fig:kt_vs_samples}
\end{figure} 
  \end{minipage}
  \hfill
  \begin{minipage}[t]{0.31\textwidth}
      \begin{figure}[H]
\centering
\begin{adjustbox}{width=1\textwidth}
\begin{tikzpicture}
\begin{axis}[
            axis x line=left,
            axis y line=left,
            xmajorgrids=true,
            ymajorgrids=true,
            grid=both,
            xlabel style={below=1ex},
            enlarge x limits,
            ymin = 74.0,
            ymax = 76.6,
            xmin = 40.0,
            xmax = 90.0,
            xtick = {40,45,...,90},
            ytick = {73.0,73.5,...,79.0},
            ylabel = One-shot Accuracy (\%),
            xlabel = Latency Formula  (milliseconds),
            legend pos=south east,
            legend style={nodes={scale=0.8, transform shape}},
            legend pos=south east,
    ]

\addplot+[color=blue,
  error bars/.cd, 
    y fixed,
    y dir=both, 
    y explicit,
    x fixed,
    x dir=both, 
    x explicit
] table [x=x, y=y, x error=ex, y error=ey, col sep=comma] {
x, y, ex, ey
40.1415625215, 74.5472000254, 0.442107147799, 0.0816392000091
44.4331619143, 75.1764000151, 0.521277782521, 0.0459808571922
49.2570230365, 75.4936000117, 0.306043088234, 0.04703021861
56.0383793712, 75.9184000186, 1.04242471869, 0.0563332935285
59.6068625649, 76.0316000083, 0.672287025719, 0.0405837487535
65.4006982843, 76.1968000073, 2.3893996531, 0.0677831903537
68.5943294565, 76.3372000127, 1.92246258259, 0.0496000087891
74.6927519639, 76.2944000039, 2.42779181687, 0.0391182811074
79.5122796297, 76.3824000098, 0.521778982294, 0.00195960375463
83.7483993173, 76.405600019, 4.5079436476, 0.023371776967
88.2328752677, 76.4444000088, 6.02896559333, 0.047030189905
};\addlegendentry{BCFW}

\addplot+[color=green,
  error bars/.cd, 
    y fixed,
    y dir=both, 
    y explicit,
    x fixed,
    x dir=both, 
    x explicit
] table [x=x, y=y, x error=ex, y error=ey, col sep=comma] {
x, y, ex, ey
39.9446955323, 74.502800022, 0.0474540775655, 0.0519168528642
44.916981558, 75.2988000176, 0.0343261345705, 0.0228595817489
49.9190304677, 75.6820000146, 0.0775377104481, 0.0511937628953
54.7916676601, 75.948400019, 0.234971007873, 0.0299706521229
59.8350801071, 76.1428000005, 0.115177009846, 0.0626941730403
64.8044832548, 76.3500000054, 0.175285977232, 0.0439454243165
69.9057773749, 76.3940000034, 0.0614879574881, 0.0515363899423
74.752805233, 76.4136000068, 0.178644672749, 0.0779579448149
79.8447579145, 76.3784000039, 0.0800718190295, 0.0554458364716
84.6323028207, 76.4464000137, 0.301708172663, 0.0457016441512
89.3507918715, 76.4832000127, 0.379851297189, 0.0484247788618
};\addlegendentry{Evolution}

\addplot+[color=orange,
  error bars/.cd, 
    y fixed,
    y dir=both, 
    y explicit,
    x fixed,
    x dir=both, 
    x explicit
] table [x=x, y=y, x error=ex, y error=ey, col sep=comma] {
x, y, ex, ey
32.4221819639, 65.2200000024, 0.0, 0.0
44.6809334556, 74.8376000034, 0.0840146112177, 0.0268745284624
49.8999319474, 75.5064000146, 0.0458702694263, 0.0739123826472
54.6505713463, 75.9524000156, 0.236731323331, 0.0191582915205
59.8148682714, 76.1356, 0.190107247353, 0.0484049590693
64.742282033, 76.2700000171, 0.0, 0.0
69.2713287473, 76.4140000024, 0.826951111835, 0.0659939465145
74.5630600055, 76.4144000137, 0.148398081462, 0.00079998828125
79.8846612374, 76.4519999951, 0.0, 0.0
84.5536619425, 76.4060000195, 0.0, 0.0
89.0582546592, 76.4880000195, 0.0, 0.0
};\addlegendentry{MIQCP}

\addplot[style=dashed, color=ForestGreen, mark=star,very thick]coordinates {
(40.2,74.2820)
(50.6,75.5660)(58.4,76.0020)(69.5,76.0340)(81.9,76.2860)(88.7,76.3420)
};\addlegendentry{Supernework (HardCoRe)}

\end{axis}
\end{tikzpicture}
\end{adjustbox}
\label{fig:estimator_error_bars}
\end{figure}
  \end{minipage}
  \hfill
  \begin{minipage}[t]{0.31\textwidth}
\begin{figure}[H]
\centering
\begin{adjustbox}{width=1\textwidth}
\begin{tikzpicture}
\begin{axis}[
            axis x line=left,
            axis y line=left,
            xmajorgrids=true,
            ymajorgrids=true,
            grid=both,
            xlabel style={below=1ex},
            enlarge x limits,
            ymin = 75.4,
            ymax = 78.1,
            xmin = 39.0,
            xmax = 60.0,
            xtick = {37,39,...,65},
            ytick = {74.5,75,...,78.2},
            ylabel = Top-1 Accuracy (\%),
            xlabel = Latency  (milliseconds),
            legend pos=south east,
            legend style={nodes={scale=0.8, transform shape}}
    ]

\addplot[color=orange, mark=square*, thick]coordinates {(37,75.9)(41,76.5)(45,77.0)(50,77.6)
(56,77.8)
(59.1,78.0)
};\addlegendentry{BINAS$^*$}

\addplot[color=ForestGreen, mark=square*, thick]coordinates {(37, 75.4)(40,75.9)(44, 76.6)(50, 77.0)(55, 77.4)(61,78.1)};\addlegendentry{HardCoRe}

\end{axis}
\end{tikzpicture}
\end{adjustbox}
\end{figure}
  \end{minipage}
  \vspace{-4mm}
  
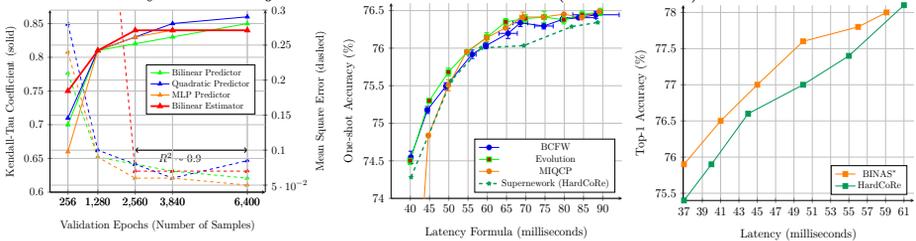
\captionof{figure}{
  (Left) Performance of predictors vs samples. Ours is comparable to complex alternatives and sample efficient. (Middle) Comparing optimizers for solving the IQCQP over 5 seeds. All surpass optimizing the supernetwork (HardCoreNAS) directly. (Right) Surpassing State-of-the-Art. BINAS generates superior models than HardCoReNAS for the same search space, supernetwork weights and marginal cost of 15 GPU hours.
  }
  \label{fig:trio}
\end{figure*}


\subsection{Interpretability of the Accuracy Estimator}\label{sec:interpretability}
Given that the accuracy estimator in section~\ref{sec:quad_estimator} ranks architectures well, as demonstrated in Figure~\ref{fig:scheme} (Right), this accountability together with the way it is constructed bring insights about the contribution of different design choices to the accuracy, as shown in Figure~\ref{fig:insights}. 
\\\textbf{Deepen later stages}: 
In the left figure $\Delta_b^s-\Delta_{b-1}^s$ are presented for $b=3,4$ and $s=1,\dots,5$. This graph shows that increasing the depth of deeper stages is more beneficial than doing so for shallower stages. Showing also the latency cost for adding a block to each stage, we see that there is a strong motivation to make later stages deeper.
\\\textbf{Add width and S\&E to later stages and shallower blocks}: 
In the middle and right figures, $\Delta_{b,c}^s$ are averaged over different configurations and blocks or stages respectively for showing the contribution of microscopic design choices. Those show that increasing the expansion ratio and adding S\&E are more significant in deeper stages and at sooner blocks within each stage. 
\\\textbf{Prefer width and S\&E over bigger kernels}: 
  Increasing the kernel size is relatively less significant and is more effective at intermediate stages.
  \begin{figure*}[htb]
 \begin{minipage}[b]{0.32\textwidth}
     \begin{figure}[H]
\centering
\begin{adjustbox}{width=1\textwidth}
\begin{tikzpicture}
\begin{axis}[
            axis x line=left,
            axis y line=left,
            xmajorgrids=true,
            ymajorgrids=true,
            grid=both,
            xlabel style={below=1ex},
            enlarge x limits,
            ymin = 0.0,
            ymax = 1.0,
            xmin = 1.0,
            xmax = 5.0,
            xtick = {1, 2,..., 5},
            ytick = {0.2, 0.4,..., 1},
            ylabel = One-shot Accuracy Contribution (\%),
            xlabel = Stage Number,
            legend style={nodes={scale=0.8, transform shape},at={(0.7,0.9)},anchor=east},
    ]

\addplot[color=green, mark=diamond*,very thick]coordinates {
(1,0.327805)(2,0.253799)(3,0.319214)(4,0.691796)(5,0.987206)
};\addlegendentry{Adding a third block}

\addplot[color=ForestGreen, mark=diamond*,very thick]coordinates {
(1,0.077003)(2,0.116189)(3,0.113785)(4,0.312606)(5,0.363792)
};\addlegendentry{Adding a forth block}

\end{axis}

\begin{axis}[
            axis x line=left,
            axis y line=right,
            xmajorgrids=true,
            ymajorgrids=true,
            grid=both,
            xlabel style={below=1ex},
            enlarge x limits,
            ymin = 0.0,
            ymax = 6.6,
            xmin = 1.0,
            xmax = 5.0,
            xtick = {1, 2,..., 5},
            ytick = {1, 2,..., 6},
            ylabel = Latency Cost of a Block (ms),
            xlabel = Stage Number,
            legend style={nodes={scale=0.8, transform shape}},
            legend pos=south east,
    ]
    \addplot[style=dashed, color=red, mark=diamond*,very thick]coordinates {
(1,6.5)(2,2.4)(3,1.9)(4,3.0)(5,1.9)
};\addlegendentry{Latency}

\end{axis}
\end{tikzpicture}
\end{adjustbox}
\label{fig:insights_depth_stage}
\end{figure}
  \end{minipage}
  \hfill
  \begin{minipage}[b]{0.32\textwidth}
        \begin{figure}[H]
\centering
\begin{adjustbox}{width=1\textwidth}
\begin{tikzpicture}
\begin{axis}[
            axis x line=left,
            axis y line=left,
            xmajorgrids=true,
            ymajorgrids=true,
            grid=both,
            xlabel style={below=1ex},
            enlarge x limits,
            ymin = 0.0,
            ymax = 0.22,
            xmin = 1.0,
            xmax = 5.0,
            xtick = {1, 2,..., 5},
            ytick = {0.05, 0.1,...,0.22},
            ylabel = One-shot Accuracy Contribution (\%),
            xlabel = Stage Number,
            legend style={nodes={scale=0.8, transform shape}},
            legend pos=north west,
    ]

\addplot[color=orange, mark=triangle*,very thick]coordinates {
(1,0.04170036315917969)(2,0.029999256134033203)(3,0.029224872589111328)(4,0.07246255874633789)(5,0.1118006706237793)
};\addlegendentry{Increase er from 3 to 4}

\addplot[color=RedOrange, mark=diamond*,very thick]coordinates {
(1,0.06004810333251953)(2,0.060211181640625)(3,0.08327674865722656)(4,0.14120054244995117)(5,0.2086467742919922)
};\addlegendentry{Increase er from 4 to 6}

\addplot[color=blue, mark=diamond*,very thick]coordinates {
(1,0.008801142374674479)(2,0.025275230407714844)(3,0.03856658935546875)(4,0.0424502690633138)(5,0.0012760162353515625)
};\addlegendentry{Increase k from 3x3 to 5x5}

\addplot[color=YellowGreen, mark=triangle*,very thick]coordinates {
(1,0.024433135986328125)(2,0.03580697377522787)(3,0.05210049947102864)(4,0.16463375091552734)(5,0.20270792643229166)
};\addlegendentry{Add S\&E}

\end{axis}
\end{tikzpicture}
\end{adjustbox}
\label{fig:insights_stage}
\end{figure}
  \end{minipage}
  \hfill
  \begin{minipage}[b]{0.32\textwidth}
    \begin{figure}[H]
\centering
\begin{adjustbox}{width=1\textwidth}
\begin{tikzpicture}
\begin{axis}[
            axis x line=left,
            axis y line=left,
            xmajorgrids=true,
            ymajorgrids=true,
            grid=both,
            xlabel style={below=1ex},
            enlarge x limits,
            ymin = 0.0,
            ymax = 0.22,
            xmin = 1.0,
            xmax = 4.0,
            xtick = {1, 2,..., 4},
            ytick = {0.05, 0.1,...,0.2},
            ylabel = One-shot Accuracy Contribution (\%),
            xlabel = Block Number,
            legend style={nodes={scale=0.8, transform shape}},
            legend pos=north east,
    ]
\addplot[color=orange, mark=triangle*,very thick]coordinates {
(1,0.0805816650390625)(2,0.09049797058105469)(3,0.04370002746582031)(4,0.013370513916015625)
};

\addplot[color=RedOrange, mark=diamond*,very thick]coordinates {
(1,0.16334991455078124)(2,0.1725605010986328)(3,0.07890777587890624)(4,0.02788848876953125)
};

\addplot[color=blue, mark=diamond*,very thick]coordinates {
(1,0.027814229329427086)(2,0.04351882934570313)(3,0.0037096659342447918)(4,0.018052673339843752)
};

\addplot[color=YellowGreen, mark=triangle*,very thick]coordinates {
(1,0.09007975260416667)(2,0.2050000508626302)(3,0.0879992167154948)(4,0.0006668090820312498)
};

\end{axis}
\end{tikzpicture}
\end{adjustbox}
\label{fig:insights_block}
\end{figure}
  \end{minipage}
  
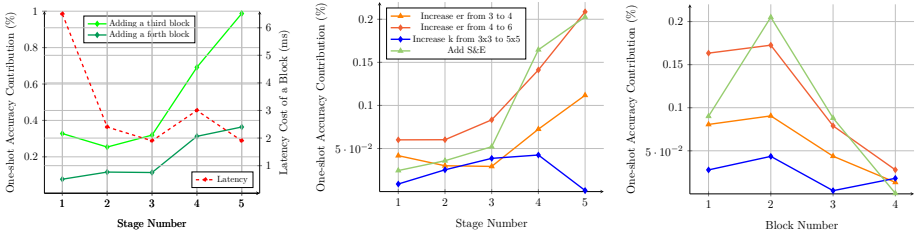
\captionof{figure}{Design choices insights deduced from the accuracy estimator:  The contribution of (Left) depth for different stages, (Middle) expansion ration, kernel size and S\&E for different stages and (Right) for different blocks within a stage.}
  \label{fig:insights}
\end{figure*}

\subsection{Comparison of Optimization Algorithms}\label{sec:optimizers_compare}
Formulating the NAS problem as IQCQP affords the utilization of a variety of optimization algorithms. Figure~\ref{fig:trio} (Middle) compares the one-shot accuracy and latency of networks generated by utilizing the algorithms suggested in section~\ref{sec:solvers} for solving problem~\ref{eqn:NAS_QCQP} with the bilinear estimator introduced in section~\ref{sec:quad_estimator} serving as the objective function.
Error bars for both accuracy and latency are presented for 5 different seeds. All algorithms satisfy the latency constraints up to a reasonable error of less than $10\%$. While all of them surpass the performance of BCSFW~\cite{nayman2021hardcore}, given as reference, BCFW is superior at low latency, evolutionary search does well over all and MIQCP is superior at high latency. Hence, for practical purposes we  apply the three of them for search and take the best one, with negligible computational cost of less than three CPU minutes overall.
\vspace{-2mm}
\section{Conclusion} \label{Conclusion}
\vspace{-2mm}
The problem of resource-aware NAS is formulated as an IQCQP optimization problem. Bilinear constraints express resource requirements and a bilinear accuracy estimator serves as the objective function. This estimator is constructed by measuring the individual contribution of design choices, which makes it intuitive and interpretable. Indeed, its interpretability brings several insights and design rules. Its performance is comparable to complex predictors that are more expensive to acquire and harder to optimize. Efficient optimization algorithms are proposed for solving the resulted IQCQP problem. BINAS is a faster search method, scalable to many devices and requirements, while generating comparable or better architectures than those of other state-of-the-art NAS methods.


\clearpage
%
%
\bibliographystyle{splncs04}
\renewcommand{\doi}[1]{\url{https://doi.org/#1}}
\bibliography{references}

\newpage
\begin{center}
    \huge
    Appendix
\end{center}
\appendix




\section{An Overview of the Method and Computational Costs}\label{apdx:overview}
Figure~\ref{fig:iqnas_overview} presents and overview scheme of the method:
\begin{figure}[htb]
    \centering
        \includegraphics[width=0.85\textwidth]{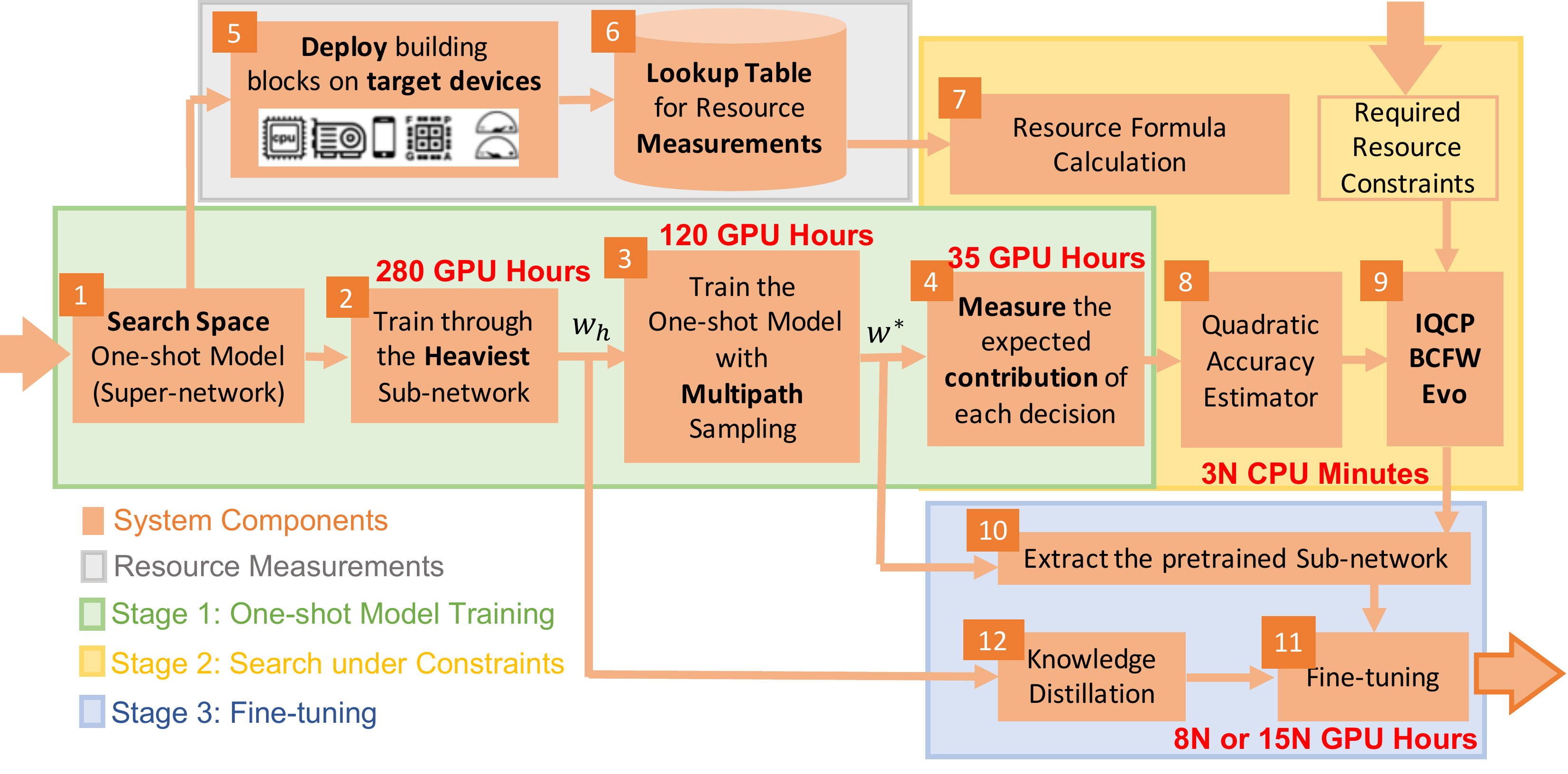}
    \caption{An Overview scheme of the IQNAS method with computational costs}
    \label{fig:iqnas_overview}
    \vspace{5mm}
\end{figure}

The search space, latency measurements and formula, supernetwork training and fine-tuning blocks (1,2,3,5,6,7,10,11,12) are identical to those introduced in HardCoRe-NAS:

We first train for 250 epochs a one-shot model $w_h$ using the heaviest possible configuration, i.e., a depth of $4$ for all stages, with $er=6, k=5\times 5, se=on$ for all the blocks. Next, to obtain $w^*$, for additional 100 epochs of fine-tuning $w_h$ over 80\% of a 80-20 random split of the ImageNet train set~\cite{imagenet_cvpr09}. The training settings are specified in appendix~\ref{apdx:reproduce}. 
The first 250 epochs took 280 GPU hours and the additional 100 fine-tuning epochs took 120 GPU hours, both Running with a batch size of 200 on 8$\times$NVIDIA V100, summing to a total of 400 hours on NVIDIA V100 GPU to obtain both $w_h$ and $w^*$.

A significant benefit of this training scheme is that it also shortens the generation of trained models. The common approach of most NAS methods is to re-train the extracted sub-networks from scratch.
Instead, we follow HardCoRe-NAS and leverage having two sets of weights: $w_h$ and $w^*$. Instead of retraining the generated sub-networks from a random initialization we opt for fine-tuning $w^*$ guided by knowledge distillation~\cite{KD} from the heaviest model $w_h$.
Empirically, as shown in figure~\ref{fig:acc_vs_lat} by comparing the dashed red line with the solid one, we observe that this surpasses the accuracy obtained when training from scratch at a fraction of the time: 8 GPU hours for each generated network.

The key differences from HardCoRe-NAS reside in the latency constrained search blocks (4,8,9 of figure~\ref{fig:iqnas_overview})
that have to do with constructing the quadratic accuracy estimator of section~\ref{sec:quad_estimator} and solving the IQCQP problem. The later requires only several minutes on CPU for each generated netowork (compared to 7 GPU hours of HardCoRe-NAS), while the former requires to measure the individual accuracy contribution of each decision. Running a validation epoch to estimate $\mathbb{E}[Acc]$ and also for each decision out of all $255$ entries in the vector $\zeta$ to obtain $\Delta_{b,c}^s$ and $\Delta_b^s$ requires 256 validation epochs in total that last for 3.5 GPU hours. Figure~\ref{fig:trio} (Left) shows that reducing the variance by taking 10 validation epochs per measurement is beneficial. Thus a total of 2560 validation epochs requires 35 GPU hours only once.

Overall, we are able to generate a trained model within a small marginal cost of 8 GPU hours. The total cost for generating $N$ trained models is $435 + 8N$, much lower than the $1200 + 25N$ reported by OFA~\cite{OFA} and more scalable compared to the $400+15N$ reported by HardCoRe-NAS. See Table~\ref{tab:exp}. The reduced search cost frees more compute that can be utilized for a longer fine-tuning of 15 GPU hours to surpass HardCoRe-NAS with the same marginal cost of $15N$. 
This makes our method scalable for many devices and latency requirements. 

\section{More Specifications of the Search Space}\label{apdx:search_space}
Inspired by EfficientNet~\cite{effnet} and TF-NAS~\cite{TF-NAS}, HardCoRe-NAS~\cite{nayman2021hardcore} builds a layer-wise search space that we utilize, as explained in Section~\ref{sec:search_space} and detailed in 
Table~\ref{tab:search_space_specs}.
The input shapes and the channel numbers are the same as EfficientNetB0. Similarly to TF-NAS and differently from EfficientNet-B0, we use ReLU in the first three stages. As specified in Section~\ref{sec:search_space}, the ElasticMBInvRes block is the \textit{elastic} version of the MBInvRes block as in HardCoRe-NAS, introduced in~\cite{mobilenetv2}. Those blocks of stages 3 to 8 are to be searched for, while the rest are fixed.
~
\\
~
\begin{table}[h]
\begin{subtable}[t]{0.7\textwidth}
\centering
\begin{tabular}{c|c|c|c|c|c} \hline
  Stage & Input & Operation & $C_{out}$ & Act & b \\
  \hline
  1  & $224^2\times3$  & $3\times3$ Conv & 32 & ReLU & 1 \\
  2  & $112^3\times32$ & MBInvRes & 16 & ReLU & 1 \\
  3  & $112^2\times16$ & ElasticMBInvRes & 24  & ReLU  & $[2, 4]$ \\
  4  & $56^2\times24$  & ElasticMBInvRes & 40  & Swish & $[2, 4]$ \\
  5  & $28^2\times40$  & ElasticMBInvRes & 80  & Swish & $[2, 4]$ \\
  6  & $14^2\times80$  & ElasticMBInvRes & 112 & Swish & $[2, 4]$ \\
  7  & $14^2\times112$ & ElasticMBInvRes & 192 & Swish & $[2, 4]$ \\
  8  & $7^2\times192$  & ElasticMBInvRes & 960 & Swish & 1 \\
  9  & $7^2\times960$  & $1\times1$ Conv & 1280 & Swish & 1 \\
  10 & $7^2\times1280$ & AvgPool & 1280 & - & 1 \\
  11 & $1280$ & Fc & 1000 & - & 1 \\
  \hline
  \end{tabular}
  \caption{Macro architecture of the one-shot model.}
  \label{tab:search_space}
  \end{subtable}
  \hfill
\begin{subtable}[t]{0.25\textwidth}
    \centering
    \begin{tabular}{|c||c|c|c|}
    \hline
    c & er & k & se \\
    \hline
    1 & 2 & $3\times 3$ & off \\
    2 & 2 & $3\times 3$ & on \\
    3 & 2 & $5\times 5$ & off \\
    4 & 2 & $5\times 5$ & on \\
    5 & 3 & $3\times 3$ & off \\
    6 & 3 & $3\times 3$ & on \\
    7 & 3 & $5\times 5$ & off \\
    8 & 3 & $5\times 5$ & on \\
    9 & 6 & $3\times 3$ & off \\
    10 & 6 & $3\times 3$ & on \\
    11 & 6 & $5\times 5$ & off \\
    12 & 6 & $5\times 5$ & on \\
    \hline
    \end{tabular}
    \caption{Configurations.}
    \label{tab:configurations}
    \end{subtable}
    \vspace{1.5mm}
    \caption{Search space specifications and indexing. "MBInvRes" is the basic block in~\cite{mobilenetv2}. "ElasticMBInvRes" denotes the \textit{elastic} blocks (Section~\ref{sec:search_space}) to be searched for. "$C_{out}$" stands for the output channels. Act denotes the activation function used in a stage. "b" is the number of blocks in a stage, where $[\underline{b}, \bar{b}]$ is a discrete interval. If necessary, the down-sampling occurs at the first block of a stage. 
    "er" stands for the expansion ratio of the point-wise convolutions, "k" stands for the kernel size of the depth-wise separable convolutions and "se" stands for Squeeze-and-Excitation (SE) with $on$ and $off$ denoting with and without SE respectively. The configurations are indexed according to their expected latency.}
    \label{tab:search_space_specs}
\end{table}

\section{Reproducibility and Experimental Setting}
\label{apdx:reproduce}
In all our experiments we train the networks using SGD with a learning rate of $0.1$, cosine annealing, Nesterov momentum of $0.9$, weight decay of $10^{-4}$, applying label smoothing \cite{label_smoothing} of 0.1, cutout, Autoaugment~\cite{autoaugment}, mixed precision and EMA-smoothing.

The supernetwork is trained following~\cite{nayman2021hardcore} over 80\% of a random 80-20 split of the ImageNet train set. We utilize the remaining 20\% as a validation set for collecting data to obtain the accuracy predictors and for architecture search with latencies of $40,45,50,\dots,60$ and $25, 30, 40$ milliseconds running with a batch size of 1 and 64 on an Intel Xeon CPU and and NVIDIA P100 GPU, respectively.  

The evolutionary search implementation is adapted from \cite{OFA} with a population size of $100$, mutation probability of $0.1$, parent ratio of $0.25$ and mutation ratio of $0.5$. It runs for $500$ iterations, while the   the BCFW runs for $2000$ iterations and its projection step for $1000$ iterations. The MIQCP solver runs up to $100$ seconds with CPLEX default settings.

\section{Proof of Theorem \ref{theo:estimator}}\label{apdx:proof_estimator}
\begin{theo}\label{theo:aux}
Consider $n$ independent random variables $\{X_i\}_{i\in[1,2\ldots,n]}$  conditionally independent with another random variable $A$. Suppose in addition that there exists a positive real number $0 < \epsilon\ll 1$ such that for any given $X_i$, the following term is bounded by above:
$$
\left|\frac{\prob{A=a|X_i=x_i}}{\prob{A=a}}-1\right|<\epsilon .
$$
Then we have that:
\begin{align*}
\expec{A|X_1=x_1,\ldots, X_n=x_n} = \expec{A} + \sum_i \left(\expec{A|X_i=x_i}-\expec{A}\right)(1 +O(n\epsilon))
\label{eqn:main_bound}
\end{align*}

\end{theo}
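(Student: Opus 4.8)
The plan is to compute the posterior $\prob{A=a \mid X_1=x_1,\ldots,X_n=x_n}$ in closed form, extract its first-order dependence on the single-variable posteriors $\prob{A=a\mid X_i=x_i}$, and only then pass to the expectation over $A$. Throughout I read the hypothesis as saying the $X_i$ are mutually conditionally independent given $A$, which is what licenses the factorization below.

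First I would apply Bayes' theorem together with this conditional independence to write
\begin{align*}
\prob{A=a\mid X_1=x_1,\ldots,X_n=x_n}
= \frac{\prob{A=a}\,\prod_{i}\prob{X_i=x_i\mid A=a}}{\prob{X_1=x_1,\ldots,X_n=x_n}} .
\end{align*}
Rewriting each factor $\prob{X_i=x_i\mid A=a}$ by Bayes again, every $a$-independent quantity (the joint $\prob{X_1=x_1,\ldots,X_n=x_n}$ and the marginals $\prob{X_i=x_i}$) collapses into a single constant that is pinned down by the normalization $\sum_a\prob{\cdots}=1$. Introducing the perturbations $\eta_i(a):=\prob{A=a\mid X_i=x_i}/\prob{A=a}-1$, which satisfy $|\eta_i(a)|<\epsilon$ by hypothesis, this yields the clean form
\begin{align*}
\prob{A=a\mid X_1=x_1,\ldots,X_n=x_n}
= \frac{\prob{A=a}\prod_i\bigl(1+\eta_i(a)\bigr)}{\sum_{a'}\prob{A=a'}\prod_i\bigl(1+\eta_i(a')\bigr)} .
\end{align*}

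Next I would expand the products as $\prod_i(1+\eta_i(a))=1+\sum_i\eta_i(a)+O(n^2\epsilon^2)$. The crucial observation that makes the argument work is the zero-mean identity $\expec{\eta_i(A)}=\sum_a\bigl(\prob{A=a\mid X_i=x_i}-\prob{A=a}\bigr)=0$, since both terms sum to one; consequently the normalizing denominator is $1+\sum_i\expec{\eta_i(A)}+O(n^2\epsilon^2)=1+O(n^2\epsilon^2)$, so normalization contributes only a second-order correction. Dividing and using $\prob{A=a}\,\eta_i(a)=\prob{A=a\mid X_i=x_i}-\prob{A=a}$ gives
\begin{align*}
\prob{A=a\mid X_1=x_1,\ldots,X_n=x_n}
= \prob{A=a} + \sum_i\bigl(\prob{A=a\mid X_i=x_i}-\prob{A=a}\bigr) + O(n^2\epsilon^2) .
\end{align*}

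Finally I would multiply by $a$ and sum over the support of $A$, so that linearity turns each probability into the matching conditional expectation and delivers
\[
\expec{A\mid X_1=x_1,\ldots,X_n=x_n}
= \expec{A} + \sum_i\bigl(\expec{A\mid X_i=x_i}-\expec{A}\bigr) + O(n^2\epsilon^2),
\]
which is the claimed identity: each summand $\expec{A\mid X_i=x_i}-\expec{A}$ is itself $O(\epsilon)$, so the residual $O(n^2\epsilon^2)$ is exactly $(1+O(n\epsilon))$ times the leading $O(n\epsilon)$ sum. The main obstacle is the error bookkeeping in the two expansions: one must confirm that the zero-mean identity genuinely removes the potential first-order $O(n\epsilon)$ error coming from normalization, and that $\prod_i(1+\eta_i(a))=1+\sum_i\eta_i(a)+O(n^2\epsilon^2)$ holds \emph{uniformly in} $a$ rather than hiding an exponential-in-$n$ remainder. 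The cleanest way to secure the latter is to pass through $\log\prod_i(1+\eta_i(a))=\sum_i\eta_i(a)+O(n\epsilon^2)$, which keeps the whole argument valid precisely in the regime where $n\epsilon$ stays bounded.
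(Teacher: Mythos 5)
Your core computation is correct, but it takes a genuinely different route from the paper's. The paper first proves the two-variable case: using Bayes' rule together with \emph{both} the marginal independence of $X,Y$ and their conditional independence given $A$, it derives the exact identity $\prob{A=a|X=x,Y=y}=\prob{A=a|X=x}\prob{A=a|Y=y}/\prob{A=a}$, expands each factor as $\prob{A=a}(1+\epsilon_x)(1+\epsilon_y)$, integrates over $a$ to obtain the additive $O(\epsilon^2)$ two-variable statement, and then handles general $n$ by induction: writing $u_n=\expec{A|X_1=x_1,\ldots,X_n=x_n}-\expec{A}$ and $v_n=\expec{A|X_n=x_n}-\expec{A}$, it asserts $|u_n-u_{n-1}-v_n|<\epsilon u_{n-1}$ and accumulates this into the sandwich $(1-\epsilon)\sum_i v_i < u_n < (1+\epsilon)^n\sum_i v_i$, whence the $(1+O(n\epsilon))$ factor. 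You instead treat all $n$ variables at once: the posterior is $\prob{A=a}\prod_i(1+\eta_i(a))$ up to a normalizer, and your key observation --- the zero-mean identity $\expec{\eta_i(A)}=0$ --- shows the normalizer is $1+O(n^2\epsilon^2)$, so normalization only enters at second order. This buys you two things the paper does not have: you never need marginal independence of the $X_i$ (the normalizer absorbs the joint $\prob{X_1=x_1,\ldots,X_n=x_n}$, so only conditional independence given $A$ is used), and you obtain an explicit, uniform additive remainder $O(n^2\epsilon^2)$ rather than an induction with loose error bookkeeping.

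One step in your write-up does not hold as stated, however: an additive remainder $O(n^2\epsilon^2)$ cannot in general be repackaged as a multiplicative factor $(1+O(n\epsilon))$ on $\sum_i\left(\expec{A|X_i=x_i}-\expec{A}\right)$, because that sum can vanish, or be arbitrarily small compared to $n\epsilon$, by cancellation, while the joint conditioning still shifts the expectation at second order (each $\eta_i$ can be orthogonal to $a\,\prob{A=a}$ while the pairwise products $\eta_i\eta_j$ are not). So your closing sentence ("the residual is exactly $(1+O(n\epsilon))$ times the leading sum") is not a valid inference. To be fair, this is a defect of the theorem's multiplicative formulation rather than of your route: the paper's own induction bound $|u_n-u_{n-1}-v_n|<\epsilon u_{n-1}$ suffers from the same cancellation problem (nothing prevents $u_{n-1}$ from vanishing while the left-hand side does not), so its sandwich is equally unjustified in that regime. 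If you simply state your conclusion additively, namely $\expec{A|X_1=x_1,\ldots,X_n=x_n}=\expec{A}+\sum_i\left(\expec{A|X_i=x_i}-\expec{A}\right)+O(n^2\epsilon^2)$ for bounded $A$ and bounded $n\epsilon$, your argument is complete and in fact tighter and cleaner than the paper's.
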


\begin{proof}
We consider two independent random variables $X, Y$ that are also conditionally independent with another random variable $A$. 
Our purpose is to approximate the following conditional expectation:
$$
\mathbb{E}\left[A|X=x, Y=y\right].
$$
We start by writing :
$$
\prob{A=a|X=x, Y=y}=\frac{\prob{X=x, Y=y|A=a} \prob{A=a}}{\prob{X=x, Y=y}}
$$
Assuming the conditional independence, that is
$$
\prob{X=x, Y=y|A=a} = \prob{X=x|A=a}\prob{Y=y|A=a}, 
$$
we have
\begin{equation}
\begin{disarray}{lll}
\prob{A=a|X=x, Y=y}&=&\frac{\prob{X=x|A=a}\prob{Y=y|A=a}\prob{A=a}}{\prob{X=x} \prob{Y=y}}\\
&=&\frac{\prob{A=a|X=x}\prob{A=a|Y=y}}{\prob{A=a}}
\end{disarray}
\label{eqn:eqn1}
\end{equation}
Next, we assume that the impact on $A$ of the knowledge of $X$ is bounded, meaning that there is a positive real number $0 < \epsilon\ll 1$ such that:
$$
\left|\frac{\prob{A=a|X=x}}{\prob{A=a}}-1\right|<\epsilon.
$$
We have:
$$
\begin{disarray}{lll}
\prob{A=a|X=x}&=&\prob{A=a} + \prob{A=a|X=x} - \prob{A=a} \\
&=&\prob{A=a}\left(1+\underbrace{\left(\frac{\prob{A=a|X=x}}{\prob{A=a}}-1\right)}_{\epsilon_x}\right)
\end{disarray}
$$
Using a similar development for $\prob{A=a|Y=y}$ we also have:
$$
\begin{disarray}{lll}
\prob{A=a|Y=y}&=&\prob{A=a}\left(1+\underbrace{\left(\frac{\prob{A=a|Y=y}}{\prob{A=a}}-1\right)}_{\epsilon_y}\right)
\end{disarray}
$$
Plugging the two above equations in (\ref{eqn:eqn1}), we have:
\begin{align*}
&\prob{A=a|X=x, Y=y}=
\frac{\prob{A=a}(1+\epsilon_x)\prob{A=a}(1+\epsilon_y)}{\prob{A=a}} \\
&= \prob{A=a}(1+\epsilon_x)(1+\epsilon_y) \\
&= \prob{A=a}(1+\epsilon_x+\epsilon_y)+O(\epsilon^2)\\
&= \prob{A=a}\left(1+\left(\frac{\prob{A=a|X=x}}{\prob{A=a}}-1\right)+\left(\frac{\prob{A=a|Y=y}}{\prob{A=a}}-1\right)\right) +O(\epsilon^2)\\
&= \prob{A=a}\left(\frac{\prob{A=a|X=x}}{\prob{A=a}}+\frac{\prob{A=a|Y=y}}{\prob{A=a}}-1\right) +O(\epsilon^2)\\
&= \prob{A=a|X=x} + \prob{A=a|Y=y} - \prob{A=a}+O(\epsilon^2)
\end{align*}
Then, integrating over $a$ to get the expectation leads to:
\begin{align*}
\expec{A|X=x, Y=y}=&\int_a a\prob{A=a|X=x, Y=y}da \\
=& \expec{A|X=x} + \expec{A|Y=y} - \expec{A} +O(\epsilon^2) \\
=& \expec{A} + (\expec{A|X=x}- \expec{A}) \\
& + (\expec{A|Y=y} - \expec{A}) +O(\epsilon^2).
\end{align*}
In the case of more than two random variables, $X_1, X_2, \ldots, X_n$, denoting by $u_n = \expec{A|X_1=x_1,\ldots, X_n=x_n} - \expec{A}$, and by $v_n = \expec{A|X_n=x_n}-\expec{A}$, we have
$
\left|u_n - u_{n-1} - v_n \right| < \epsilon u_{n-1}
$
A simple induction shows that:
\begin{align*}
v_n + (1-\epsilon)v_{n-1} +& (1-\epsilon)^2v_{n-2} + \cdots + (1-\epsilon)^nv_{1} 
\\<& u_n < v_n + (1+\epsilon)v_{n-1} + (1+\epsilon)^2v_{n-2} + \cdots + (1+\epsilon)^nv_{1}
\end{align*}
Hence:
$$
(1-\epsilon) \sum v_i < u_n < (1+\epsilon)^n \sum v_i
$$
that shows that

$$
u_n = \left(\sum v_i\right)(1 + O(n\epsilon))
$$

\end{proof}

Now we utilize Theorem~\ref{theo:aux} for proving Theorem~\ref{theo:estimator}.
Consider a one-shot model whose subnetworks' accuracy one wants to estimate: \\$  \mathbb{E}[Acc|\cap_{s=1}^S\cap_{b=1}^D O^s_b, \cap_{s=1}^S d^s]$, 
with $\alpha_{b,c}^s = \mathds{1}_{O^s_b=O_c}$ the one-hot vector specifying the selection of configuration $c$ for block $b$ of stage $s$ and $\beta_b^s=\mathds{1}_{d^s=b}$ the one-hot vector specifying the selection of depth $b$ for stage $s$, such that $(\balpha,\bbeta)\in\mathcal{S}$ with $\mathcal{S}$ specified in \eqref{eqn:integer_search_space} as described in section~\ref{sec:search_space}.

We simplify our problem and assume $\{O^s_b, d^s\}, d^s$ for $s=1,\dots,S$ and $b=1,\dots,D$ are conditionally independent with the accuracy $Acc$.
In our setting, we have 
\begin{align}
    \mathbb{E}
    [Acc 
    \vert 
    \cap_{s=1}^S&\cap_{b=1}^{D} O^s_b ;
    \cap_{s=1}^S d^s  ]
  = \label{eqn:masking}
    \mathbb{E}
    [Acc 
    \vert 
    \cap_{s=1}^S\cap_{b=1}^{d^s} \{O^s_b, d^s\} ;
    \cap_{s=1}^S d^s  ]
    \\ 
    &\approx \notag
    \expec{Acc} 
    \\&+ \label{eqn:utilize_theo_alpha}
    \sum_{s=1}^S\sum_{b=1}^{D} \left(\expec{Acc|O_b^s,d^s}-\expec{Acc}\right)
    \mathds{1}_{b \leq d_s} \left(1+O(N\epsilon)\right)
    \\ 
    &+\sum_{s=1}^S \left(\expec{Acc|d^s}-\expec{Acc}\right)
    \mathds{1}_{b \leq d^s} \left(1+O(N\epsilon)\right)
    \label{eqn:predictor}
\end{align}
where \eqref{eqn:masking} is since the accuracy is independent of blocks that are not participating in the subnetowrk, i.e. with $b>d^s$, and equations \ref{eqn:utilize_theo_alpha} and \ref{eqn:predictor} are by utilizing Theorem~\ref{theo:aux}.

Denote by $b^s$ and $c_b^s$ to be the single non zero entries of $\beta^s$ and $\alpha^s_{b}$ respectively, whose entries are $\beta^s_b$ for $b=1,\dots,D$ and $\alpha^s_{b,c}$ for $c\in\mathcal{C}$ respectively. Hence $\beta^s_b=\mathds{1}_{b=b^s}$ and $\alpha^s_{b,c}=\mathds{1}_{c=c^s_b}$.
Thus we have,
\begin{align} \label{eqn:adapt_beta}
\expec{Acc|d^s=b^s}-\expec{Acc}
&=
\sum_{b=1}^D \mathds{1}_{b=b^s}(\expec{Acc|d^s=b}-\expec{Acc})
\notag\\&=
\sum_{b=1}^D \beta^s_b(\expec{Acc|d^s=b}-\expec{Acc})
=
\sum_{b=1}^D \beta^s_b\Delta^s_b 
\end{align}

Similarly,  
\begin{align}
\expec{Acc|O_b^s=O_{c^s_b},d_s=b}-\expec{Acc}
&=
\sum_{c\in\mathcal{C}} \mathds{1}_{c=c^s_b}(\expec{Acc|O_b^s=O_{c},d_s=b}-\expec{Acc})
\notag\\&=
\sum_{c\in\mathcal{C}} \alpha^s_{b,c}(\expec{Acc|O_b^s=O_{c},d_s=b}-\expec{Acc})
\notag \\&=  \label{eqn:adapt_alpha}
\sum_{c\in\mathcal{C}}\alpha^s_{b,c}\Delta^s_{b,c}
\end{align}
And since effectively $\mathds{1}_{b \leq d^s}=\sum_{b'=b}^{D} \beta^s_{b'}$ we have,
\begin{align}
\left(\expec{Acc|O_b^s=O_{c^s_b},d_s=b}-\expec{Acc}\right)
\mathds{1}_{b \leq d_s}
&=
\sum_{c\in\mathcal{C}}\alpha^s_{b,c}\Delta^s_{b,c}
\cdot  \mathds{1}_{b \leq d_s}
\notag \\ &= \label{eqn:adapt_alpha_final}
\sum_{b'=b}^{D}\sum_{c\in\mathcal{C}} \alpha^s_{b,c}\cdot 
    \Delta_{b,c}^s \cdot  \beta^s_{b'}
\end{align}
Finally by setting equations \ref{eqn:adapt_beta} and \ref{eqn:adapt_alpha_final} into \ref{eqn:utilize_theo_alpha} and \ref{eqn:predictor} respectively, we have,
\begin{align*}
  \mathbb{E}\left[Acc \middle\vert   
    \cap_{s=1}^S\cap_{b=1}^D O^s_b, 
    \cap_{s=1}^S d^s  
  \right]  
     &= \mathbb{E}[Acc] 
    +
    \left(1
    +\mathcal{O}(N\epsilon)\right)\cdot
      \sum_{s=1}^S \sum_{b=1}^D \beta^s_b\cdot 
    \Delta_b^s
\\ \notag &+ 
    \left(1
    +\mathcal{O}(N\epsilon)\right)\cdot
      \sum_{s=1}^S \sum_{b=1}^D \sum_{b'=b}^{D}\sum_{c\in\mathcal{C}} \alpha^s_{b,c}\cdot 
    \Delta_{b,c}^s
    \cdot\beta^s_{b'}
\end{align*}


\section{Deriving a Closed Form Solution for a Linear Regression}\label{apdx:closed_form_sol}
We are given a set $(X,Y)$ of architecture encoding vectors and their accuracy measured on a validation set.

We seek for a quadratic predictor $f$ defined by parameters $\mathbf{Q}\in\mathbb{R}^{n\times n},\mathbf{a}\in\mathbb{R}^n,b\in\mathbb{R}$ such as 
$$
f(\mathbf{x})=\mathbf{x}^T\mathbf{Q}\mathbf{x}+\mathbf{a}^T\mathbf{x}+b
$$

Our purpose being to minimise the MSE over a training-set $X_\text{train}$, we seek to minimize:

\begin{equation}
\min_{\mathbf{Q},\mathbf{a},b} \sum_{(\mathbf{x},\mathbf{y})\in (X_\text{train},Y_\text{train})} \|\mathbf{x}^T\mathbf{Q}\mathbf{x}+\mathbf{x}^T\mathbf{a}+b-\mathbf{y}\|^2
\label{eqn:minreg}
\end{equation}
We also have that 
$$
\mathbf{x}^T\mathbf{Q}\mathbf{x}=\operatorname{trace}\left(\mathbf{Q}\mathbf{x}\mathbf{x}^T\right)
$$
Denoting by $\mathbf{q}$ the column-stacking of $\mathbf{Q}$, the above expression can be expressed as:
$$
\mathbf{x}^T\mathbf{Q}\mathbf{x}=\operatorname{trace}\left(\mathbf{Q}\mathbf{x}\mathbf{x}^T\right)=\mathbf{q}\left(\mathbf{x}\otimes\mathbf{x}\right)
$$
where $\otimes$ denotes the Kronecker product.
Hence, \eqref{eqn:minreg} can be expressed as:
\begin{equation}
\min_{\mathbf{q},\mathbf{a},b} \sum_{(\mathbf{x},y)\in (X_\text{train},Y_\text{train})} \|\left(\mathbf{x},\mathbf{x}\otimes\mathbf{x}\right)^T(\mathbf{a},\mathbf{q})+b-y\|^2.
\label{eqn:minreg2}
\end{equation}
Denoting by $\tilde{\mathbf{x}}=\left(\mathbf{x},\mathbf{x}\otimes\mathbf{x}\right)$ and $\mathbf{v}=(\mathbf{a},\mathbf{q})$, we are led to a simple regression problem:

\begin{equation}
\min_{\mathbf{v},b} \sum_{(\tilde{\mathbf{x}},\mathbf{y})\in (\tilde{X}_\text{train},Y_\text{train})} \|\tilde{\mathbf{x}}^T\mathbf{v}+b-y\|^2.
\label{eqn:minreg3}
\end{equation}
We rewrite the objective function of \eqref{eqn:minreg3} as:
$$
(\tilde{\mathbf{x}}^T\mathbf{v}+b-\mathbf{y})^T(\tilde{\mathbf{x}}^T\mathbf{v}+b-y)=\mathbf{v}^T\tilde{\mathbf{x}}\tilde{\mathbf{x}}^T\mathbf{v}+2(b-y)\tilde{\mathbf{x}}^T\mathbf{v}+(b-y)^2
$$
Stacking the $\tilde{\mathbf{x}}, \mathbf{y}$ in matrices $\tilde{\mathbf{X}}, \mathbf{Y}$, the above expression can be rewritten as:
\begin{align}
\mathbf{v}^T\tilde{\mathbf{X}}\tilde{\mathbf{X}}^T\mathbf{v}+2(b\mathds{1}-\mathbf{Y})^T\tilde{\mathbf{X}}^T\mathbf{v}+(b\mathds{1}-\mathbf{Y})^T(b\mathds{1}-\mathbf{Y})
\label{eqn:minreg4}
\end{align}
Deriving with respect to $b$ leads to:
$2\mathds{1}^T\tilde{\mathbf{X}}^T\mathbf{v}+2nb-2\mathds{1}^T\mathbf{Y}=0$
Hence:
$$
b=\frac{1}{n}\left(\mathds{1}^T(\mathbf{Y}-\tilde{\mathbf{X}}^T\mathbf{v})\right)=\frac{1}{n}\left((\mathbf{Y}-\tilde{\mathbf{X}}^T\mathbf{v})^T\mathds{1}\right)
$$
We hence have
\begin{align*}
(b\mathds{1}-\mathbf{Y})^T(b\mathds{1}-\mathbf{Y}) &= nb^2-2b\mathds{1}^T\mathbf{Y} +\mathbf{Y}^T\mathbf{Y}\\
&=\frac{1}{n}\left(\mathbf{v}^T\tilde{\mathbf{X}}\mathds{1}\mathds{1}^T\tilde{\mathbf{X}}^T\mathbf{v}-2\mathbf{Y}^T\mathds{1}\mathds{1}^T
\tilde{\mathbf{X}}^T\mathbf{v}+2\mathbf{Y}^T\mathds{1}\mathds{1}^T
\tilde{}^T\mathbf{v}\right)\\
&=\frac{1}{n}\mathbf{v}^T\tilde{\mathbf{X}}\mathds{1}\mathds{1}^T\tilde{\mathbf{X}}^T\mathbf{v}
\end{align*}
In addition:
\begin{align*}
(b\mathds{1}-\mathbf{Y})^T\tilde{\mathbf{X}}^T\mathbf{v}&=\frac{1}{n}\left(\mathbf{Y}^T\mathds{1}\mathds{1}^T
\tilde{\mathbf{X}}^T\mathbf{v}-\mathbf{v}^T\tilde{\mathbf{X}}\mathds{1}\mathds{1}^T\tilde{\mathbf{X}}^T\mathbf{v}\right)-\mathbf{Y}^T\tilde{\mathbf{X}}^T\mathbf{v}
\end{align*}
Hence, \eqref{eqn:minreg4} can be rewritten as:
$$
\mathbf{v}^T\tilde{\mathbf{X}}\tilde{\mathbf{X}}^T\mathbf{v}-\frac{1}{n}\mathbf{v}^T\tilde{\mathbf{X}}\mathds{1}\mathds{1}^T\tilde{\mathbf{X}}^T\mathbf{v}+\mathbf{Y}^T\left(\textbf{Id}-\frac{1}{n}\mathds{1}\mathds{1}^T\right)\tilde{\mathbf{X}}^T\mathbf{v}
$$
Denoting by $\mathbf{I}=\left(\textbf{Id}-\frac{1}{n}\mathds{1}\mathds{1}^T\right)$, $\hat{\mathbf{X}}=\mathbf{I}\tilde{\mathbf{X}}^T$ and by $\hat{\mathbf{Y}}=\mathbf{I}\mathbf{Y}$, and noticing that $\mathbf{I}^T\mathbf{I}=\mathbf{I}$, we then have:
$$
\hat{\mathbf{X}}^T\hat{\mathbf{X}}v=\hat{\mathbf{X}}^T\hat{\mathbf{Y}}
$$
To solve this problem we can find am SVD decomposition of $\hat{\mathbf{X}}=\mathbf{U}\mathbf{D}\mathbf{V}^T$, hence:

$$
\mathbf{V}\mathbf{D}^2\mathbf{V}^Tv=\mathbf{V}\mathbf{D}\mathbf{U}^T\hat{\mathbf{Y}}
$$
that leads to:
$$
v=\mathbf{V}\mathbf{D}^{-1}\mathbf{U}^T\hat{\mathbf{Y}}
$$

The general algorithm to find the decomposition is the following:
\begin{algorithm}[htb]
  \caption{Closed Form Solution of a Linear Regression for the Quadratic Predictors}
  \label{alg:plain_regression}
\begin{algorithmic}[1]
\INPUT $\{x_i=(\balpha_i, \bbeta_i)\in\mathbb{R}^n, y_i = Acc(\balpha_i, \bbeta_i)\}_{i=1}^N, k=$ number of principal components
\STATE Compute $\tilde{x}_i= (x_i, x_i\otimes x_i),~ \forall i\in\{1,\dots,N\}$
\STATE Perform a centering on $\tilde{x}_i$ computing $\hat{x}_i = \tilde{x}_i - \operatorname{mean}_{i=1,\dots,N}\left(\tilde{x}_i\right),~ \forall i\in\{1,\dots,N\}$
\STATE Perform a centering on $y_i$ computing $\hat{y}_i = y_i -\operatorname{mean}_{i=1,\dots,N}\left(y_i\right), ~ \forall i\in\{1,\dots,N\}$
\STATE Define $\hat{X} = \operatorname{stack}(\{\hat{x}_i\}_{i=1}^N)$
\STATE Compute a $k$-low rank SVD decomposition of $\hat{X}$, defined as $U\operatorname{diag}(s)V^T$
\STATE Compute $W=V\operatorname{diag}(s^{-1})U^T\hat{y}$
\STATE Compute $b=\operatorname{mean}\left(y-\tilde{X}W\right)$
\STATE Define $a=W_{1:n}$
\STATE Reshape the end of the vector $W$ as an $n\times n$ matrix,  $Q=\operatorname{reshape}(W_{n+1:n+1+n^2},n,n)$
\OUTPUT $b, a, Q$

\end{algorithmic}
\end{algorithm}

In order to choose the number $k$ of principal components described in the above algorithm, we can perform a simple hyper parameter search using a test set. In the below figure, we plot the Kendall-Tau coefficient and MSE of a quadratic predictor trained using a closed form regularized solution of the regression problem as a function of the number of principal components $k$, both on test and validation set. We can see that above 2500 components, we reach a saturation that leads to a higher error due to an over-fitting on the training set. Using 1500 components leads to a better generalization. The above scheme is another way to regularize a regression and, unlike Ridge Regression, can be used to solve problems of very a high dimesionality without the need to find the pseudo inverse of a high dimensional matrix, without using any optimization method, and with a relatively robust discrete unidimensional parameter that is easier to tune. 

    
\begin{figure}[htb]
\begin{center}
\begin{adjustbox}{width=0.6\textwidth}
\begin{tikzpicture}
\begin{axis}[
            axis x line=left,
            axis y line=left,
            xmajorgrids=true,
            ymajorgrids=true,
            grid=both,
            xlabel style={below=1ex},
            enlarge x limits,
            ymin = 0.68,
            ymax = 0.87,
            xmin = 0.0,
            xmax = 3000.0,
            xtick = {0,500,..., 3000},
            ytick = {0.6,0.65,0.7,...,0.85},
            ylabel = Kendall-Tau Coefficient (solid),
            xlabel = Number of principal components,
            legend style={nodes={scale=0.8, transform shape}, at={(0.97,0.5)},anchor=east}
    ]

\addplot[
color=green, mark=.,very thick]coordinates {(50,0.6804638201571267)(100,0.7197331766345908)(150,0.727869043988393)(200,0.8017833459634631)(250,0.831772442230198)(300,0.8329271026212166)(350,0.8327827700723394)(400,0.8302489319920483)(450,0.8309064469369339)(500,0.8291216501653685)(550,0.8315318879820691)(600,0.8321847198684003)(650,0.8297517865459153)(700,0.8265349851784869)(750,0.8340336521626096)(800,0.834686494076112)(850,0.8342100586112373)(900,0.8362627881952706)(950,0.8335204697666012)(1000,0.8368401183907799)(1050,0.8394060303708213)(1100,0.8323130159816161)(1150,0.8380268526815491)(1200,0.8375457441852913)(1250,0.839197550022443)(1300,0.834253494694008)(1350,0.8379145940324222)(1400,0.8340657260623601)(1450,0.8369797870998475)(1500,0.8414587599548545)(1550,0.8348194627064973)(1600,0.8382513699798027)(1650,0.8346270193079941)(1700,0.8360222339471417)(1750,0.8338412087641064)(1800,0.8398711019172038)(1850,0.8326705114232125)(1900,0.8355571624007592)(1950,0.833211088774128)(2000,0.8364986766752875)(2050,0.833692199198688)(2100,0.8323176985259568)(2150,0.8356694210498861)(2200,0.8354449037516323)(2250,0.8347506421327198)(2300,0.831098890335437)(2350,0.8230002306484312)(2400,0.8218776441571629)(2450,0.8202418752698865)(2500,0.8202418752698865)(2550,0.8202418752698865)(2600,0.8202418752698865)(2650,0.8202418752698865)(2700,0.8202418752698865)(2750,0.8202418752698865)(2800,0.8202418752698865)(2850,0.8202418752698865)(2900,0.8202418752698865)(2950,0.8202418752698865)(3000,0.8202418752698865)
};\addlegendentry{Validation set}

\addplot[
color=blue, mark=.,very thick]coordinates {
(50,0.6813684306182659)(100,0.7076923590317828)(150,0.7205228966398649)(200,0.8043544217366713)(250,0.829758886200674)(300,0.8296305808245931)(350,0.830759912290086)(400,0.8287965958800677)(450,0.8292409692281053)(500,0.8263861631681755)(550,0.8322608410342499)(600,0.8296740016079822)(650,0.829758886200674)(700,0.8312023216815831)(750,0.8326891900308292)(800,0.8338599812801263)(850,0.83398828717046)(900,0.834169383503452)(950,0.8331750168388257)(1000,0.8372166361853716)(1050,0.8356448953283815)(1100,0.8332618871796533)(1150,0.8370883308092907)(1200,0.8361581168327048)(1250,0.843049890675293)(1300,0.834634490491745)(1350,0.8339127727512905)(1400,0.8328542533986237)(1450,0.8382865344966464)(1500,0.8356448953283815)(1550,0.8382979135599362)(1600,0.836827054994098)(1650,0.8359977351126037)(1700,0.829951344264795)(1750,0.8379497315345199)(1800,0.8329985969467145)(1850,0.8298183457346079)(1900,0.8313145888856539)(1950,0.8323684253049944)(2000,0.827272969539108)(2050,0.8275890308900561)(2100,0.8283475270637848)(2150,0.8284597942678555)(2200,0.829369275118439)(2250,0.8247229001895017)(2300,0.8189331200938547)(2350,0.8158650801607944)(2400,0.8049638722730552)(2450,0.8020930394832468)(2500,0.8020930394832468)(2550,0.8020930394832468)(2600,0.8020930394832468)(2650,0.8020930394832468)(2700,0.8020930394832468)(2750,0.8020930394832468)(2800,0.8020930394832468)(2850,0.8020930394832468)(2900,0.8020930394832468)(2950,0.8020930394832468)(3000,0.8020930394832468)
};\addlegendentry{Test set}

\end{axis}

\begin{axis}[
            axis x line=left,
            axis y line=right,
            xmajorgrids=true,
            ymajorgrids=true,
            grid=both,
            xlabel style={below=1ex},
            enlarge x limits,
            ymin = 0.04,
            ymax = 0.3,
            xmin = 0,
            xmax = 3000,
            xtick = {0,500,..., 3000},
            ytick = {0.05,0.1,...,0.3},
            ylabel = Mean Square Error (dashed),
            xlabel = Number of principal components,
    ]

\addplot[style=dashed, 
color=green, mark=.,very thick]coordinates {
(50,0.21760588884353638)(100,0.18251392245292664)(150,0.1677221953868866)(200,0.10122755169868469)(250,0.07752274721860886)(300,0.07838350534439087)(350,0.07708679139614105)(400,0.07670054584741592)(450,0.07677818089723587)(500,0.07902830839157104)(550,0.07547295838594437)(600,0.07465256005525589)(650,0.07712951302528381)(700,0.07793530821800232)(750,0.07273732125759125)(800,0.07208386063575745)(850,0.07225771248340607)(900,0.07257652282714844)(950,0.07211950421333313)(1000,0.07076971232891083)(1050,0.06786496192216873)(1100,0.07363921403884888)(1150,0.0695011243224144)(1200,0.06912882626056671)(1250,0.06535405665636063)(1300,0.06954794377088547)(1350,0.06887944787740707)(1400,0.06860537081956863)(1450,0.06782416999340057)(1500,0.06677697598934174)(1550,0.0684238150715828)(1600,0.06777096539735794)(1650,0.06856570392847061)(1700,0.06716743111610413)(1750,0.06867130845785141)(1800,0.0644439160823822)(1850,0.06660627573728561)(1900,0.06576474010944366)(1950,0.0676741674542427)(2000,0.0656094178557396)(2050,0.0655587762594223)(2100,0.06543447822332382)(2150,0.06486628949642181)(2200,0.06466192752122879)(2250,0.06580190360546112)(2300,0.0705181360244751)(2350,0.07378999143838882)(2400,0.0762462317943573)(2450,0.07846670597791672)(2500,0.07846667617559433)(2550,0.07846676558256149)(2600,0.07846669107675552)(2650,0.07846681028604507)(2700,0.0784667506814003)(2750,0.07846682518720627)(2800,0.07846678048372269)(2850,0.07846669107675552)(2900,0.07846669852733612)(2950,0.07846686989068985)(3000,0.07846677303314209)
};

\addplot[style=dashed, 
color=blue, mark=.,very thick]coordinates {
(50,0.22740685939788818)(100,0.18871505558490753)(150,0.1670151948928833)(200,0.10347355157136917)(250,0.08390036970376968)(300,0.0819314494729042)(350,0.08379531651735306)(400,0.08100903779268265)(450,0.08300598710775375)(500,0.08378270268440247)(550,0.08112466335296631)(600,0.08017654716968536)(650,0.08136790245771408)(700,0.07908304035663605)(750,0.07955654710531235)(800,0.07718194276094437)(850,0.07693900913000107)(900,0.0776262879371643)(950,0.07805842906236649)(1000,0.07297008484601974)(1050,0.07582627981901169)(1100,0.07342806458473206)(1150,0.07288077473640442)(1200,0.07254590094089508)(1250,0.069464311003685)(1300,0.07154858112335205)(1350,0.07120031863451004)(1400,0.070747010409832)(1450,0.06944076716899872)(1500,0.07085224986076355)(1550,0.06845612823963165)(1600,0.06582783162593842)(1650,0.06569112837314606)(1700,0.06980772316455841)(1750,0.06504995375871658)(1800,0.06778036803007126)(1850,0.06928718835115433)(1900,0.06864005327224731)(1950,0.06609855592250824)(2000,0.07063547521829605)(2050,0.06900838762521744)(2100,0.06892953813076019)(2150,0.06699403375387192)(2200,0.0671030730009079)(2250,0.07189314067363739)(2300,0.07401886582374573)(2350,0.07727222889661789)(2400,0.0840664952993393)(2450,0.08692549914121628)(2500,0.08692538738250732)(2550,0.0869252011179924)(2600,0.0869251936674118)(2650,0.08692539483308792)(2700,0.0869252011179924)(2750,0.08692542463541031)(2800,0.08692549914121628)(2850,0.08692537993192673)(2900,0.08692531287670135)(2950,0.0869251936674118)(3000,0.08692538738250732)
};
\end{axis}
\end{tikzpicture}
\end{adjustbox}
\captionof{figure}{Kendall-Tau correlation coefficients and MSE of different predictors vs number of principal components} 
\label{fig:kt_mse_vs_components}
\end{center}
\end{figure}

\section{Convergence Guarantees for Solving BLCP with BCFW with Line-Search}\label{apdx:proof_convergence_bcfw}
In this section we proof Theorem~\ref{theo:convergence}, guaranteeing that after $\mathcal{O}(1/\epsilon)$ many iterations, Algorithm~\ref{alg:BCFW_QCQP} obtains an $\epsilon$-approximate solution to problem \ref{eqn:NAS_QCQP}.

\subsection{Convergence Guarantees for a General BCFW over a Product Domain}\label{apdx:prod_domain}
The proof is heavily based on the convergence guarantees provided by \cite{BCFW} for solving:
\begin{align}\label{eqn:prod_domain}
    \min_{\zeta \in \mathcal{M}^{(1)}\times\dots\times\mathcal{M}^{(n)}} f(\zeta)
\end{align}
with the BCFW algorithm~\ref{alg:plain_BCFW}, where $\mathcal{M}^{(i)}\subset\R^{m_i}$ is the convex and compact domain of the $i$-th coordinate block and the product $\mathcal{M}^{(1)}\times\dots\times\mathcal{M}^{(n)}\subset\R^m$ specifies the whole domain, as $\sum_{i=1}^n m_i=m$. $\zeta^{(i)}\in\R^{m_i}$ is the $i$-th coordinate block of $\zeta$ and $\zeta^{\setminus(i)}$ is the rest of the coordinates of $\zeta$. $\nabla^{(i)}$ stands for the partial derivatives vector with respect to the $i$-th coordinate block.

\begin{algorithm}[htb]
  \caption{Block Coordinate Frank-Wolfe (BCFW) on Product Domain}
  \label{alg:plain_BCFW}
\begin{algorithmic}[1]
\INPUT $\zeta_0 \in \mathcal{M}^{(1)}\times\dots\times\mathcal{M}^{(n)}\subset\R^m$
\FOR{$k=0,\dots,K$}
\STATE Pick $i$ at random in $\{1,\dots,n\}$
\STATE Find $s_k=\argmin_{s \in \mathcal{M}^{(i)}}s^T\cdot\nabla^{(i)} f(\zeta_k)$
\STATE Let $\tilde{s}_k=: 0_m\in\R^m$ is the zero padding of $s_k$ such that we then assign $\tilde{s}_k^{(i)}:=s_k$ 
\STATE 
Let $\gamma:=\frac{2n}{k+2n}$, \\ or perform line-search: 
   $\gamma =:  \argmin_{\gamma' \in[0,1]} f\left((1-\gamma')\cdot\zeta_k + \gamma'\cdot \tilde{s}_k\right)$
\STATE Update $\zeta_{k+1}= (1-\gamma)\cdot\zeta_k + \gamma\cdot \tilde{s}_k$
\ENDFOR
\end{algorithmic}
\end{algorithm}

The following theorem shows that after $\mathcal{O}(1/\epsilon)$ many iterations, Algorithm~\ref{alg:plain_BCFW} obtains an $\epsilon$-approximate solution to problem \ref{eqn:prod_domain}, and guaranteed $\epsilon$-small duality gap.

 \begin{theo}\label{theo:plain_convergence}
    For each $k>0$ the iterate $\zeta_k$ Algorithm~\ref{alg:plain_BCFW} satisfies:
    $$E[f(\zeta_k)] - f(\zeta^*) \leq \frac{2n}{k+2n}\left(C_f^\otimes + \left(f(\zeta_0) - f(\zeta^*)\right)\right)$$
    where $\zeta^*$ is the solution of problem \ref{eqn:prod_domain} and the expectation is over the random choice of the block $i$ in the steps of the algorithm.
    
    Furthermore, there exists an iterate $0\leq \hat{k}\leq K$ of Algorithm~\ref{alg:plain_BCFW}  with a duality gap bounded by $E[g(\zeta_{\hat{k}})]\leq \frac{6n}{K+1}\left(C_f^\otimes + \left(f(\zeta_0) - f(\zeta^*)\right)\right)$.
    \end{theo}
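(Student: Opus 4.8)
The plan is to reproduce the block-coordinate Frank–Wolfe analysis of \cite{BCFW}, specialized to the full product domain $\mathcal{M}=\mathcal{M}^{(1)}\times\cdots\times\mathcal{M}^{(n)}$. First I would fix notation: for a point $\zeta$ define the \emph{block} gap $g^{(i)}(\zeta)=\max_{s\in\mathcal{M}^{(i)}}\langle \zeta^{(i)}-s,\nabla^{(i)} f(\zeta)\rangle$ and the \emph{total} gap $g(\zeta)=\sum_{i=1}^n g^{(i)}(\zeta)=\max_{s\in\mathcal{M}}\langle \zeta-s,\nabla f(\zeta)\rangle$, and let $C_f^{(i)}$ be the curvature constant of $f$ for moves restricted to block $i$, so that $C_f^\otimes=\sum_{i=1}^n C_f^{(i)}$. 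Since $f$ is convex, taking $s=\zeta^*$ gives the weak-duality inequality $g(\zeta)\ge\langle\zeta-\zeta^*,\nabla f(\zeta)\rangle\ge f(\zeta)-f(\zeta^*)$, which I will use to convert gap bounds into suboptimality bounds.

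The core is a one-step descent lemma. For the iteration updating block $i$ with step $\gamma$, the definition of $C_f^{(i)}$ gives directly
$$f(\zeta_{k+1})\le f(\zeta_k)-\gamma\, g^{(i)}(\zeta_k)+\tfrac{\gamma^2}{2}C_f^{(i)};$$
the line-search variant only decreases $f$ further, so this bound is valid for both step rules. Taking the conditional expectation over the block $i$ drawn uniformly from $\{1,\dots,n\}$ replaces $g^{(i)}$ by $\tfrac1n g$ and $C_f^{(i)}$ by $\tfrac1n C_f^\otimes$; writing $h_k:=E[f(\zeta_k)]-f(\zeta^*)$ and then taking the full expectation together with weak duality yields
$$h_{k+1}\le h_k-\tfrac{\gamma}{n}\,E[g(\zeta_k)]+\tfrac{\gamma^2}{2n}C_f^\otimes\le \Big(1-\tfrac{\gamma}{n}\Big)h_k+\tfrac{\gamma^2}{2n}C_f^\otimes .$$

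Substituting the prescribed step $\gamma=\gamma_k=\tfrac{2n}{k+2n}$ turns this into $h_{k+1}\le\tfrac{k}{k+2n}h_k+\tfrac{2n}{(k+2n)^2}C_f^\otimes$. I would then prove $h_k\le\tfrac{2n}{k+2n}\big(C_f^\otimes+h_0\big)$ by induction: the base case $k=0$ is immediate since $C_f^\otimes\ge0$, and the inductive step, after bounding $C_f^\otimes\le C_f^\otimes+h_0$ in the second term, reduces to the purely algebraic inequality $(k+1)(k+1+2n)\le(k+2n)^2$, equivalently $2k(n-1)+2n(2n-1)\ge1$, which holds for every $n\ge1$. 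This gives the first claim.

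For the duality-gap statement I would return to the descent inequality \emph{before} discarding the gap term, rewriting it as $\tfrac{\gamma_k}{n}E[g(\zeta_k)]\le h_k-h_{k+1}+\tfrac{\gamma_k^2}{2n}C_f^\otimes$, and sum it over the window $\lceil K/2\rceil\le k\le K$. The $h_k$ terms telescope, $h_{K+1}\ge0$ is dropped, $h_{\lceil K/2\rceil}$ is controlled by the rate just proved, and the residual curvature terms are bounded by a convergent sum; dividing by the number of summands and using $\min\le\text{average}$ produces an iterate $\hat k\le K$ with $E[g(\zeta_{\hat k})]\le\tfrac{6n}{K+1}\big(C_f^\otimes+h_0\big)$. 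I expect this final part—choosing the summation window and tracking the constants so the leading factor comes out to exactly $6n/(K+1)$—to be the main obstacle; the descent lemma, the expectation step, and the induction are routine once the product curvature constant and weak duality are set up.
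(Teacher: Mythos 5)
Your overall strategy is sound, but note first how it relates to the paper: the paper does not prove Theorem~\ref{theo:plain_convergence} at all. It only states the definitions of the duality gap $g(\zeta)$ and the product curvature constant $C_f^\otimes$ and then defers entirely to \cite{BCFW} (``The proof of theorem~\ref{theo:plain_convergence} is given in \cite{BCFW}''). What you wrote is therefore not a different route from the paper but a reconstruction of the cited reference's argument --- the curvature-based descent lemma, the expectation over the uniformly random block, the induction for the $\mathcal{O}(1/k)$ rate, and the windowed averaging argument for the gap --- i.e., you are supplying the argument the paper omits, and its skeleton is the right one.

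Two points need attention. First, an algebra slip in the induction: from the expected descent inequality $h_{k+1}\le\left(1-\tfrac{\gamma}{n}\right)h_k+\tfrac{\gamma^2}{2n}C_f^\otimes$, substituting $\gamma_k=\tfrac{2n}{k+2n}$ gives the coefficient $1-\tfrac{\gamma_k}{n}=1-\tfrac{2}{k+2n}=\tfrac{k+2n-2}{k+2n}$, not $\tfrac{k}{k+2n}$ as you claim. Your coefficient equals $1-\gamma_k$, which is the single-block Frank--Wolfe pattern; it is strictly smaller than the correct coefficient whenever $n>1$, so the recursion you state does not follow from the descent inequality. Fortunately the induction still closes with the correct coefficient: assuming $h_k\le\tfrac{2n}{k+2n}\left(C_f^\otimes+h_0\right)$ and bounding $C_f^\otimes\le C_f^\otimes+h_0$, one gets $h_{k+1}\le\tfrac{2n\left(k+2n-1\right)}{\left(k+2n\right)^2}\left(C_f^\otimes+h_0\right)$, and the needed inequality $\left(k+2n-1\right)\left(k+2n+1\right)\le\left(k+2n\right)^2$ is a difference of squares, so the stated rate survives the repair. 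Second, your weak-duality step $g(\zeta)\ge f(\zeta)-f(\zeta^*)$ requires $f$ to be convex; this hypothesis is implicit in the theorem (it is assumed in \cite{BCFW}) but you should state it explicitly, all the more so because the paper later invokes the result for a bilinear, hence non-convex, objective where only the curvature constant vanishes. With these two repairs, your sketch, including the windowed summation and the min-versus-average step yielding the $\tfrac{6n}{K+1}$ gap bound, matches the proof in the cited reference.
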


Here the duality gap $g(\zeta)\geq f(\zeta)-f(\zeta^*)$ is defined as following:
\begin{align}
    g(\zeta)=\max_{s\in\mathcal{M}^{(1)}\times\dots\times\mathcal{M}^{(n)}}\left(\zeta-s\right)^T\cdot\nabla f(\zeta)
\end{align}
and the \textit{global product curvature} constant $C_f^\otimes=\sum_{i=1}^n C_f^{(i)}$ is the sum of the (partial) curvature constants of $f$ with respect to the individual domain $\mathcal{M}^{(i)}$:
\begin{align}\label{eqn:curvature_const}
    C_f^{(i)}=\sup_{
    \begin{array}{c}
    x\in\mathcal{M}^{(1)}\times\dots\times\mathcal{M}^{(n)},
    \\
    s^{(i)}\in\mathcal{M}^{(i)}, \gamma\in[0,1],
    \\
    y^{(i)}=(1-\gamma)x^{(i)}+\gamma s^{(i)},
    \\
    y^{\setminus(i)} = x^{\setminus(i)}
    \end{array}
    }
    \frac{2}{\gamma^2}\left(f(y)-f(x)-\left(y^{(i)}-x^{(i)}\right)^T\nabla^{(i)}f(x)\right)
\end{align}
which quantifies the maximum relative deviation of the objective function $f$ from its linear approximations, over the domain $\mathcal{M}^{(i)}$.

The proof of theorem~\ref{theo:plain_convergence} is given in \cite{BCFW}.

\subsection{Analytic Line-Search for Bilinear Objective Functions}\label{apdx:analytic_line_search}
The following theorem provides a trivial analytic solution for the line-search of algorithm~\ref{alg:plain_BCFW} (line 5) where the objective function has a bilinear form.

\begin{theo}\label{theo:analytic_line_search}
The analytic solution of the line-search step of algorithm~\ref{alg:plain_BCFW} (line 5) with a bilinear objective function of the form:
\begin{align}\label{eqn:block_quad_obj_prod}
    f(\zeta)=\sum_{i=1}^n \left(\zeta^{(i)}\right)^T\cdot p_f^{(i)}  + \sum_{i=1}^n\sum_{j=i}^n \left(\zeta^{(i)}\right)^T\cdot Q_f^{(i,j)}\cdot\zeta^{(j)}
\end{align}
with $p_f^{(i)}\in\R^{m_i}$ and $Q_f^{(i,j)}\in\R^{m_i\times m_j}$, reads $\gamma\equiv 1$ at all the iterations.
\end{theo}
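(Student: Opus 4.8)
The plan is to show that, once every block other than the one being updated is frozen, the objective collapses to an \emph{affine} function of the active block, so that the line-search over $\gamma$ is the minimization of a linear function on $[0,1]$. First I would fix an iteration $k$ and suppose block $i$ was selected in line~2 of Algorithm~\ref{alg:plain_BCFW}. The key structural observation is that the form in \eqref{eqn:block_quad_obj_prod} is \emph{genuinely} bilinear: it carries no within-block quadratic term, i.e. $Q_f^{(i,i)}=0$, since every quadratic contribution couples two \emph{distinct} blocks. Consequently, when $\zeta^{\setminus(i)}$ is held at its current value $\zeta_k^{\setminus(i)}$, the map $\zeta^{(i)}\mapsto f(\zeta)$ is affine in $\zeta^{(i)}$, with a gradient
$$\nabla^{(i)} f(\zeta_k)=p_f^{(i)}+\sum_{j>i}Q_f^{(i,j)}\zeta_k^{(j)}+\sum_{j<i}\left(Q_f^{(j,i)}\right)^{T}\zeta_k^{(j)}$$
that is constant, i.e. independent of $\zeta^{(i)}$ itself.

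Next I would substitute the Frank--Wolfe path $\zeta^{(i)}(\gamma)=(1-\gamma)\zeta_k^{(i)}+\gamma s_k$ into $f$. By affineness this gives the scalar function
$$h(\gamma)=f(\zeta_k)+\gamma\left(s_k-\zeta_k^{(i)}\right)^{T}\nabla^{(i)} f(\zeta_k),$$
which is linear in $\gamma$. Because the vertex is chosen as $s_k=\argmin_{s\in\mathcal{M}^{(i)}}s^{T}\nabla^{(i)} f(\zeta_k)$ and the current iterate $\zeta_k^{(i)}\in\mathcal{M}^{(i)}$ is itself feasible, the slope obeys $\left(s_k-\zeta_k^{(i)}\right)^{T}\nabla^{(i)} f(\zeta_k)\le 0$. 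A linear function with non-positive slope on $[0,1]$ is minimized at the right endpoint, so the line-search returns $\gamma=1$; in the degenerate case of zero slope, $\gamma=1$ is still a minimizer, so the choice $\gamma\equiv1$ is always valid, which is exactly the claim.

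The argument is essentially mechanical, and the only point demanding genuine care is the very first one: verifying that the form has no self-interaction block. I would therefore state the assumption $Q_f^{(i,i)}=0$ explicitly, since a nonzero diagonal block would make $h(\gamma)$ quadratic and could place the optimum strictly inside $[0,1]$, breaking the conclusion. This assumption is harmless here, as the accuracy estimator of \eqref{eqn:estimator} has the single quadratic term $\balpha^T Q_{\alpha\beta}\bbeta$ coupling the two distinct blocks $\balpha$ and $\bbeta$. It is precisely $\gamma\equiv1$ that reduces the update to the full-replacement step $\zeta_{k+1}^{(i)}=s_k$ appearing in Algorithm~\ref{alg:BCFW_QCQP}, thereby removing any need for a separate discretization stage.
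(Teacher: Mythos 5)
Your proof is correct and follows essentially the same route as the paper's: both exploit that, with all other blocks frozen, $f$ is affine in the active block, so the objective along the Frank--Wolfe segment is a linear function whose minimizer is the vertex $s_k$ itself (the paper phrases this as feasibility of the segment inside $\mathcal{M}^{(i)}$ plus optimality of $s_k$ for the linearized problem; you phrase it via the non-positive slope on $[0,1]$ --- the same argument). Your explicit flagging of the assumption $Q_f^{(i,i)}=0$ is a worthwhile clarification rather than a deviation: the theorem's sum formally includes $j=i$, yet the paper's proof (like yours) silently drops any diagonal block, which is harmless only because the NAS objective couples just the two distinct blocks $\balpha$ and $\bbeta$.
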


\begin{proof}
In each step of algorithm~\ref{alg:plain_BCFW} at line 3, a linear program is solved:
\begin{align}\label{eqn:line_search_minizer}
    s &= 
    \argmin_{s'\in\mathcal{M}^{(i)}}\nabla^{(i)}f(\zeta)^T\cdot s'
    \\ &=
    \argmin_{s'\in\mathcal{M}^{(i)}}     
    \left(
   \left(p_f^{(i)}\right)^T
    +
    \sum_{j=1}^{i-1}
    \left(\zeta^{(j)}\right)^T\cdot Q_f^{(i,j)} 
    +
    \sum_{j=i+1}^n
    \left(\zeta^{(j)}\right)^T\cdot \left(Q_f^{(i,j)}\right)^T 
    \right) 
    \cdot s'
    \notag
\end{align}

and the Line-Search at line 5 reads:
\begin{align}
  \gamma =:  
  &
  \argmin_{\gamma' \in[0,1]} f\left((1-\gamma')\cdot\zeta + \gamma'\cdot \tilde{s}\right)
  \notag \\ = &
   \argmin_{\begin{array}{c}
          \gamma' \in[0,1]\\
         y=(1-\gamma')\cdot\zeta^{(i)}+ \gamma'\cdot s      
   \end{array}} 
    \left(
    \begin{array}{c}
    \left(p_f^{(i)}\right)^T
    +
    \sum_{j=1}^{i-1}
    \left(\zeta^{(j)}\right)^T\cdot Q_f^{(i,j)}
           \\
     +
    \sum_{j=i+1}^n
    \left(\zeta^{(j)}\right)^T\cdot \left(Q_f^{(i,j)}\right)^T     
    \end{array}
    \right) 
    \cdot y
    \notag \\ = &
    \argmin_{\begin{array}{c}
          \gamma' \in[0,1]\\
         y=(1-\gamma')\cdot\zeta^{(i)}+ \gamma'\cdot s      
   \end{array}} 
    \nabla^{(i)}f(\zeta)^T\cdot y
    \label{eqn:line_search_actual}
\end{align}
Since $\zeta^{(i)}, s\in\mathcal{M}^{(i)}$ and $\gamma\in[0,1]$ then the convex combination of those also satisfies $y\in\mathcal{M}^{(i)}$, hence considering that $s$ is the optimizer of 
\ref{eqn:line_search_minizer} the solution to \ref{eqn:line_search_actual} reads $y:=s$ and hence $\gamma:=1$. Thus, effectively the analytic solution to line-search for a bilinear objective function is $\gamma\equiv 1$ at all times.
\end{proof}



\subsection{Solving BLCP by BCFW with Line-Search}
In addition to a bilinear objective function as in equation~\ref{eqn:block_quad_obj_prod}, consider also a domain that is specified by the following bilinear constraints:
\begin{align}
    \sum_{i=1}^n \left(\zeta^{(i)}\right)^T\cdot p_{\mathcal{M}}^{(i)}  + \sum_{i=1}^n\sum_{j=i}^n \left(\zeta^{(i)}\right)^T\cdot Q_{\mathcal{M}}^{(i,j)}\cdot\zeta^{(j)} \leq T
    \quad ; \quad
    A\cdot \zeta \leq b
\end{align}
with $p_{\mathcal{M}}^{(i)}\in\R^{m_i}$, $Q_{\mathcal{M}}^{(i,j)}\in\R^{m_i\times m_j}$, $A\in\R^{C\times m}$ and $b\in\R^C$ for $C\leq 0$, such that the individual domain of the $i$-th coordinate block is specified by the following linear constraints:
\begin{align}
    \left(
    \left(p_{\mathcal{M}}^{(i)}\right)^T    +
    \sum_{j=1}^{i-1}
    \left(\zeta^{(j)}\right)^T\cdot Q_{\mathcal{M}}^{(i,j)} 
    +
    \sum_{j=i+1}^n
    \left(\zeta^{(j)}\right)^T\cdot \left(Q_{\mathcal{M}}^{(i,j)}\right)^T 
    \right) 
    \cdot \zeta^{(i)} &\leq T
    \\
    A^{(i)}\cdot \zeta^{(i)} &\leq b^{(i)}
\end{align}
where $A^{(i)}\in\R^{C\times m_i}$ are the rows $r\in\{1+\sum_{j < i} m_j,\dots,\sum_{j \leq i} m_j\}$ of $A$ and $b^{(i)}\in\R^m_i$ are the corresponding elements of $b$.

Thus in each step of algorithm~\ref{alg:plain_BCFW} at line 3, a linear program is solved:
\begin{align*}
    \min_{\zeta^{(i)}} &     
    \left(
    \left(p_f^{(i)}\right)^T
    +
    \sum_{j=1}^{i-1}
    \left(\zeta^{(j)}\right)^T\cdot Q_f^{(i,j)} 
    +
    \sum_{j=i+1}^n
    \left(\zeta^{(j)}\right)^T\cdot \left(Q_f^{(i,j)}\right)^T 
    \right) 
    \cdot \zeta^{(i)}
    \\ s.t. &
        \left(
    \left(p_{\mathcal{M}}^{(i)}\right)^T    +
    \sum_{j=1}^{i-1}
    \left(\zeta^{(j)}\right)^T\cdot Q_{\mathcal{M}}^{(i,j)} 
    +
    \sum_{j=i+1}^n
    \left(\zeta^{(j)}\right)^T\cdot \left(Q_{\mathcal{M}}^{(i,j)}\right)^T 
    \right) 
    \cdot \zeta^{(i)} \leq T
    \\ &
    A^{(i)}\cdot \zeta^{(i)} \leq b^{(i)}
\end{align*}

And thus equipped with theorem~\ref{theo:analytic_line_search}, algorithm~\ref{alg:BCFW_QCQP_prod} provides a more specific version of algorithm~\ref{alg:plain_BCFW} for solving BLCP. 

\begin{algorithm}[htb]
  \caption{BCFW with Line-Search on QCQP Product Domain}
  \label{alg:BCFW_QCQP_prod}
\begin{algorithmic}[1]
\INPUT $\zeta_0 \in \{\zeta \mid 
  \sum_{i=1}^n \left(\zeta^{(i)}\right)^T\cdot p_{\mathcal{M}}^{(i)}  + \sum_{i=1}^n\sum_{j=i}^n \left(\zeta^{(i)}\right)^T\cdot Q_{\mathcal{M}}^{(i,j)}\cdot\zeta^{(j)} \leq T
    \quad ; \quad
    A\cdot \zeta \leq b\}$
\FOR{$k=0,\dots,K$}
\STATE Pick $i$ at random in $\{1,\dots,n\}$
\STATE Keep the same values for all other coordinate blocks $\zeta_{k+1}^{\setminus(i)} = \zeta_k^{\setminus(i)}$ and update:
\tiny
\begin{align*}
    \zeta_{k+1}^{(i)}=\argmin_{s} & 
    \left(
    \left(p_f^{(i)}\right)^T    +
    \sum_{j=1}^{i-1}
    \left(\zeta^{(j)}\right)^T\cdot Q_f^{(i,j)} 
    +
    \sum_{j=i+1}^n
    \left(\zeta^{(j)}\right)^T\cdot \left(Q_f^{(i,j)}\right)^T 
    \right) 
    \cdot s
    \\ s.t. &
        \left(
    \left(p_{\mathcal{M}}^{(i)}\right)^T
    +
    \sum_{j=1}^{i-1}
    \left(\zeta^{(j)}\right)^T\cdot Q_{\mathcal{M}}^{(i,j)} 
    +
    \sum_{j=i+1}^n
    \left(\zeta^{(j)}\right)^T\cdot \left(Q_{\mathcal{M}}^{(i,j)}\right)^T 
    \right) 
    \cdot s \leq T
    \\ &
    A^{(i)}\cdot s \leq b^{(i)}
\end{align*}
\small
\ENDFOR
\end{algorithmic}
\end{algorithm}

In section~\ref{sec:method}, we deal with $n=2$ blocks where $\zeta=(\balpha,\bbeta)$ such that:
\begin{align}\label{eqn:2_blocks}
\begin{array}{ccccccc}
     \zeta^{(1)}=\balpha & m_1=D\cdot S\cdot|\mathcal{C}| & p_f^{(1)}=p_\alpha & p_\mathcal{M}^{(1)}=0 & A^{(1)} = A_\mathcal{S}^\alpha & b^{(1)} = b_\mathcal{S}^\alpha & Q_f^{(1,2)} = Q_{\alpha\beta}
     \\
     \zeta^{(2)}=\bbeta & m_2=D\cdot S &  p_f^{(2)}=p_\beta & p_\mathcal{M}^{(2)}=0 & A^{(2)} = A_\mathcal{S}^\beta& b^{(2)} = b_\mathcal{S}^\beta & Q_\mathcal{M}^{(1,2)} = \Theta
\end{array}
\end{align}

Thus for this particular case of interest algorithm~\ref{alg:BCFW_QCQP_prod} effectively boils down to algorithm~\ref{alg:BCFW_QCQP}.

\subsubsection{Proof of Theorem~\ref{alg:BCFW_QCQP}}
Let us first compute the curvature constants $C_f^{(i)}$ (equation~\ref{eqn:curvature_const}) and $C_f^\otimes$ for the bilinear objective function as in equation~\ref{eqn:block_quad_obj_prod}.
\begin{lem}
Let $f$ have a bilinear form, such that:\\
$
    f(x)=\sum_{i=1}^n \left(x^{(i)}\right)^T\cdot p_f^{(i)}  + \sum_{i=1}^n\sum_{j=i}^n \left(x^{(i)}\right)^T\cdot Q_f^{(i,j)}\cdot x^{(j)}
$
 then $C_f^\otimes = 0$.
\end{lem}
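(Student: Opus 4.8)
The plan is to show that each partial curvature constant $C_f^{(i)}$ defined in \eqref{eqn:curvature_const} vanishes, so that their sum $C_f^\otimes=\sum_{i=1}^n C_f^{(i)}$ is zero as well. The whole argument hinges on a single structural observation about the bilinear form \eqref{eqn:block_quad_obj_prod}: if we freeze every block except the $i$-th (that is, impose $y^{\setminus(i)}=x^{\setminus(i)}$, exactly as required in the definition of the curvature constant), then $f$ becomes an \emph{affine} function of the free block $x^{(i)}$. Indeed, the only terms of \eqref{eqn:block_quad_obj_prod} that depend on $x^{(i)}$ are the linear term $\left(x^{(i)}\right)^T p_f^{(i)}$ together with the cross terms $\left(x^{(i)}\right)^T Q_f^{(i,j)}x^{(j)}$ for $j>i$ and $\left(x^{(j)}\right)^T Q_f^{(j,i)}x^{(i)}$ for $j<i$, all of which are linear in $x^{(i)}$ once the remaining blocks are held fixed; in the bilinear setting there is no block-diagonal quadratic contribution (in the case of interest \eqref{eqn:2_blocks} there is only the single cross block $Q_f^{(1,2)}=Q_{\alpha\beta}$).

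Next I would make this explicit by computing the partial gradient $\nabla^{(i)}f(x)=p_f^{(i)}+\sum_{j>i}Q_f^{(i,j)}x^{(j)}+\sum_{j<i}\left(Q_f^{(j,i)}\right)^T x^{(j)}$, which depends only on $x^{\setminus(i)}$. Since $y$ and $x$ agree on every block but the $i$-th, and since $f$ is affine along the segment from $x^{(i)}$ to $y^{(i)}$, the first-order expansion is \emph{exact}: $f(y)=f(x)+\left(y^{(i)}-x^{(i)}\right)^T\nabla^{(i)}f(x)$. Consequently the quantity inside the supremum in \eqref{eqn:curvature_const}, namely $f(y)-f(x)-\left(y^{(i)}-x^{(i)}\right)^T\nabla^{(i)}f(x)$, is identically zero for every admissible $x$, $s^{(i)}$ and $\gamma$. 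Hence $C_f^{(i)}=0$ for each $i$, and summing over $i$ gives $C_f^\otimes=0$, which is the claim.

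The only points requiring care — and these are bookkeeping rather than genuine obstacles — are, first, to confirm that no block-diagonal term $Q_f^{(i,i)}$ survives to contribute a true quadratic (hence curvature-inducing) dependence on a single block, which is precisely what the bilinearity assumption guarantees; and second, to handle the factor $2/\gamma^2$ at $\gamma=0$ in \eqref{eqn:curvature_const}, where the ratio is formally $0/0$. The latter is harmless: because the bracketed numerator is exactly zero for all $\gamma\in(0,1]$, the supremum over $\gamma\in(0,1]$ is already zero, and the $\gamma=0$ endpoint is recovered by continuity (or excluded by the standard convention in the definition of the curvature constant). I expect the main difficulty, such as it is, to be purely expository: articulating clearly that it is the \emph{exactness} of the first-order expansion along each block direction that forces $C_f^{(i)}=0$, rather than carrying out any nontrivial computation. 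This same linearity-in-each-block property is exactly what underlies Theorem~\ref{theo:analytic_line_search}, so the two results share a common mechanism.
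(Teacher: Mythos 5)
Your proof is correct and takes essentially the same route as the paper's: both arguments observe that with all blocks but the $i$-th frozen, $f$ is affine in $x^{(i)}$, so the first-order expansion $f(y)=f(x)+\left(y^{(i)}-x^{(i)}\right)^T\nabla^{(i)}f(x)$ is exact, the bracketed term in the definition of $C_f^{(i)}$ vanishes identically, and summing over $i$ gives $C_f^\otimes=0$. Your explicit remarks on the absence of diagonal blocks $Q_f^{(i,i)}$ and on the $\gamma=0$ endpoint are minor bookkeeping points that the paper's proof leaves implicit.
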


\begin{proof}
Separating the $i$-th coordinate block:
\begin{align}
f(x) 
&= \sum_{l=1}^n \left(x^{(l)}\right)^T\cdot p_f^{(l)} + \sum_{l=1}^n\sum_{j=l}^n \left(x^{(l)}\right)^T\cdot Q_f^{(l,j)}\cdot x^{(j)}
\\
&=
\left(x^{(i)}\right)^T\cdot p_f^{(i)} 
+ \sum_{j=1}^{i-1}
\left(x^{(j)}\right)^T\cdot Q_f^{(i,j)} \cdot x^{(i)}
+ \sum_{j=i+1}^n
x^{(i)}\cdot Q_f^{(i,j)}\cdot x^{(j)}
\\ &  +  \sum_{l=1}^n \mathbbm{1}_{l\neq i} \left(x^{(l)}\right)^T\cdot p_f^{(l)}
+ \sum_{l=1}^n\sum_{j=l}^n \mathbbm{1}_{l\neq i}\cdot \mathbbm{1}_{j\neq i}\left(x^{(l)}\right)^T\cdot Q_f^{(l,j)}\cdot x^{(j)}
\label{eqn:cancel_x}
\end{align}
where $\mathds{1}_A$ is the indicator function that yields $1$ if $A$ holds and $0$ otherwise.

Thus for $y$ with $y^{(i)}=(1-\gamma)x^{(i)}+\gamma s^{(i)}$ and $y^{\setminus(i)} = x^{\setminus(i)}$, we have:
\begin{align}
f(y) 
&=
\left(y^{(i)}\right)^T\cdot p_f^{(i)} 
+ \sum_{j=1}^{i-1} \left(y^{(j)}\right)^T\cdot Q_f^{(i,j)} \cdot y^{(i)}
+ \sum_{j=i+1}^n
y^{(i)}\cdot Q_f^{(i,j)}\cdot y^{(j)}
\\ &  +  \sum_{l=1}^n \mathbbm{1}_{l\neq i} \left(y^{(l)}\right)^T\cdot p_f^{(l)}
+ \sum_{l=1}^n\sum_{j=l}^n \mathbbm{1}_{l\neq i}\cdot \mathbbm{1}_{j\neq i}\left(x^{(l)}\right)^T\cdot Q_f^{(l,j)}\cdot y^{(j)}
\\&=
\left(y^{(i)}\right)^T\cdot p_f^{(i)} 
+ 
\sum_{j=1}^{i-1}
\left(x^{(j)}\right)^T\cdot Q_f^{(i,j)} \cdot y^{(i)}
+ 
\sum_{j=i+1}^n
y^{(i)}\cdot Q_f^{(i,j)}\cdot x^{(j)}
\\ &  +  
\sum_{l=1}^n \mathbbm{1}_{l\neq i} \left(x^{(l)}\right)^T\cdot p_f^{(l)}
+  
\sum_{l=1}^n\sum_{j=l}^n \mathbbm{1}_{l\neq i}\cdot \mathbbm{1}_{j\neq i}\left(x^{(l)}\right)^T\cdot Q_f^{(l,j)}\cdot x^{(j)}
\label{eqn:cancel_y}
\end{align}
Hence, 
\begin{align}
    f(y)-f(x) 
    & =
    \nabla^{(i)}f(x)
    \cdot \left(y^{(i)}-x^{(i)}\right)
\end{align}
since \ref{eqn:cancel_x} and \ref{eqn:cancel_y} cancel out and,
\begin{align}
    \nabla^{(i)}f(x)=
    \left(
    \left(p_f^{(i)}\right)^T
    +
    \sum_{j=1}^{i-1}
    \left(x^{(j)}\right)^T\cdot Q_f^{(i,j)} 
    +
    \sum_{j=i+1}^n
    \left(x^{(j)}\right)^T\cdot \left(Q_f^{(i,j)}\right)^T 
    \right) 
\end{align}
Hence we have,
\begin{align}
C_f^{(i)}=0 \quad \forall i\in\{1,\dots,n\} \qquad ; \qquad C_f^\otimes = \sum_{i=1}^n C_f^{(i)} = 0
\end{align}
\end{proof}

Thus for a bilinear objective function, theorem~\ref{theo:plain_convergence} boils down to:

\begin{theo}\label{theo:block_quad_convergence}
    For each $k>0$ the iterate $\zeta_k$ Algorithm~\ref{alg:BCFW_QCQP_prod} satisfies:
    $$E[f(\zeta_k)] - f(\zeta^*) \leq \frac{2n}{k+2n}\left(f(\zeta_0) - f(\zeta^*)\right)$$
    where $\zeta^*$ is the solution of problem \ref{eqn:prod_domain} and the expectation is over the random choice of the block $i$ in the steps of the algorithm.
    Furthermore, there exists an iterate $0\leq \hat{k}\leq K$ of Algorithm~\ref{alg:BCFW_QCQP_prod}  with a duality gap bounded by $E[g(\zeta_{\hat{k}})]\leq \frac{6n}{K+1}\left(f(\zeta_0) - f(\zeta^*)\right)$.
\end{theo}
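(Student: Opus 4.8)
The plan is to derive Theorem~\ref{theo:block_quad_convergence} as an immediate specialization of the general block-coordinate Frank--Wolfe guarantee, Theorem~\ref{theo:plain_convergence}, to the bilinear objective, leaning on the two facts already established in the preceding subsections: that the line-search step is trivial (Theorem~\ref{theo:analytic_line_search}) and that the global product curvature vanishes (the preceding Lemma).

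First I would argue that Algorithm~\ref{alg:BCFW_QCQP_prod} is precisely Algorithm~\ref{alg:plain_BCFW} run with its line-search option. By Theorem~\ref{theo:analytic_line_search}, for an objective of the bilinear form in~\eqref{eqn:block_quad_obj_prod} the line-search returns $\gamma \equiv 1$ at every iteration, so the update $\zeta_{k+1} = (1-\gamma)\zeta_k + \gamma \tilde s_k$ collapses to replacing the selected block $\zeta^{(i)}$ by the linear-program minimizer $s_k$ while freezing the remaining coordinates --- exactly the update written in Algorithm~\ref{alg:BCFW_QCQP_prod}. Since Theorem~\ref{theo:plain_convergence} is stated for Algorithm~\ref{alg:plain_BCFW} including its line-search variant, its conclusions transfer verbatim to the iterates produced by Algorithm~\ref{alg:BCFW_QCQP_prod}.

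Second I would invoke the Lemma just proven, which gives $C_f^{(i)} = 0$ for every block and hence $C_f^\otimes = \sum_i C_f^{(i)} = 0$. Substituting $C_f^\otimes = 0$ into the two estimates of Theorem~\ref{theo:plain_convergence} yields at once $E[f(\zeta_k)] - f(\zeta^*) \le \frac{2n}{k+2n}\left(f(\zeta_0) - f(\zeta^*)\right)$ together with the duality-gap bound $E[g(\zeta_{\hat k})] \le \frac{6n}{K+1}\left(f(\zeta_0) - f(\zeta^*)\right)$. Finally, to recover the concrete rate $\frac{4}{k+4}$ of Theorem~\ref{theo:convergence} I specialize to the $n=2$ blocks $\zeta = (\balpha, \bbeta)$ identified in~\eqref{eqn:2_blocks}, for which $\frac{2n}{k+2n} = \frac{4}{k+4}$.

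The main obstacle --- and the step that most deserves care --- is verifying that the hypotheses of the product-domain Theorem~\ref{theo:plain_convergence} are genuinely met here. The feasible region of the BLCP is \emph{not} literally a Cartesian product $\mathcal{M}^{(1)} \times \mathcal{M}^{(2)}$, because the latency constraint $\balpha^T \Theta \bbeta \le T$ couples the two blocks: when block $i$ is updated, the coefficients of its active linear constraint depend on the current value of the other block. The delicate point is to justify that, because every iteration freezes all coordinates outside the chosen block, each inner subproblem is still a linear program over a momentarily fixed polytope, that all iterates remain feasible for the coupled constraint, and that neither the curvature computation nor the line-search identification is affected by this coupling --- so that the \cite{BCFW} analysis indeed applies to the sequence generated by Algorithm~\ref{alg:BCFW_QCQP_prod}. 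As a supporting technical check one should confirm that each per-block polytope is nonempty and compact, so that the minimizer $s_k$ exists and the iteration is well defined.
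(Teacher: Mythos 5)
Your proposal is correct and takes essentially the same route as the paper: the paper likewise obtains Theorem~\ref{theo:block_quad_convergence} by substituting $C_f^\otimes=0$ (from the preceding curvature lemma) into Theorem~\ref{theo:plain_convergence}, with Theorem~\ref{theo:analytic_line_search} ($\gamma\equiv 1$) identifying Algorithm~\ref{alg:BCFW_QCQP_prod} as the line-search instance of Algorithm~\ref{alg:plain_BCFW}, and then setting $n=2$ via~\eqref{eqn:2_blocks} to recover Theorem~\ref{theo:convergence}. The caveat you flag --- that the bilinear constraint couples the blocks, so the feasible set is not literally the product domain assumed in \cite{BCFW} --- is genuine, but the paper's own proof invokes Theorem~\ref{theo:plain_convergence} without addressing it either, so your attempt is faithful to (indeed slightly more candid than) the published argument.
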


And by setting $n=2$ with equations~\ref{eqn:2_blocks} for $f(\zeta):=ACC(\zeta)$, theorem~\ref{theo:convergence} follows.

\section{Sparsity Guarantees for Solving BLCP with BCFW with Line-Search}\label{apdx:proof_sparsity_bcfw}
In order to proof \ref{thm:one_hot_sol}, we start with providing auxiliary lemmas proven at \cite{nayman2021hardcore}. To this end we define the \textit{relaxed} Multiple Choice Knapsack Problem (MCKP):

\begin{de}
Given $n\in\mathbb{N}$, and a collection of $k$ distinct covering subsets of $\{1,2,\cdots, n\}$ denoted as $N_i,i\in\{1,2,\cdots,k\}$, such that $\cup_{i=1}^k N_i=\{1,2,\cdots, n\}$ and $\cap_{i=1}^k N_i=\varnothing$ with 
associated values and costs $p_{ij}, t_{ij}~ \forall i \in \{1,\dots,k\}, j \in N_i$ respectively, the relaxed Multiple Choice Knapsack Problem (MCKP) is formulated as following:
\vspace{-3mm}
\begin{align}
\label{eqn:MCKP}
\notag
\max_{vu} &\sum_{i=1}^k \sum_{j\in N_i}  p_{ij} \vu_{ij} \\
\text{s.t.}
&\sum_{i=1}^k\sum_{j\in N_i} t_{ij} \vu_{ij} \leq T \\
\notag
&\sum_{j \in N_i}\vu_{ij} = 1 \qquad \forall i \in \{1,\dots,k\} \\
\notag
&\vu_{ij} \geq 0  \qquad \forall i \in \{1,\dots,k\}, j \in N_i
\end{align}
where the binary constraints $\displaystyle{\vu_{ij}\in \{0,1\}}$ of the original MCKP formulation~\cite{MCKP} are replaced with $\displaystyle{\vu_{ij} \geq 0}$.
\end{de}

\begin{de}
 An one-hot vector $\vu_i$ satisfies: $$||\vu_i^*||^0=\sum_{j\in N_i} |\vu_{ij}^*|^0=\sum_{j\in N_i}\mathds{1}_{\vu_{ij}^*>0}=1$$ 
where $\mathds{1}_A$ is the indicator function that yields $1$ if $A$ holds and $0$ otherwise.
\end{de}

\begin{lem}\label{lem:single_non_one_hot}
The solution $\vu^*$ of the \textit{relaxed} MCKP \eqref{eqn:MCKP} is composed of vectors $\vu_i^*$ that are all one-hot but a single one. 
\end{lem}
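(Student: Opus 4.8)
The plan is to treat the relaxed MCKP \eqref{eqn:MCKP} as a linear program and read off the claimed sparsity from the structure of its optimal extreme points. First I would observe that the feasible set lies in $[0,1]^n$, where $n=\sum_{i=1}^k|N_i|$ is the total number of variables: non-negativity together with the group constraints $\sum_{j\in N_i}\vu_{ij}=1$ forces $0\le\vu_{ij}\le 1$. Hence the feasible set is a compact polytope, the linear objective attains its maximum, and (the problem being feasible) this maximum is attained at an extreme point $\vu^*$. Since the linear subproblem returns a vertex, it suffices to analyze the support of such an extreme point.

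Next I would invoke the basic-feasible-solution characterization of a vertex: $\vu^*$ is an extreme point iff the normals of the constraints active at $\vu^*$ span $\R^n$, so in particular at least $n$ constraints are active there. The available constraints are the $k$ group equalities $\sum_{j\in N_i}\vu_{ij}=1$ (always active), the single knapsack inequality $\sum_{i,j}t_{ij}\vu_{ij}\le T$ (active or not), and the $n$ non-negativity constraints $\vu_{ij}\ge 0$. Writing $Z$ for the number of coordinates of $\vu^*$ that equal zero (equivalently, the number of active non-negativity constraints), the number of \emph{distinct} active constraints is at most $k+1+Z$. The extreme-point property therefore forces $k+1+Z\ge n$, i.e. $P:=n-Z\le k+1$, where $P$ is the number of strictly positive coordinates of $\vu^*$.

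Finally I would pair this upper bound with the obvious lower bound. Since $\sum_{j\in N_i}\vu_{ij}^*=1>0$, every group $N_i$ contains at least one positive entry, so $P\ge k$, giving $k\le P\le k+1$. If $P=k$, each group carries exactly one positive entry and every $\vu_i^*$ is one-hot. If $P=k+1$, then since each group already accounts for one positive entry, exactly one group carries two positive entries and the remaining $k-1$ groups stay one-hot, which is precisely the asserted structure.

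The proof is in essence a dimension count, so the only delicate point is the vertex characterization in the presence of mixed equalities and inequalities: one must argue that $n$ linearly independent active constraints force at least $n$ active constraints, hence $n\le k+1+Z$. I expect the main obstacle to be the clean bookkeeping of the active set — in particular charging the knapsack as contributing at most one active constraint whether or not it is tight, which is exactly the ``$+1$'' that permits a single group to fail to be one-hot. The concluding pigeonhole step is routine.
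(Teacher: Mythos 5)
Your proof is correct. One thing to be aware of: this paper never proves the lemma itself---it defers to Appendix F of \cite{nayman2021hardcore}---so the relevant comparison is with that cited argument rather than with anything inline here. Your route is the classical proof of the Sinha--Zoltners property of the MCKP LP relaxation: the optimum of a linear objective over the compact polyhedral feasible set of \eqref{eqn:MCKP} is attained at a vertex; a vertex requires $n$ linearly independent (hence at least $n$) active constraints; the active set consists of the $k$ group equalities, at most one knapsack inequality, and $Z$ non-negativity constraints, so the number $P$ of positive coordinates satisfies $P\le k+1$; combined with $P\ge k$ (each group sums to one), pigeonhole gives the claim. This is the same LP-theoretic counting that underlies the cited proof, which organizes the bookkeeping around basic feasible solutions (at most $k+1$ basic variables, one per constraint row) rather than active-constraint rank; the two are equivalent. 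Two remarks. First, your count in fact establishes the companion Lemma~\ref{lem:two_nonzeros} as well, since a group with three positive entries would force $P\ge k+2$. Second, your restriction to vertex optima is not cosmetic but necessary: when the optimal face has positive dimension, a non-vertex optimal solution need not have the claimed support structure, and the lemma is invoked in Theorem~\ref{thm:one_hot_sol} precisely for the outputs of the LP subproblems of Algorithm~\ref{alg:BCFW_QCQP}, which are vertices---so your reading matches the intended use.
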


\begin{lem}\label{lem:two_nonzeros}
The single non one-hot vector of the solution $\vu^*$ of the \textit{relaxed} MCKP \eqref{eqn:MCKP} has at most two nonzero elements.
\end{lem}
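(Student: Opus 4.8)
The plan is to treat the relaxed MCKP \eqref{eqn:MCKP} as a linear program and bound the number of positive entries of a basic optimal solution, refining the same vertex-counting argument that underlies Lemma~\ref{lem:single_non_one_hot}. First I would observe that the feasible region is a bounded polytope: the per-group constraints $\sum_{j\in N_i}\vu_{ij}=1$ together with $\vu_{ij}\geq 0$ confine each block to a standard simplex, so the linear objective attains its maximum at a vertex, and I take $\vu^*$ to be such a basic optimal solution (the solution to which Lemma~\ref{lem:single_non_one_hot} refers).

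Next I would put the problem in standard form by introducing a slack variable $s\geq 0$ for the budget, writing $\sum_{i,j} t_{ij}\vu_{ij}+s=T$. This yields $k+1$ equality constraints (the $k$ group sums plus the budget) over the $n+1$ variables $\{\vu_{ij}\}\cup\{s\}$. The key fact from linear programming is that at a basic feasible solution the number of strictly positive variables is at most the number of linearly independent equality constraints, hence at most $k+1$.

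Then I would combine this bound with the structure from Lemma~\ref{lem:single_non_one_hot}. Each group sum forces at least one positive entry per block, and by Lemma~\ref{lem:single_non_one_hot} exactly one block $i_0$ fails to be one-hot while the remaining $k-1$ blocks contribute exactly one positive entry each. Writing $m_{i_0}$ for the number of positive entries of $\vu^*_{i_0}$, the number of positive $\vu$-entries equals $(k-1)+m_{i_0}$. If $m_{i_0}\geq 2$ and the budget were slack (so $s>0$), the total count of positive variables would be $(k-1)+m_{i_0}+1\geq k+2$, contradicting the $k+1$ bound; hence the budget is tight and $s=0$. With $s=0$ the same bound reads $(k-1)+m_{i_0}\leq k+1$, i.e.\ $m_{i_0}\leq 2$, which is exactly the claim.

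The main obstacle I anticipate is the bookkeeping in the degenerate cases: I must confirm that invoking a vertex optimum is consistent with the $\vu^*$ of Lemma~\ref{lem:single_non_one_hot}, and that the count $k+1$ is not undermined by a rank deficiency among the constraints (since the $N_i$ are disjoint the group rows are independent, and a dependent budget row would only lower the bound, strengthening the conclusion). As a self-contained fallback avoiding the vertex hypothesis, I would note an exchange argument: if $\vu^*_{i_0}$ had three positive entries $j_1,j_2,j_3$, the two equations $\sum_\ell \delta_\ell=0$ (preserving the group sum) and $\sum_\ell t_{i_0 j_\ell}\delta_\ell=0$ (preserving the tight budget) leave a one-dimensional family of feasible perturbations along which one entry can be driven to zero without decreasing the objective, contradicting minimality of the support; I would carry out the counting argument as the main line and mention this only as an alternative.
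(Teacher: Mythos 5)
Your proof is correct. A peculiarity of the comparison: this paper never proves Lemma~\ref{lem:two_nonzeros} (or Lemma~\ref{lem:single_non_one_hot}) in its own text --- it defers both to Appendix~F of \cite{nayman2021hardcore}, and the argument there is the classical linear-programming one for the relaxed multiple-choice knapsack problem, i.e.\ counting active constraints at an optimal vertex of the polytope of \eqref{eqn:MCKP}; that is essentially your argument, phrased by you through basic feasible solutions of the standard form instead of active nonnegativity constraints. Your bookkeeping is sound: the disjointness of the $N_i$ makes the $k$ group rows independent; the slack is counted correctly in both cases (if $s>0$ the bound forces $m_{i_0}\leq 1$, if $s=0$ it gives $m_{i_0}\leq 2$); a rank-deficient budget row only strengthens the bound; and invoking Lemma~\ref{lem:single_non_one_hot} for the other $k-1$ blocks is legitimate, since the statement of Lemma~\ref{lem:two_nonzeros} presupposes it. The one hypothesis you rightly make explicit --- and which both this paper and the cited one leave implicit --- is that $\vu^*$ must be a \emph{vertex} (basic) optimal solution: when the LP has multiple optima, non-vertex optimal points (convex combinations of optimal vertices) can violate both lemmas, so ``the solution'' has to be read as the output of the LP solves inside Algorithm~\ref{alg:BCFW_QCQP}. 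Your exchange-argument fallback is also viable and avoids the vertex hypothesis (at the price of only producing \emph{some} minimal-support optimum with the claimed sparsity), but as sketched it assumes the budget is tight; the slack-budget case needs the small extra observation that within the two-dimensional space $\{\delta:\sum_\ell\delta_\ell=0\}$ one can always pick a nonzero direction that simultaneously does not decrease the objective and does not increase the load.
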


See the proofs for Lemmas~\ref{lem:single_non_one_hot} and~\ref{lem:single_non_one_hot} in \cite{nayman2021hardcore} (Appendix F).

In order to prove Theorem~\ref{thm:one_hot_sol}, we use Lemmas~\ref{lem:single_non_one_hot} and~\ref{lem:single_non_one_hot} for each coordinate block $\zeta^{(i)}$ for $i\in\{1,\dots,n\}$ separately, based on the observation that at every iteration $k=0,\dots,K$ of algorithm~\ref{alg:BCFW_QCQP_prod}, each sub-problem (lines 3,5) forms a \textit{relaxed} MCKP~\eqref{eqn:MCKP}. Thus replacing
\begin{itemize}
    \item $\vu$ in~\eqref{eqn:MCKP} with $\zeta^{(i)}$.
    
    \item $p$ with $\left(p_f^{(i)}\right)^T    +
\sum_{j=1}^{i-1}
\left(\zeta^{(j)}\right)^T\cdot Q_f^{(i,j)} 
+
\sum_{j=i+1}^n
\left(\zeta^{(j)}\right)^T\cdot \left(Q_f^{(i,j)}\right)^T$

\item The elements of \textbf{$t$} with the elements of $$\left(p_{\mathcal{M}}^{(i)}\right)^T
+
\sum_{j=1}^{i-1}
\left(\zeta^{(j)}\right)^T\cdot Q_{\mathcal{M}}^{(i,j)} 
+
\sum_{j=i+1}^n
\left(\zeta^{(j)}\right)^T\cdot \left(Q_{\mathcal{M}}^{(i,j)}\right)^T$$

\item The simplex constraints with the linear inequality constraints specified by $A^{(i)}, b^{(i)}$.
\end{itemize}
Hence for every iteration theorem~\ref{thm:one_hot_sol} holds and in particular for the last iteration $k=K$ which is the output of solution of algorithm~\ref{alg:BCFW_QCQP_prod}.

By setting $n=2$ with equations~\ref{eqn:2_blocks} for $f(\zeta):=ACC(\zeta)$, algorithm~\ref{alg:BCFW_QCQP_prod} boils down to algorithm~\ref{alg:BCFW_QCQP} and thus theorem~\ref{thm:one_hot_sol} holds for the later as special case of the former.


\section{On the Transitivity of Ranking Correlations}\label{apdx:kt_transitivity}
While the predictors in section~\ref{sec:learnt_predictors} yields high ranking correlation between the predicted accuracy and the accuracy measured for a subnetwork of a given one-shot model, the ultimate ranking correlation is with respect to the same architecture trained as a standalone from scratch. Hence we are interested also in the transitivity of ranking correlation. \cite{langford2001property} provides such transitivity property of the Pearson correlation between random variables $P,O,S$ standing for the predicted, the one-shot and the standalone accuracy respectively:
\begin{align*}
    |Cor(P,S) -
    Cor(P,O)\cdot Cor(O,S)| \leq \sqrt{\left(1-Cor(P,O)^2\right)\cdot\left(1-Cor(O,S)^2\right)}
\end{align*}
This is also true for the Spearman correlation as a Pearson correlation over the corresponding ranking.
Hence, while the accuracy estimator can be efficiently acquired for any given one-shot model, the quality of this one-shot model contributes its part to the overall ranking correlation. In this paper we use the official one-shot model provided by \cite{nayman2021hardcore} with a reported Spearman correlation of $\rho_{P,O}=0.99$ to the standalone networks. Thus together with the Spearman correlation of 
$\rho_{O,S}=0.97$ 
of the proposed accuracy estimator, the overall Spearman ranking correlation satisfies 
$\rho_{P,S}\geq 0.93$.

\section{Averaging Individual Accuracy Contribution for Interpretability}
\label{apdx:avg_contrib}
In this section we provide the technical details about the way we generate the results presented in section~\ref{sec:interpretability}, Figure~\ref{fig:insights} (Middle and Right).

We measure the average contribution of adding a S\&E layer at each stage $s$ by:
$$\frac{1}{D}\sum_{b=1}^D \left(\frac{1}{|\mathcal{C}_{se}|} \sum_{c\in\mathcal{C}_{se}}\Delta^s_{b,c}-\frac{1}{|\bar{\mathcal{C}}_{se}|}\sum_{c\in\bar{\mathcal{C}}_{se}}\Delta^s_{b,c}\right)$$
and each block $b$ by:
$$\frac{1}{S}\sum_{s=1}^S \left(\frac{1}{|\mathcal{C}_{se}|} \sum_{c\in\mathcal{C}_{se}}\Delta^s_{b,c}-\frac{1}{|\bar{\mathcal{C}}_{se}|}\sum_{c\in\bar{\mathcal{C}}_{se}}\Delta^s_{b,c}\right)$$
where $\mathcal{C}_{se}=\{2,4,6,8,10,12\}$ ($|\mathcal{C}_{se}|=6$) are all the configurations that include S\&E layers according to Table~\ref{tab:configurations} and its complementary set $\bar{\mathcal{C}}_{se}=\{1,3,5,7,9,11\}$ ($|\bar{\mathcal{C}}_{se}|=6$).

Similarly the average contribution of increasing the kernel size from $3\times 3$ to $5\times 5$ at each stage $s$ by:
$$\frac{1}{D}\sum_{b=1}^D \left(\frac{1}{|\mathcal{C}_{5\times 5}|} \sum_{c\in\mathcal{C}_{5\times 5}}\Delta^s_{b,c}-\frac{1}{|\mathcal{C}_{3\times 3}|}\sum_{c\in\mathcal{C}_{3\times 3}}\Delta^s_{b,c}\right)$$
and each block $b$ by:
$$\frac{1}{S}\sum_{s=1}^S \left(\frac{1}{|\mathcal{C}_{5\times 5}|} \sum_{c\in\mathcal{C}_{5\times 5}}\Delta^s_{b,c}-\frac{1}{|\mathcal{C}_{3\times 3}|}\sum_{c\in\mathcal{C}_{3\times 3}}\Delta^s_{b,c}\right)$$
where $\mathcal{C}_{3\times 3}=\{1,2,5,6,9,10\}$ ($|\mathcal{C}_{3\times 3}|=6$) and $\mathcal{C}_{5\times 5}=\{3,4,7,8,11,12\}$ ($|\mathcal{C}_{5\times 5}|=6$) are all the configurations with a kernel size of $3\times 3$ and $5\times 5$ respectively according to Table~\ref{tab:configurations}.

The average contribution of increasing the expansion ratio from $2$ to $3$ at each stage $s$ by:
$$\frac{1}{D}\sum_{b=1}^D \left(\frac{1}{|\mathcal{C}_{3}|} \sum_{c\in\mathcal{C}_{3}}\Delta^s_{b,c}-\frac{1}{|\mathcal{C}_{2}|}\sum_{c\in\mathcal{C}_{2}}\Delta^s_{b,c}\right)$$
and each block $b$ by:
$$\frac{1}{S}\sum_{s=1}^S \left(\frac{1}{|\mathcal{C}_{3}|} \sum_{c\in\mathcal{C}_{3}}\Delta^s_{b,c}-\frac{1}{|\mathcal{C}_{2}|}\sum_{c\in\mathcal{C}_{2}}\Delta^s_{b,c}\right)$$
where $\mathcal{C}_{3}=\{5,6,7,8\}$ ($|\mathcal{C}_{3}|=4$) and $\mathcal{C}_{2}=\{1,2,3,4\}$ ($|\mathcal{C}_{2}|=4$) are all the configurations with an expansion ratio of $3$ and $2$ respectively according to Table~\ref{tab:configurations}.

Finally, the average contribution of increasing the expansion ratio from $3$ to $6$ at each stage $s$ by:
$$\frac{1}{D}\sum_{b=1}^D \left(\frac{1}{|\mathcal{C}_{6}|} \sum_{c\in\mathcal{C}_{6}}\Delta^s_{b,c}-\frac{1}{|\mathcal{C}_{3}|}\sum_{c\in\mathcal{C}_{3}}\Delta^s_{b,c}\right)$$
and each block $b$ by:
$$\frac{1}{S}\sum_{s=1}^S \left(\frac{1}{|\mathcal{C}_{6}|} \sum_{c\in\mathcal{C}_{6}}\Delta^s_{b,c}-\frac{1}{|\mathcal{C}_{3}|}\sum_{c\in\mathcal{C}_{3}}\Delta^s_{b,c}\right)$$
where $\mathcal{C}_{6}=\{9,10,11,12\}$ ($|\mathcal{C}_{6}|=4$) are all the configurations with an expansion ratio of $6$ according to Table~\ref{tab:configurations}.

\section{Extended Figures}
Due to space limit, here we present extended figures to better describe the estimation of individual accuracy contribution terms described in section~\ref{sec:acc_contrib}.

\begin{figure}[htb]
    \centering
     \includegraphics[width=1\textwidth]{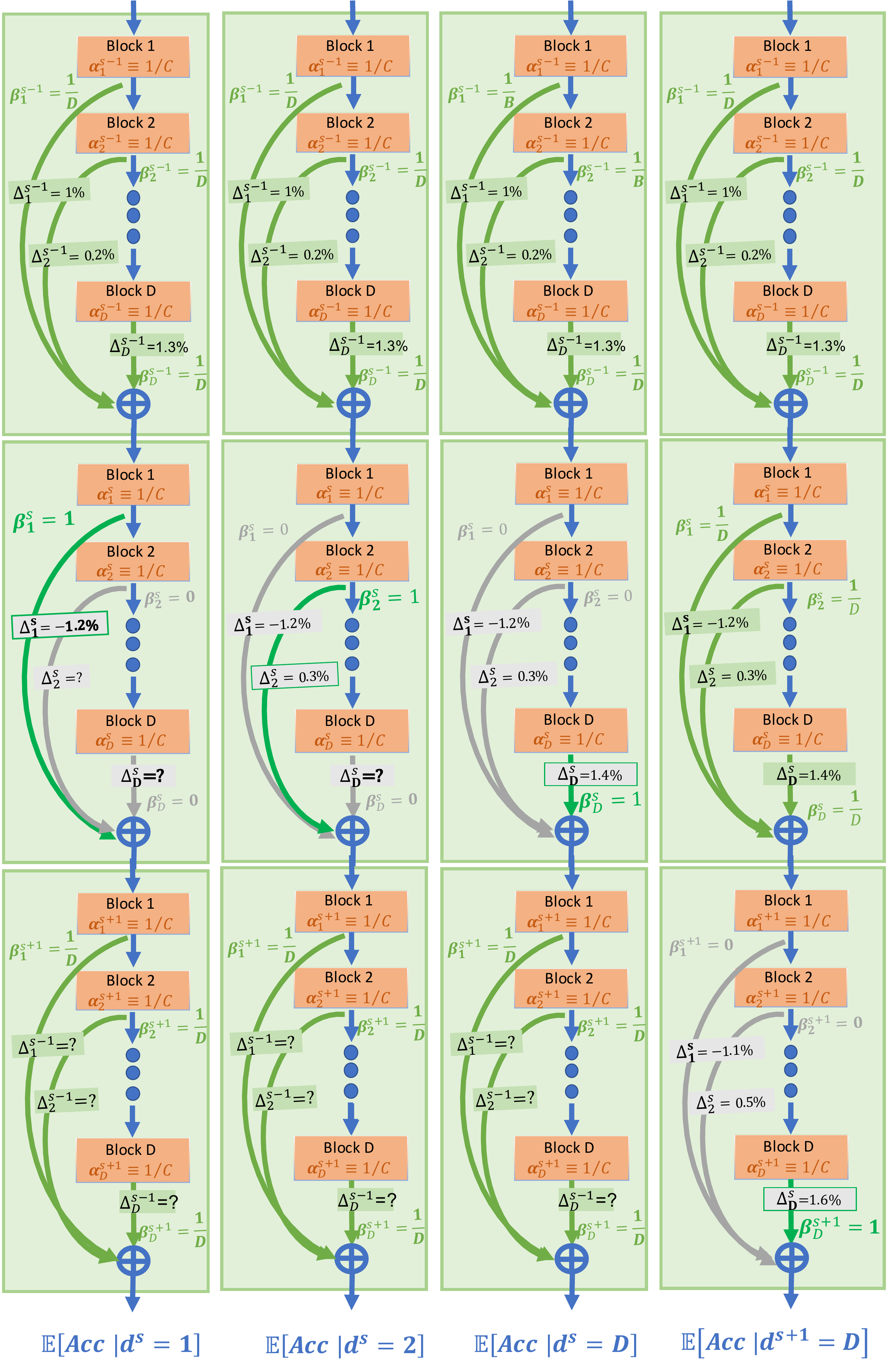}
     \caption{Extension of Figure~\ref{fig:estimate_beta}: Estimating the expected accuracy gap caused by macroscopic design choices of the depth of the stages.}
     \label{fig:estimate_beta_portrait}
\end{figure}

\begin{figure}[htb]
\begin{sideways}
 \centering
 \includegraphics[width=.95\textheight]{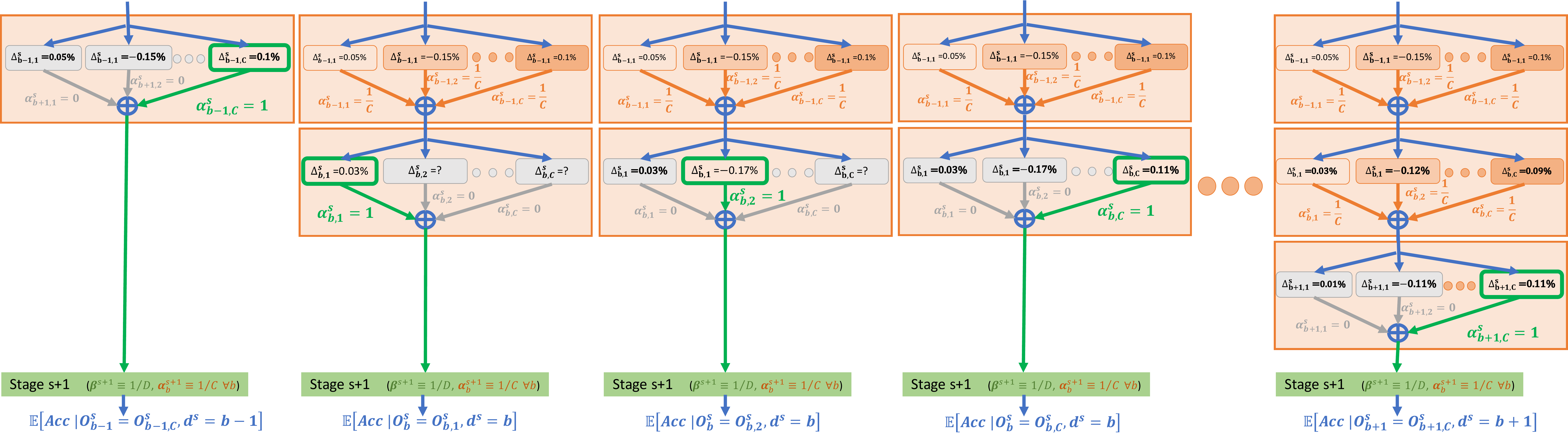}
 \end{sideways}
\caption{Extension of Figure~\ref{fig:estimate_alpha}: Estimating the expected accuracy gap caused by microscopic design choices on the operation applied at every block one at a time.}
 \label{fig:estimate_alpha_full}
\end{figure}
\end{document}